\newcommand{\ditto}{\texttt{Ditto}\xspace}
\newcommand{\var}{\textup{var}}
\newtheorem{definition}{Definition}
\newtheorem{theorem}{Theorem}
\newtheorem*{theorem*}{Theorem}
\newtheorem{remark}{Remark}
\newtheorem{lemma}{Lemma}
\newtheorem*{lemma*}{Lemma}
\newtheorem{corollary}{Corollary}
\newtheorem*{corollary*}{Corollary}
\DeclareMathOperator*{\argmin}{arg\,min}
\definecolor{myred}{HTML}{fff5e6}
\newcommand{\mytitle}{\ditto: Fair and Robust Federated Learning Through Personalization}
\icmltitlerunning{\mytitle}
\begin{document}

\twocolumn[
\icmltitle{\mytitle}


\begin{icmlauthorlist}
\icmlauthor{Tian Li}{cmu}
\icmlauthor{Shengyuan Hu}{cmu}
\icmlauthor{Ahmad Beirami}{fb}
\icmlauthor{Virginia Smith}{cmu}
\end{icmlauthorlist}

\icmlaffiliation{cmu}{Carnegie Mellon University}
\icmlaffiliation{fb}{Facebook AI}

\icmlcorrespondingauthor{Tian Li}{tianli@cmu.edu}


\vskip 0.3in
]



\printAffiliationsAndNotice{}  

\begin{abstract}

Fairness and robustness are two important concerns for federated learning systems. In this work, we identify that \emph{robustness} to data and model poisoning attacks and \emph{fairness}, measured as the uniformity of performance across devices, are competing constraints in statistically heterogeneous networks. 
To address these constraints, we propose employing a simple, general framework for personalized federated learning, \ditto, that can inherently provide fairness and robustness benefits, and develop a scalable solver for it. 
Theoretically, we  analyze the ability of \ditto to achieve
fairness and robustness simultaneously on a class of linear problems.
Empirically, across a suite of federated datasets, we show that \ditto not only achieves  competitive performance relative to recent personalization methods, but also enables more accurate, robust, and fair models relative to state-of-the-art fair or robust baselines.
\end{abstract}

\section{Introduction}\label{sec:introduction}

Federated learning (FL) aims to collaboratively learn from data that has been generated by, and resides on, a number of remote devices or servers~\cite{mcmahan2017communication}. FL stands to produce highly accurate statistical models by aggregating knowledge from disparate data sources. 
However, to deploy FL in practice, it is necessary for the resulting systems to be not only accurate, but to also satisfy a number of pragmatic constraints regarding issues such as fairness, robustness, and privacy. Simultaneously satisfying these varied constraints can be exceptionally difficult~\cite{kairouz2019advances}.

We focus in this work specifically on issues of accuracy, fairness (i.e., limiting performance disparities across the network~\cite{mohri2019agnostic}), and robustness (against training-time data and model poisoning attacks).
Many prior efforts have separately considered fairness or robustness in federated learning. For instance,  fairness strategies include using minimax optimization to focus on the worst-performing devices~\cite{mohri2019agnostic,hu2020fedmgda+} or reweighting the devices to allow for a flexible fairness/accuracy tradeoff~\cite{li2019fair,li2020tilted}. Robust methods commonly use techniques such as gradient clipping~\cite{Sun2019CanYR} or robust aggregation~\cite{Blanchard2017MachineLW, pmlr-v80-yin18a}. 

While these approaches may be effective at either promoting fairness or defending against training-time attacks in isolation, we show that the constraints of fairness and robustness can directly compete with one another when training a single global model, and that simultaneously optimizing for accuracy, fairness, and robustness requires careful consideration. For example, as we empirically demonstrate (Section~\ref{sec:exp}), current fairness approaches can render FL systems highly susceptible to training time attacks from malicious devices. On the other hand, robust methods may filter out rare but informative updates, causing unfairness~\cite{wang2020attack}.

In this work, we investigate a simple, scalable technique to simultaneously improve accuracy, fairness, and robustness in federated learning. While addressing the competing constraints of FL may seem like an insurmountable problem,  we identify that {statistical heterogeneity} (i.e., non-identically distributed data) is a root cause for tension between these constraints, and is key in paving a path forward. In particular, we suggest that methods for personalized FL---which model and adapt to the heterogeneity in federated settings by learning distinct models for each device---may provide \textit{inherent} benefits in terms of fairness and robustness.

To explore this idea, we propose \ditto, a scalable federated multi-task learning framework. 
\ditto can be seen as a lightweight personalization add-on for standard global FL. It is applicable to both convex and non-convex objectives, and inherits similar privacy and efficiency properties as traditional FL. We evaluate \ditto on a suite of federated benchmarks and show that, surprisingly, this simple form of personalization can in fact deliver better accuracy, robustness, and fairness benefits than state-of-the-art, problem-specific objectives that consider these constraints separately. We summarize our contributions below:

\vspace{-.1in}

\begin{itemize}[leftmargin=*]
\setlength\itemsep{0em}
    \item We propose \ditto, a multi-task learning objective for federated learning that provides personalization while retaining similar efficiency and privacy benefits as traditional FL. We provide convergence guarantees for our proposed \ditto solver, which incorporate common practices in cross-device federated learning such as limited device participation and local updating. Despite its simplicity, we show that \ditto can deliver similar or superior accuracy relative to other common methods for personalized federated learning.
    \item Next, we demonstrate that the benefits of \ditto go beyond accuracy---showing that the personalized objective can inherently offer \textit{robustness} superior to that of common robust FL methods across a diverse set of data and model poisoning attacks. On average across all datasets and attacks, \ditto improves test accuracy by $\sim$6\% (absolute) over the strongest robust baseline.  
    \item Similarly, we show that \ditto can naturally increase \textit{fairness}---reducing variance of the test accuracy across devices by $\sim$10\% while  maintaining similar or superior accuracy relative to state-of-the-art methods for fair FL.
    \item Finally, we highlight that \ditto  is particularly useful for practical applications where we {simultaneously} care about multiple constraints (accuracy, fairness, and robustness). We motivate this through analysis on a toy example in Section~\ref{sec:ditto}, as well as experiments across a suite of federated datasets in Section~\ref{sec:exp}. 
\end{itemize}

\section{Background \& Related Work} \label{sec:related_wrok}

Robustness and fairness are two broad areas of research that extend well beyond the application of federated learning. In this section we provide precise definitions of the notions of robustness/fairness considered in this work, and give an overview of prior work in robustness, fairness, and personalization in the context of federated learning.

\paragraph{Robustness in Federated Learning.}
Training-time attacks (including data poisoning and model poisoning) have been extensively studied in prior work~\cite{Biggio2012PoisoningAA,Gu2017BadNetsIV,chen2017targeted,shafahi2018poison,Liu2018TrojaningAO,huang2020metapoison,Xie2020DBA,wang2020attack,dumford2018backdooring,huang2020metapoison}. 
In federated settings, a number of strong attack methods have been explored, including scaling malicious model updates~\cite{bagdasaryan2020backdoor}, 
collaborative attacking~\cite{sun2020data}, defense-aware attacks~\cite{bhagoji2019analyzing,fang2020local}, and adding edge-case adversarial training samples~\cite{wang2020attack}.
Our work aims to investigate common attacks related to Byzantine robustness~\cite{lamport2019byzantine}, as formally described  below.

\vspace{.2em}
\begin{definition}[Robustness] \label{def:robustness}
We are conceptually interested in Byzantine robustness~\cite{lamport2019byzantine}, where the malicious devices can send arbitrary updates to the server to compromise training. To measure robustness, we assess the mean test performance on benign devices, i.e., we consider model $w_1$ to be more robust than $w_2$ to a specific attack if the mean test performance across the benign devices is higher for model $w_1$ than $w_2$ after training with the attack. We examine three widely-used attacks in our threat model:
\vspace{-0.5em}
\end{definition}
\begin{itemize}[leftmargin=*]
     \setlength\itemsep{.2em}
    \item \textit{(A1) Label poisoning}: Corrupted devices do not have access to the training APIs and training samples are poisoned with flipped (if binary) or uniformly random noisy labels~\cite{bhagoji2019analyzing,biggio2011support}. 
    \item \textit{(A2) Random updates}: Malicious devices send random zero-mean Gaussian parameters~\cite{xu2020towards}. \item \textit{(A3) Model replacement}: Malicious devices scale their adversarial updates to make them dominate the aggregate updates~\cite{bagdasaryan2020backdoor}.
\end{itemize}

\vspace{-5pt}
While non-exhaustive, these attacks have been commonly studied in distributed and federated settings, and explore corruption at various points (the underlying data, labels, or model). 
In terms of defenses, robust aggregation is a common strategy to mitigate the effect of malicious updates~\cite{Blanchard2017MachineLW,pillutla2019robust,Sun2019CanYR,li2019rsa,he2020byzantine}. 
Other defenses include 
gradient clipping~\cite{Sun2019CanYR} or normalization~\cite{hu2020fedmgda+}.
While these strategies can improve robustness, 
they may also produce \textit{unfair} models by filtering out informative updates, especially in heterogeneous settings~\cite{wang2020attack}. In our experiments (Section~\ref{sec:exp}), we compare \ditto with several strong defenses (median, gradient clipping~\cite{Sun2019CanYR}, Krum, Multi-Krum~\cite{Blanchard2017MachineLW}, gradient-norm based anomaly detector~\cite{bagdasaryan2020backdoor}, and a new defense proposed herein) and show that \ditto can improve both robustness and fairness compared with these methods.

\paragraph{Fairness in Federated Learning.} Due to the heterogeneity of the data in federated networks, it is possible that the performance of a model will vary significantly across the devices. This concern, also known as \textit{representation disparity}~\cite{hashimoto2018fairness}, is a major challenge in FL, as it can potentially result in uneven outcomes for the devices. Following~\citet{li2019fair}, we provide a more formal definition of this fairness in the context of FL below:

\begin{definition}[Fairness] \label{def:fairness}
We say that a model $w_1$ is more \textit{fair} than $w_2$ if the test performance distribution of $w_1$ across the network is \textup{more uniform} than that of $w_2$, i.e., $\textup{std} \left\{F_k(w_1) \right\}_{k \in [K]} < \textup{std} \left\{F_k(w_2)\right\}_{k \in [K]}$ where $F_k(\cdot)$ denotes the test loss on device $k$$\in$$[K]$, and $\textup{std}\{\cdot\}$ denotes the standard deviation. In the presence of adversaries, we measure fairness only on benign devices. 
\end{definition}
\vspace{-0.025in}
We note that there exists a tension between variance and utility in the definition above; in general, a common goal is to lower the variance while maintaining a reasonable average performance (e.g., average test accuracy). To address representation disparity, it is common to use minimax optimization~\cite{mohri2019agnostic,deng2020distributionally} or flexible sample reweighting approaches~\cite{li2019fair, li2020tilted} to encourage a more uniform quality of service.
In all cases, by up-weighting the importance of rare devices or data, fair methods may not be robust in that they can easily overfit to corrupted devices (see Section~\ref{sec:exp:tension}). The tension between fairness and robustness has been studied in previous works, though for different notions of fairness (equalized odds) or robustness (backdoor attacks)~\cite{wang2020attack}, or in centralized settings~\cite{chang2020adversarial}.  Recently,~\citet{hu2020fedmgda+} proposed FedMGDA+, a method targeting fair and robust FL; however, this work combines classical fairness (minimax optimization) and robustness (gradient normalization) techniques, in contrast to the multi-task framework proposed herein, which we show can \textit{inherently} provide benefits with respect to both constraints simultaneously.

\vspace{-0.05in}
\paragraph{Personalized Federated Learning.}
Given the variability of data in federated networks, personalization is a natural approach used to improve accuracy. Numerous works have proposed techniques for personalized federated learning. \citet{smith2017federated} first explore personalized FL via a primal-dual MTL framework, which applies to convex settings. Personalized FL has also been explored through clustering~\citep[e.g.,][]{ghosh2020efficient,sattler2020clustered,muhammad2020fedfast}, finetuning/transfer learning~\cite{zhao2018federated,yu2020salvaging},  meta-learning~\cite{jiang2019improving,chen2018federated,khodak2019adaptive,fallah2020personalized,li2019differentially,singhal2021federated}, and other forms of MTL, such as hard model parameter sharing~\cite{agarwal2020federated,liang2020think} or the weighted combination method in~\citet{zhang2020personalized}. Our work differs from these approaches by simultaneously learning local and global models via a global-regularized MTL framework, which applies to non-convex ML objectives.  

Similar in spirit to our approach are works that interpolate between global and local models~\cite{mansour2020three,deng2020adaptive}. However, as discussed in~\citet{deng2020adaptive}, these approaches can effectively reduce to local minimizers without additional constraints. 
The most closely related works are those that regularize personalized models towards their average~\cite{hanzely2020federated,hanzely2020lower,dinh2020personalized}, which can be seen as a form of classical mean-regularized MTL~\cite{evgeniou2004regularized}. Our objective is similarly inspired by mean-regularized MTL, although we regularize towards a global model rather than the average personalized model. As we discuss in Section~\ref{sec:ditto}, one advantage of this is that it allows for methods designed for the global federated learning problem (e.g., optimization methods, privacy/security mechanisms) to be easily re-used in our framework, with the benefit of additional personalization. 
We compare against a range of personalized methods empirically in Section~\ref{sec:exp:other_properties}, showing that \ditto achieves similar or superior performance across a number of common FL benchmarks.

Finally, a key contribution of our work is jointly exploring the robustness and fairness benefits of personalized FL. The benefits of personalization for fairness alone have been demonstrated empirically in prior work~\citep{wang2019federated,hao2020waffle}. Connections between personalization and robustness have also been explored in~\citet{yu2020salvaging}, although the authors propose using personalization methods on top of robust mechanisms. Our work differs from these works by arguing that MTL itself offers inherent robustness and fairness benefits, and exploring the challenges that exist when attempting to satisfy both constraints simultaneously. %

\section{\ditto: Global-Regularized Federated Multi-Task Learning} \label{sec:ditto}

In order to explore the possible fairness/robustness benefits of personalized FL, we first propose a simple and scalable framework for federated multi-task learning. As we will see, this lightweight personalization framework is amenable to analyses while also having strong empirical performance. 
 We explain our proposed objective, \ditto, in Section~\ref{sec:obj} and then present a scalable algorithm to solve it in federated settings (Section~\ref{sec:solver}). We provide convergence guarantees for our solver, and explain several practical benefits of our modular approach in terms of privacy and efficiency.
 Finally, in  Section~\ref{sec:theory}, we characterize the benefits of \ditto in terms of fairness and robustness on a class of linear problems. We empirically explore the fairness and robustness properties against state-of-the-art baselines in Section~\ref{sec:exp}.

\subsection{\ditto Objective} \label{sec:obj}

Traditionally, federated learning objectives consider fitting a single global model, $w$, across all local data in the network. The aim is to solve: 
\begin{equation}\label{obj:global}
    \min_w \, G(F_1(w), \dots\, F_K(w)) \, ,  \tag{Global Obj}
\end{equation}
where $F_k(w)$ is the local objective for device $k$, and  $G(\cdot)$ is a function that aggregates the local objectives $\{F_k(w)\}_{k \in [K]}$ from each device. For example, in FedAvg~\cite{mcmahan2017communication}, $G(\cdot)$ is typically set to be a weighted average of local losses, i.e., $\sum_{k=1}^K p_k F_k(w)$, where $p_k$ is a pre-defined non-negative weight such that $\sum_k p_k=1$. 

However, in general, each device may generate data $x_k$ via a distinct distribution $\mathcal{D}_k$, i.e., $F_k(w) := \mathbb{E}_{x_k \sim \mathcal{D}_k}\left[f_k(w; x_k)\right]$. 
To better account for this heterogeneity, it is common to consider techniques that learn personalized, device-specific models, $\{v_k\}_{k \in [K]}$ across the network. 
In this work we explore personalization through a simple framework for federated multi-task learning. We consider two `tasks': the global objective~\eqref{obj:global}, and the local objective $F_k(v_k)$, which aims to learn a model using only the data of device $k$. To relate these tasks, we incorporate a regularization term that encourages the personalized models to be close to the optimal global model.
The resulting bi-level optimization problem for each device $k \in [K]$ is given by:
\vspace{-0.1in}
\begin{equation*}\label{obj:multitask}
\begin{aligned} 
 \min_{v_k }\quad & h_k(v_k; w^*) := F_k(v_k) + \frac{\lambda}{2}\left\|v_k-w^*\right\|^2  
 \\
  \text{s.t.} \quad & w^* \in \argmin_w G(F_1(w), \dots\, F_K(w)))\nonumber \,. 
\end{aligned}
\tag{\ditto}
\end{equation*}

Here the hyperparameter $\lambda$ controls the interpolation between local and global models. When $\lambda$ is set to 0, \ref{obj:multitask}
 is reduced to training local models; as $\lambda$ grows large, it recovers  global model objective (\ref{obj:global}) ($\lambda \to + \infty$). 
 
 \paragraph{Intuition for Fairness/Robustness Benefits.} In addition to improving accuracy via personalization, we argue that \ditto can offer  fairness and robustness benefits. To reason about this, consider a simple case where data are \textit{homogeneous} across devices. Without adversaries, learning a single global model is optimal for generalization. However, in the presence of adversaries, learning globally might introduce  corruption, while learning local models may not generalize well due to limited sample size. \ditto with an appropriate value of $\lambda$ offers a tradeoff between these two extremes: the smaller $\lambda$, the more the personalized models $v_k$ can deviate from the (corrupted) global model $w$, potentially providing robustness at the expense of generalization. 
 In the heterogeneous case (which can lead to issues of unfairness as described in Section~\ref{sec:related_wrok}), a finite $\lambda$ exists to offer robustness and fairness jointly. We explore these ideas more rigorously in Section~\ref{sec:theory} by analyzing the tradeoffs between accuracy, fairness, and robustness in terms of $\lambda$ for a class of linear regression problems, and demonstrate fairness/robustness benefits of \ditto empirically in Section~\ref{sec:exp}.

\paragraph{Other Personalization Schemes.}
As discussed in Section~\ref{sec:related_wrok}, personalization is a widely-studied topic in FL. Our intuition in \ditto is that personalization, by reducing reliance on the global model, can  reduce representation disparity (i.e., unfairness) and potentially improve robustness. It is possible that other personalization techniques beyond \ditto offer similar benefits: We provide some initial, encouraging results on this in Section~\ref{sec:exp:other_properties}.
However, we specifically explore \ditto due to its simple nature, scalability, and strong empirical performance.   \ditto is closely related to works that regularize personalized models towards their average~\cite{hanzely2020federated,hanzely2020lower,dinh2020personalized}, similar to classical mean-regularized MTL~\cite{evgeniou2004regularized}; \ditto differs by regularizing towards a global model rather than the average personalized model. We find that this provides benefits in terms of \textit{analysis} (Section~\ref{sec:theory}), as we can easily reason about \ditto relative to the global ($\lambda \to \infty$) vs. local ($\lambda \to 0$) baselines; \textit{empirically}, in terms of accuracy, fairness, and robustness (Section~\ref{sec:exp}); and \textit{practically}, in terms of the modularity it affords our corresponding solver (Section~\ref{sec:solver}).

\paragraph{Other Regularizers.} To encourage the personalized models $v_k$ to be close to the optimal global model $w^*$, there are choices beyond the $L_2$ norm that could be considered, e.g., using a Bregman divergence-based regularizer or reshaping the $L_2$ ball using the Fisher information matrix. Under the logistic loss (used in our experiments), the Bregman divergence will reduce to KL divergence, and its second-order Taylor expansion will result in an $L_2$ ball reshaped with the Fisher information matrix. Such regularizers are studied in other related contexts like continual learning~\cite{kirkpatrick2017overcoming,schwarz2018progress}, multi-task learning~\cite{yu2020salvaging}, or finetuning for language models~\cite{jiang2019smart}. 
However, in our experiments (Section~\ref{sec:exp:other_properties}), we find that incorporating approximate empirical Fisher information~\cite{yu2020salvaging, kirkpatrick2017overcoming} or symmetrized KL divergence~\cite{jiang2019smart} does not improve the performance over the simple $L_2$ regularized objective, while adding non-trivial computational overhead.

\paragraph{Remark (Relation to FedProx).} We note that the $L_2$ term in \ditto bears resemblance to FedProx, a method which was developed to address heterogeneity in federated optimization~\citep{li2018federated}. However, \ditto fundamentally differs from FedProx in that the goal is to learn \textit{personalized} models $v_k$, while FedProx produces a single global model $w$. For instance, when the regularization hyperparameter is zero, \ditto reduces to learning separate local models, whereas FedProx would reduce to FedAvg. In fact, \ditto is significantly more general than FedProx in that FedProx could be used as the global model solver in \ditto to optimize $G(\cdot)$. As discussed above, other regularizers beyond the $L_2$ norm may also be used in practice.

\subsection{\ditto Solver} \label{sec:solver}

To solve~\ref{obj:multitask}, we propose jointly solving for the global model  $w^*$ and personalized models $\{v_k\}_{k\in [K]}$ in an alternating fashion, as summarized in Algorithm~\ref{alg:1}. 
Optimization proceeds in two phases: (i) updates to the global model, $w^*$, are computed across the network, and then (ii) the personalized models $v_k$ are fit on each local device.  The process of optimizing $w^*$ is exactly the same as optimizing for any objective $G(\cdot)$ in federated settings: If we use iterative solvers, then at each communication round, each selected device can solve the local subproblem of $G(\cdot)$ approximately (Line 5). For personalization, device $k$ solves the global-regularized local objective $\min_{v_k} h_k(v_k; w^t)$ inexactly at each round (Line 6). Due to this alternating scheme, our solver can scale well to large networks, as it does not introduce additional communication or privacy overheads compared with existing solvers for $G(\cdot)$. 
In our experiments (all except Table~\ref{table:ditto+robust_baseline}), we use FedAvg as the objective and solver for $G(\cdot)$, under which we simply let device $k$ run local SGD on $F_k$ (Line 5). We provide a simplified algorithm definition using FedAvg for the $w^*$ update in Algorithm~\ref{alg:1_fedavg} in the appendix.  %

\begin{algorithm}
\SetAlgoLined
\DontPrintSemicolon
\SetNoFillComment
\setlength{\abovedisplayskip}{3pt}
\setlength{\belowdisplayskip}{3pt}
\setlength{\abovedisplayshortskip}{3pt}
\setlength{\belowdisplayshortskip}{3pt}
\begin{tikzpicture}[remember picture, overlay]
        \draw[line width=0pt, draw=red!30, rounded corners=2pt, fill=red!30, fill opacity=0.3]
            ([xshift=0pt,yshift=3pt]$(pic cs:a) + (150pt,6pt)$) rectangle ([xshift=-5pt,yshift=0pt]$(pic cs:b)+(110pt,-5pt)$);
\end{tikzpicture}
\KwIn{$K$, $T$, $s$, $\lambda$, $\eta$, $w^0$, $\{v^0_k\}_{k \in [K]}$}
\caption{\ditto for Personalized FL}
\label{alg:1}
    \For{$t=0, \cdots, T-1$}{
        Server randomly selects a subset of devices $S_t$, and sends $w^t$ to them\; 
        \For {device $k \in S_t$ in parallel}{
        Solve the local sub-problem of $G(\cdot)$ inexactly starting from $w^t$ to obtain $w_k^t$:
        \begin{equation*}
            w_k^t \leftarrow \textsc{update\_global}(w^t, \nabla F_k(w^t))
        \end{equation*} 
        \begin{tikzpicture}[remember picture, overlay]
        \draw[line width=0pt, draw=red!30, rounded corners=2pt, fill=red!30, fill opacity=0.3]
            ([xshift=0pt,yshift=3pt]$(pic cs:a) + (169pt,6pt)$) rectangle ([xshift=-5pt,yshift=0pt]$(pic cs:b)+(-5pt,-31pt)$);
        \end{tikzpicture}\texttt{/*~~Solve} $h_k(v_k; w^t)$\texttt{~~*/}\;
        Update $v_k$ for $s$ local iterations: 
        \begin{equation*}
            v_k = v_k - \eta (\nabla F_k(v_k) + \lambda (v_k - w^t))
        \end{equation*}
        Send $\Delta_k^t := w_k^t - w^t$ back\;
        }
        Server aggregates $\{\Delta_k^t\}$:
        \begin{equation*}
            w^{t+1} \leftarrow \textsc{aggregate} \left(w^t, \{\Delta_k^t\}_{k \in \{S_t\}}\right)
        \end{equation*}\;
        \vspace{-0.1in}
    }
    \begin{tikzpicture}[remember picture, overlay]
        \draw[line width=0pt, draw=red!30, rounded corners=2pt, fill=red!30, fill opacity=0.3]
            ([xshift=0pt,yshift=3pt]$(pic cs:a) + (135pt,6pt)$) rectangle ([xshift=-5pt,yshift=0pt]$(pic cs:b)+(37pt,-4pt)$);
    \end{tikzpicture}
    \Return{$\{v_k\}_{k \in [K]}$ (personalized), $w^T$ (global)} \;
\end{algorithm}

We note that another natural choice to solve the \ditto objective is to first obtain $w^*$, and then for each device $k$, perform finetuning on the local objective $\min_{v_k} h_k(v_k;w^*)$. These two approaches will arrive at the same solutions in strongly convex cases. In non-convex settings, we observe that there may be additional benefits of joint optimization: Empirically, we find that the updating scheme tends to guide the optimization trajectory towards a better solution compared with finetuning starting from $w^*$, particularly when $w^*$ is corrupted by adversarial attacks (Section~\ref{sec:exp:other_properties}). Intuitively, under training-time attacks, the global model may start from a random one, get optimized, and gradually become corrupted as training proceeds~\cite{{li2020gradient}}. In these cases, feeding in \textit{early} global information (i.e., before the global model converges to $w^*$) may be helpful under strong  attacks.

We note that \textcolor{black}{\ditto with joint optimization requires the devices to maintain local states (i.e., personalized models) and carry these local states to the next communication round where they are selected. Solving \ditto with finetuning does not need devices to be stateful, while losing the benefits of alternate updating discussed above.}

\paragraph{Modularity of \ditto.} From the \ref{obj:multitask} objective and Alg~\ref{alg:1}, we see that a key advantage of \ditto is its modularity, i.e., that we can readily use prior art developed for the \ref{obj:global} along with the personalization add-on of $h_k(v_k;w^*)$, as highlighted in red. This has several benefits:

\begin{itemize}[leftmargin=*]
\setlength{\itemsep}{-1pt}
    \item \textit{Optimization:} 
It is possible to plug in other methods beyond FedAvg~\citep[e.g.,][]{li2018convergence,karimireddy2020scaffold,reddi2020adaptive} in Algorithm~\ref{alg:1} to update the global model, and inherit the convergence benefits, if any (we make this more precise in Theorem~\ref{thm:convg}). 
\item \textit{Privacy:}  \ditto communicates the same information over the network as typical FL solvers for the global objective, thus preserving whatever privacy or communication benefits exist for the global objective and its respective solver. \textcolor{black}{This is different from most other personalization methods where global model updates depend on local parameters, which may raise privacy concerns~\citep{london2020pac}.}
\item \textit{Robustness:} Beyond the inherent robustness benefits of personalization, robust global methods can be used with \ditto to further improve performance (see Section~\ref{sec:exp:other_properties}). 
\end{itemize}
\vspace{-.1in}

In particular, while not the main focus of our work, we note that \ditto may offer a better \textit{privacy-utility} tradeoff than training a global model. For instance, when training \ditto, if we fix the number of communication rounds and add the same amount of noise per round to satisfy differential privacy, \ditto consumes exactly the same privacy budget as normal global training, while yielding higher accuracy via personalization (Section~\ref{sec:exp}). Similar benefits have been studied, e.g., via finetuning strategies~\cite{yu2020salvaging}.

\paragraph{Convergence of Algorithm~\ref{alg:1}.} 
Note that optimizing the global model $w^t$ does not depend on any personalized models $\{v_k\}_{k \in [K]}$. Therefore, $w$ enjoys the same global convergence rates with the solver we use for $G$. Under this observation, 
we present the local convergence of Algorithm~\ref{alg:1}.

\begin{theorem}[Local Convergence of Alg.~\ref{alg:1}; formal statement and proof in Theorem~\ref{thm:w_t_v_t}]
\label{thm:convg}
Assume for $k \in [K]$, $F_k$ is strongly convex and smooth, {under common assumptions}, if $w^t$  converges to $w^*$ with rate $g(t)$, then there exists a constant $C$$<$$\infty$ such that for  $\lambda \in \mathbb{R},$ and for $k\in [K]$, $v_k^t$ converges to $v_k^* := \argmin_{v_k} h_k(v_k; w^*)$ with rate $C g(t)$. 
\vspace{-.05in}
\end{theorem}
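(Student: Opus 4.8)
The plan is to leverage the observation already made in the excerpt that the global iterates $w^t$ evolve independently of the personalized models, so $w^t \to w^*$ with rate $g(t)$ may be taken as given; the entire burden is then to control the personalized iterates $v_k^t$, which chase a \emph{moving target}. For a fixed reference model $w$, write $v_k^*(w) := \argmin_{v_k} h_k(v_k; w)$, so that the theorem's target is $v_k^* = v_k^*(w^*)$. Since $F_k$ is $\mu$-strongly convex and $L$-smooth, $h_k(\cdot; w)$ is $(\mu+\lambda)$-strongly convex and $(L+\lambda)$-smooth whenever $\mu+\lambda>0$, so $v_k^*(w)$ is uniquely defined; this is the regime the ``common assumptions'' must enforce for $\lambda \in \mathbb{R}$.

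Two ingredients drive the argument. First I would establish \emph{stability} of the minimizer map $w \mapsto v_k^*(w)$: subtracting the stationarity conditions $\nabla F_k(v_k^*(w_i)) + \lambda(v_k^*(w_i)-w_i)=0$ for $i=1,2$, pairing the difference with $v_k^*(w_1)-v_k^*(w_2)$, and invoking strong convexity yields the Lipschitz bound
\[
\|v_k^*(w_1) - v_k^*(w_2)\| \le \tfrac{|\lambda|}{\mu+\lambda}\,\|w_1 - w_2\|.
\]
Second, the $s$ inner gradient steps on $h_k(\cdot; w^t)$ at round $t$ contract toward the current target: for a suitable step size $\eta$ there is $\rho=\rho(\eta,\mu,\lambda,L)<1$ with $\|v_k^t - v_k^*(w^t)\| \le \rho^s \|v_k^{t-1} - v_k^*(w^t)\|$, the standard linear rate of gradient descent on a strongly convex smooth objective.

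Combining these, let $d_t := \|v_k^t - v_k^*(w^t)\|$ track the distance to the current target. A triangle inequality gives $\|v_k^{t-1} - v_k^*(w^t)\| \le d_{t-1} + \|v_k^*(w^{t-1}) - v_k^*(w^t)\|$, and the stability bound controls the last term by $\tfrac{|\lambda|}{\mu+\lambda}\|w^{t-1}-w^t\| \le \tfrac{|\lambda|}{\mu+\lambda}(g(t-1)+g(t))$. This produces the perturbed linear recursion $d_t \le \rho^s d_{t-1} + b_t$ with $b_t = O(g(t))$, which unrolls to $d_t \le \rho^{st}d_0 + \sum_{j=1}^{t}\rho^{s(t-j)}b_j$. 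A final triangle step plus stability then gives $\|v_k^t - v_k^*\| \le d_t + \|v_k^*(w^t) - v_k^*(w^*)\| \le d_t + \tfrac{|\lambda|}{\mu+\lambda}g(t)$, so it suffices to show $d_t = O(g(t))$, i.e.\ convergence at rate $C\,g(t)$ for a finite $C$.

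The main obstacle is the moving target. The crux is showing that the geometric convolution $\sum_{j}\rho^{s(t-j)}g(j)$ loses no more than a constant relative to $g(t)$; this is clean for the sublinear rates typical in FL (e.g.\ $g(t)=O(1/t)$, where the sum is dominated by $j$ near $t$), but if $g$ decays geometrically at exactly rate $\rho^s$ an extra polynomial factor appears, which the constant $C$ (or the precise hypotheses on $g$) must absorb. A second complication, signalled by the ``common practices'' in the statement, is limited device participation: on rounds with $k \notin S_t$ the iterate $v_k^t = v_k^{t-1}$ is frozen while the target $v_k^*(w^t)$ keeps drifting with $w^t$, so the recursion must be amended to accumulate this drift over the idle rounds between consecutive selections, using the summability implied by $w^t \to w^*$, and contraction re-established upon reselection. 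Bounding the inflation of $d_t$ during idle rounds is where the bulk of the technical care lies.
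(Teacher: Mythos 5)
Your plan is sound and would yield the theorem, but it is a genuinely different argument from the paper's, so it is worth contrasting the two. The paper (Lemma~\ref{lemma:v_w_stochastic} and Theorem~\ref{thm:w_t_v_t}) never introduces the moving target $v_k^*(w^t)$: it measures $\mathbb{E}[\|v_k^t - v_k^*\|^2]$ directly against the \emph{fixed} target $v_k^* = \argmin_v h_k(v; w^*)$, and the mismatch between $w^t$ and $w^*$ enters as additive perturbation terms proportional to $\lambda\sqrt{\mathbb{E}[\|w^t-w^*\|^2]}$ in a one-step stochastic progress bound; partial participation is absorbed in expectation through a Bernoulli selection indicator with mean $p_k$, the local step size decays as $\eta \propto g(t)$, and the rate $C\,g(t)$ is then transferred by induction under the explicit hypothesis $g(t+1)/g(t) \ge 1 - g(t)/A$. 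Your decomposition---Lipschitz stability of the minimizer map $w \mapsto v_k^*(w)$ with constant $\lambda/(\mu+\lambda)$, linear contraction of the inner gradient steps, and an unrolled perturbed recursion controlled by a geometric convolution---buys a cleaner deterministic picture and, notably, actually uses the smoothness hypothesis (the theorem assumes it, but the paper's proof replaces it with bounded stochastic gradients $G_1$ and a bounded-heterogeneity constant $M$, neither of which you need). The caveat you flag about $g$ decaying geometrically at exactly the contraction rate is the same phenomenon the paper's hypothesis on $g$ excludes, so the two arguments impose essentially equivalent restrictions, and both are satisfied by the $O(1/t)$ FedAvg rate used in Corollary~\ref{coro:convergence_stochastic}. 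Two points need repair if you flesh this out. First, your idle-round accounting appeals to ``summability implied by $w^t \to w^*$,'' but convergence does not make the increments $\|w^{t+1}-w^t\|$ summable (for $g(t)=O(1/t)$ they decay only like $1/\sqrt{t}$); the fix is the paper's in-expectation treatment, where the waiting time between selections of device $k$ is geometric with mean $1/p_k$, so the drift accumulated while idle remains of order $\sqrt{g(t)}/p_k$ and can be folded into the perturbation term. Second, the paper's $g(t)$ bounds \emph{squared} distances, so your perturbations $b_t$ should scale as $\sqrt{g(t)}$ and your goal becomes $d_t = O(\sqrt{g(t)})$; this is bookkeeping, but it changes the exponents in the convolution step.
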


Using Theorem~\ref{thm:convg}, we can directly plug in previous convergence analyses for any $G(\cdot)$. 
For instance, when the global objective and its solver are those of FedAvg, we can obtain an $O(1/t)$ convergence rate for \ditto under suitable conditions (Corollary~\ref{coro:convergence_stochastic}). We provide a full theorem statement and proof of convergence in Appendix~\ref{app:convg}.

\subsection{Analyzing the Fairness/Robustness Benefits of \ref{obj:multitask} in Simplified Settings} \label{sec:theory}

In this section, we more rigorously explore the fairness/robustness benefits of \ditto on a class of linear problems. Throughout our analysis, we assume $G(\cdot)$ is the standard objective in FedAvg~\cite{mcmahan2017communication}. 

\paragraph{Point Estimation.} To provide intuition, we first examine a toy one-dimensional point estimation problem. Denote the underlying models for the devices as $\{v_k\}_{k \in [K]}$, $v_k \in \mathbb{R}$, and let the points on device $k$, $\{x_{k, 1}, \dots, x_{k, n}\}$\footnote{For ease of notation, we assume each device has the same number of training samples. It is straightforward to extend the current analysis to allow for varying number of samples per device.}, be observations of $v_k$ with random perturbation, i.e., $x_{k,i}  = v_k + z_{k, i}$, where $z_{k, i} \sim \mathcal{N}(0, \sigma^2)$ and are IID. Assume $v_k \sim \mathcal{N} (\theta, \tau^2)$, where $\theta$ is drawn from the uniform uninformative 
prior on $\mathbb{R},$ and $\tau$ is a known constant. 
Here, $\tau$ controls the degree of relatedness of the data on different devices: $\tau$=$0$ captures the case where the data on all devices are identically distributed while $\tau \to \infty$ results in the scenario where the data on different devices are completely unrelated. 
The local objective is $\min_{v_k} F_k(v_k) =  \frac{1}{2} (v_k - \frac{1}{n_k} \sum_{i=1}^{n_k} x_{k, i})^2$. In the presence of adversaries, we look at a specific type of label poisoning attack. Let $K_a$ denote the number of malicious devices, and the `capability' of an adversary is modeled by $\tau_a$, i.e., the underlying model of an adversary follows $\mathcal{N} (\theta, \tau_a^2)$ where $\tau_a^2 > \tau^2$.

We first derive the Bayes estimator (which will be the most accurate and robust) for the real model distribution by observing a finite number of training points. Then, we show that by solving \ditto, we are able to recover the Bayes estimator with a proper $\lambda^*$ (with the knowledge of $\tau$). In addition, \textit{the same} $\lambda^*$ results in the most fair solution among the set of solutions of \ditto parameterized by $\lambda$. 
This shows that \ditto with a proper choice of $\lambda$ is Bayes optimal for this particular problem instance.
\textcolor{black}{In general, in Theorem~\ref{thm:lambda_star_pe_adv_accuracy} (appendix), we prove that
\begin{align*}
    \lambda^* = \frac{\sigma^2}{n} \frac{K}{K\tau^2+\frac{K_a}{K-1} (\tau_a^2-\tau^2)}.
\end{align*}
We see that $\lambda^*$ decreases when (i) there are more local samples $n$,  (ii) the devices are less related (larger $\tau$), or (iii) the attacks are stronger (larger number of attackers, $K_a,$ and more powerful adversaries, $\tau_a$).}
Related theorems (Theorem~\ref{thm:lambda_star_pe_clean_accuracy}-\ref{thm:lambda_star_pe_adv_fairness}) are presented in Appendix~\ref{app:theory:pe}.

\begin{figure}[h!]
    \centering
    \includegraphics[width=0.5\textwidth]{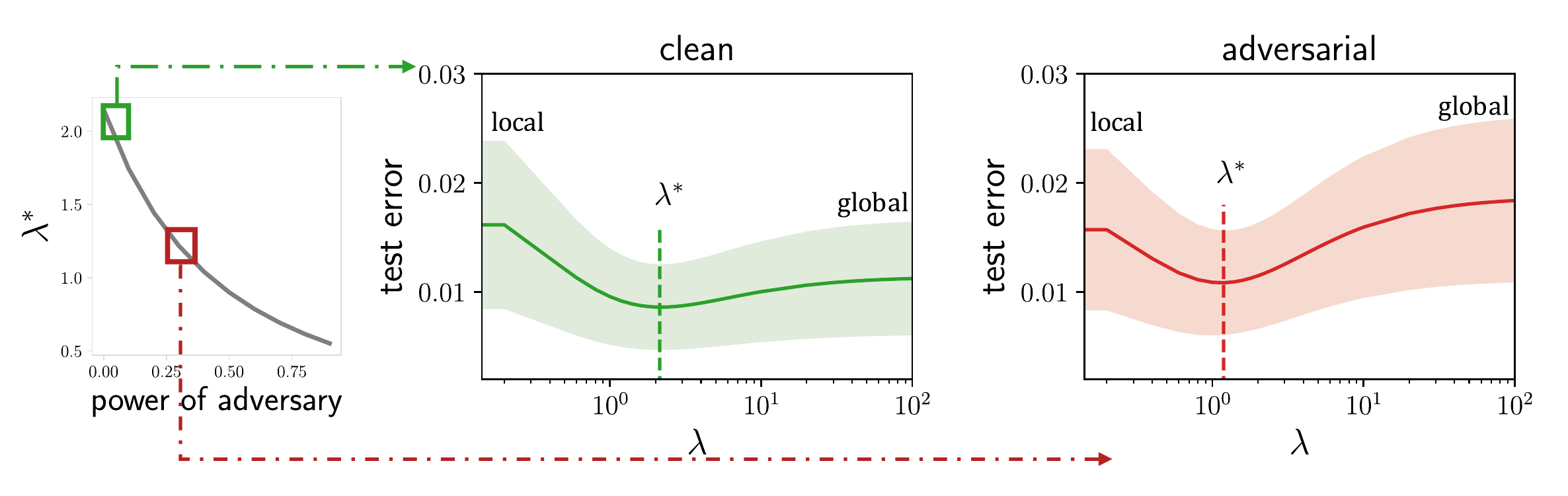}
    \caption{Empirically, the $\lambda^*$ given by Theorem~\ref{thm:lambda_star_pe_clean_accuracy}-\ref{thm:lambda_star_pe_adv_fairness} results in the most accurate, fair, and robust solution within \ditto's solution space. $\lambda^*$ is also optimal in terms of accuracy and robustness among any possible federated estimation algorithms.}
    \label{fig:pe}
\end{figure}

\begin{figure}[h!]
    \centering
    \includegraphics[width=0.5\textwidth]{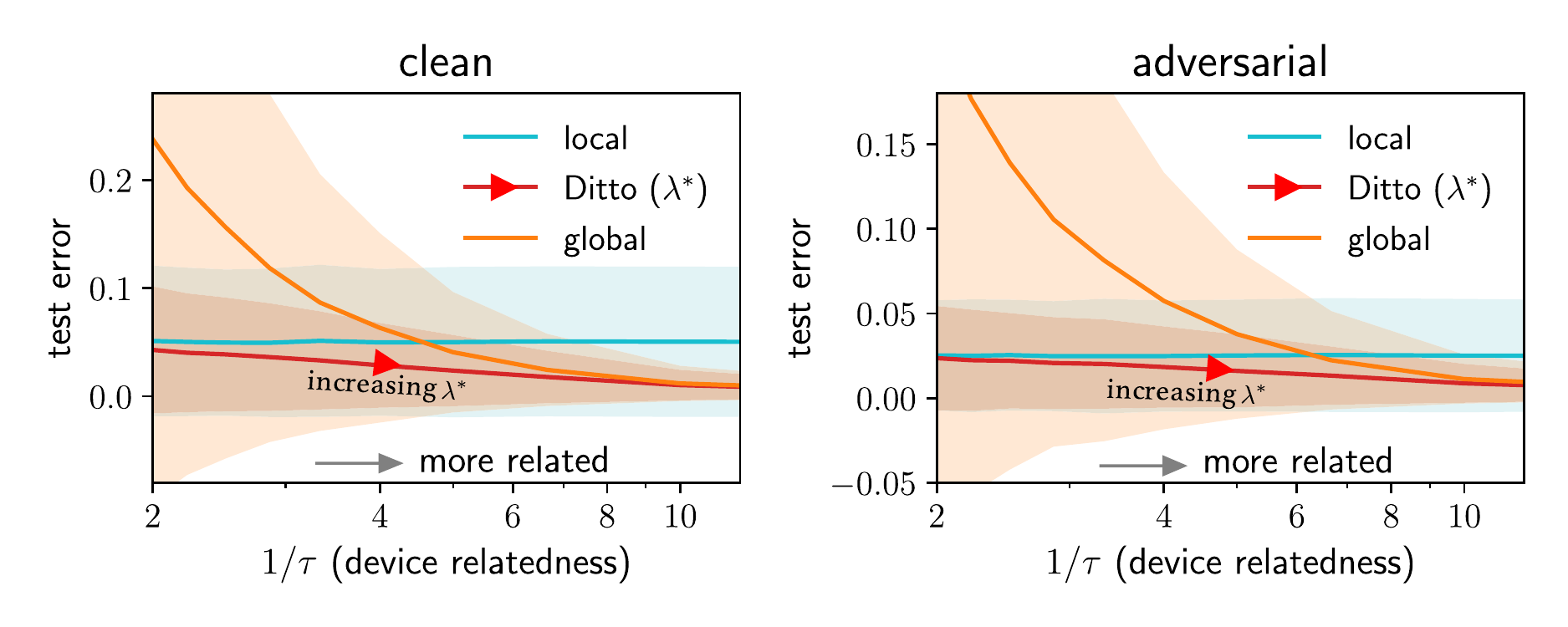}
    \caption{Impact of data relatedness across all devices. When $1/\tau$ is small (less related), local outperforms global; when $1/\tau$ is large (more related), global is better than local. \ditto ($\lambda^*$) achieves the lowest test error and variance (measured across benign devices). }
    \label{fig:pe_2}
\end{figure}

In Figure~\ref{fig:pe}, we plot average test error, fairness (standard deviation shown as error bars), and robustness (test error in the adversarial case) across a set of $\lambda$'s for both clean and adversarial cases. We see that in the solution space of \ditto, there exists a specific $\lambda$ which minimizes the average test error and standard deviation across all devices \textit{at the same time}, which is equal to the optimal $\lambda^*$ given by our theory. Figure~\ref{fig:pe_2} shows (i) \ditto with $\lambda^*$ is superior than learning local or global models, and (ii) $\lambda^*$ should increase as the relatedness between devices ($1/\tau$) increases.

\paragraph{Linear Regression.} 
All results discussed above can be generalized to establish the optimality of \ditto on a class of linear regression problems (with additional assumptions on feature covariance). We defer readers to Appendix~\ref{app:theory:lr} for full statements and proofs. While our analyses here are limited to a simplified set of attacks and problem settings, we build on this intuition in  Section~\ref{sec:exp}---empirically demonstrating the accuracy, robustness, and fairness benefits of \ditto using both convex and non-convex models, across a range of federated learning benchmarks, and under a diverse set of  attacks.

\section{Experiments} \label{sec:exp}

In this section, we first demonstrate that \ditto can inherently offer similar or superior robustness relative to strong robust baselines (Section~\ref{sec:exp:robust}). We then show it results more fair performance than recent fair methods (Section~\ref{sec:exp:fair}). \ditto is particularly well-suited for mitigating the tension between these constraints and achieving both fairness and robustness simultaneously (Section~\ref{sec:exp:tension}). We explore additional beneficial properties of \ditto in Section~\ref{sec:exp:other_properties}.

\paragraph{Setup.} For all experiments, we measure robustness via test accuracy, and fairness via test accuracy variance (or standard deviation), both across benign devices (see Def.~\ref{def:robustness},~\ref{def:fairness}).
We use datasets from common FL benchmarks~\cite{caldas2018leaf,smith2017federated,TFF}, which cover both vision and language tasks, and convex and non-convex models. Detailed datasets and models are provided in Table~\ref{table: data} in Appendix~\ref{app:exp:detail}. We split local data on each device into train/test/validation sets randomly, and measure performance on the test data. For each device, we select $\lambda$ locally based on its local validation data.
\textcolor{black}{We further assume the devices can make a binary decision on whether the attack is strong or not. For devices with very few validation samples (less than 4), we use a fixed small $\lambda$ ($\lambda$=0.1) for strong attacks, and use a fixed relatively large $\lambda$ ($\lambda$=1) for all other attacks. For devices with more than 5 validation data points, we let each select $\lambda$ from $\{0.05, 0.1, 0.2\}$ for strong attacks, and select $\lambda$ from $\{0.1, 1, 2\}$ for all other attacks. See Appendix~\ref{app:exp:full:lambda} for details. More advanced tuning methods are left for future work.}
Our code, data, and experiments are publicly available at \href{https://github.com/litian96/ditto}{\texttt{github.com/litian96/ditto}}.

\begin{figure}[b!]
    \centering
    \begin{subfigure}{0.225\textwidth}
    \includegraphics[width=\textwidth,trim=10 10 10 10]{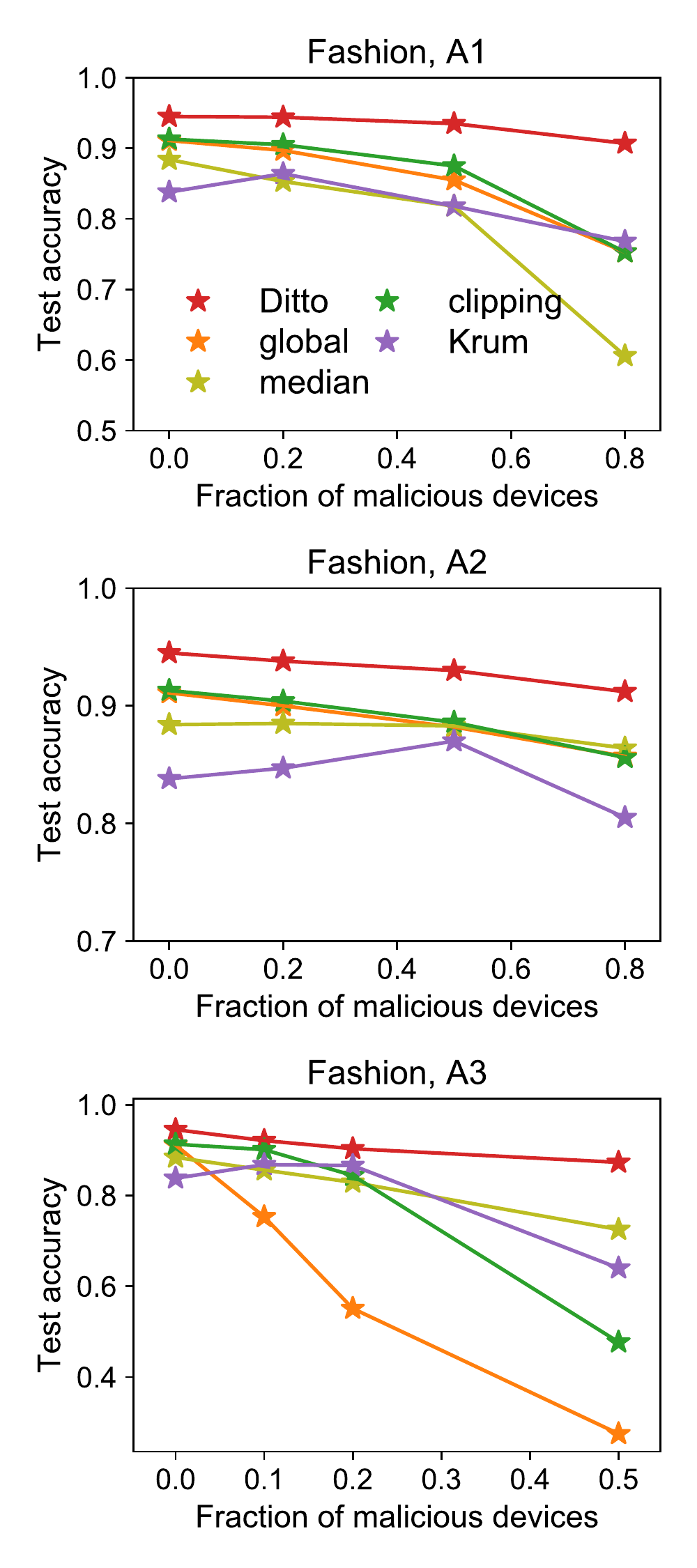}
    \end{subfigure}
    \hfill
    \begin{subfigure}{0.225\textwidth}
    \includegraphics[width=\textwidth, trim=10 10 10 10, clip]{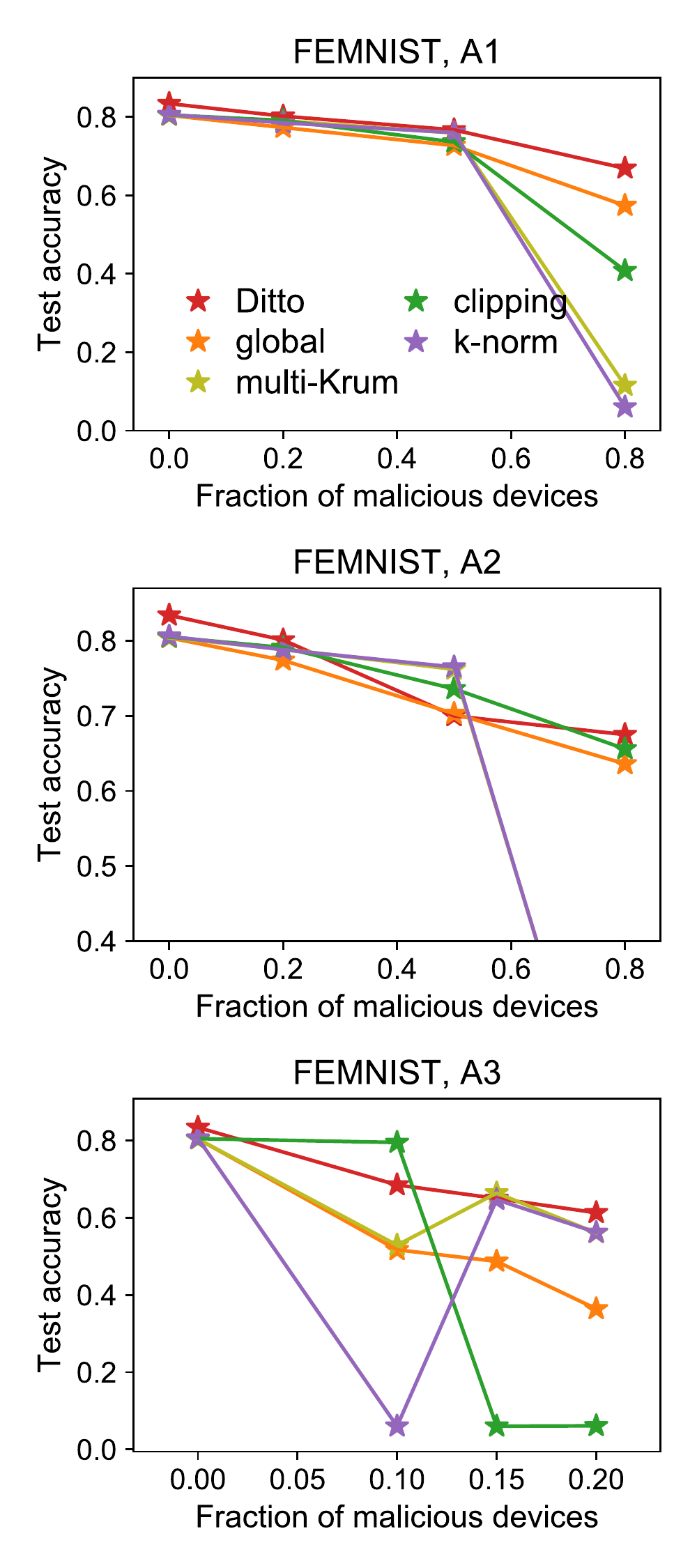}
    \end{subfigure}
    \caption{Robustness, i.e., average test accuracy on benign devices (Definition~\ref{def:robustness}), on Fashion MNIST and FEMNIST. We compare \ditto with learning a global model and three strong defense mechanisms (see  Appendix~\ref{app:exp:full} for results on all defense baselines), and find that \ditto is the most robust under almost all attacks.}
    \label{fig:mtl_robustness}
\end{figure}

\subsection{Robustness of \ditto} \label{sec:exp:robust}

Following our threat model described in Definition~\ref{def:robustness}, we apply three attacks to  corrupt a random subset of devices. We pick corruption levels until a point where there is a significant performance drop when training a global model. We compare robustness (Def.~\ref{def:robustness}) of \ditto  with various defense baselines,  presenting the results of three strongest defenses in Figure~\ref{fig:mtl_robustness}. Execution details and full results are reported in Appendix~\ref{app:exp:full:big_table}. 
As shown in Figure~\ref{fig:mtl_robustness}, \ditto achieves the highest accuracy under most attacks, particularly those with a large fraction of malicious devices. 
On average across all datasets and attacks, \ditto results in $\sim$6\% absolute accuracy improvement compared with the strongest robust baseline (Appendix~\ref{app:exp:full:big_table}).
In scenarios where a robust baseline outperforms \ditto, we have also found that replacing the global objective and its solver (FedAvg) with a robust version (e.g., using robust aggregators)  can further improve  \ditto, yielding superior performance (Section~\ref{sec:exp:other_properties}).

\setlength{\tabcolsep}{3pt}
\begin{table*}[t!]
	\caption{\textbf{Average (standard deviation)} test accuracy to benchmark performance and fairness (Definition~\ref{def:fairness}) on Fashion MNIST and FEMNIST. \ditto is either (i) more fair compared with the baselines of training a global model, or (ii) more accurate than the fair baseline under a set of attacks. We {bold} the method with {highest average minus standard deviation} across all methods.}
 	\vspace{1em}
	\centering
	\label{table:mtl_fairness}
	\scalebox{0.94}{
	\begin{tabular}{l cccccccccc} 
	   \toprule[\heavyrulewidth]
	     \textbf{Fashion} & & \multicolumn{3}{c}{{\bf A1} (ratio of adversaries)}  &  \multicolumn{3}{c}{{\bf A2} (ratio of adversaries)} & \multicolumn{3}{c}{{\bf A3} (ratio of adversaries)}\\
        \cmidrule(r){3-11}
        Methods  &  clean & 20\% & 50\% & 80\%  & 20\% & 50\% & 80\% & 10\% & 20\%  & 50\%  \\
        \midrule
        global	    &  .911 {\small (.08)}	& .897 {\small (.08)}	& .855 {\small(.10)} & {\small .753 (.13)}	& .900 {\small (.08)}	& .882 {\small (.09)} & .857 {\small (.10)}	& .753 {\small (.10)}	& .551 {\small (.13)} & .275 {\small (.12)}   \\
        local 	    & .876 {\small(.10)}	& .874 {\small (.10)}	& .876 {\small (.11)} & .879 {\small (.10)}	& .874 {\small (.10)}	& .876 {\small (.11)} & .879 {\small (.10)}	& .877 {\small (.10)}	& .874 {\small (.10)} & \textbf{.876 {\small (.11)}}  \\
        fair (TERM, $t$=1)	&  .909 {\small (.07)}	& .751 {\small (.12)}	& .637 {\small (.13)} & .547 {\small (.11)}	& .731 {\small(.13)}	& .637 {\small (.14)} & .635 {\small (.14)}	& .653 {\small (.13)}	& .601 {\small (.12)} & .131 {\small (.16)} \\
        \rowcolor{myred}
        \ditto	&  \textbf{.943 {\small (.06)}} & \textbf{.944 {\small (.07)}}  & \textbf{.937 {\small (.07)}} & \textbf{.907 {\small (.10)}} & \textbf{.938  {\small(.07)}} & \textbf{.930 {\small (.08)}} & \textbf{.913 {\small (.09)}} & \textbf{.921 {\small (.09)}} & \textbf{.902 {\small (.09)}} & {.873 {\small(.11)}} \\
	    \hline
        \hline
        \textbf{FEMNIST} & & \multicolumn{3}{c}{{\bf A1} (ratio of adversaries)}  &  \multicolumn{3}{c}{{\bf A2} (ratio of adversaries)} & \multicolumn{3}{c}{{\bf A3} (ratio of adversaries)}\\
        \cmidrule(r){3-11}
        Methods &  clean & 20\%  & 50\% & 80\% & 20\% & 50\% & 80\% & 10\% & 15\%  & 20\% \\
        \midrule
        global	    & .804 {\small (.11)}	& .773 {\small (.11)} & .727 {\small (.12)}	& .574 {\small (.15)}	& .774 {\small (.11)}	& \textbf{.703 {\small (.14)}} & .636 {\small (.15)}	& .517 {\small (.14)} & .487 {\small (.14)}	& .314 {\small (.13)} \\
        local 	    & .628 {\small (.15)}	& .620 {\small (.14)} & .627 {\small (.14)}	& .607 {\small(.14)}	& .620 {\small(.14)} & .627 {\small(.14)}	& .607 {\small(.14)}	& {.622 {\small(.14)}}	& .621 {\small(.14)} & \textbf{.620 {\small(.14)}} \\
        fair (TERM, $t$=1)	    & .809 {\small(.11)}	& .636 {\small(.15)} & .562 {\small(.13)} 	& .478  {\small(.12)}	& .440 {\small(.15)} & .336 {\small(.12)}	& .363 {\small(.12)}	& .353 {\small(.12)}  & .316 {\small(.12)} & .299 {\small(.11)} \\
        \rowcolor{myred}
        \ditto	& \textbf{.834 {\small(.09)}}	&  \textbf{.802 {\small(.10)}}	& \textbf{.762 {\small(.11)}} & \textbf{.672 {\small(.13)}} & \textbf{.801 {\small(.09)}}	& {.700 {\small(.15)}} & \textbf{.675 {\small(.14)}}	& \textbf{.685 {\small(.15)}} &  \textbf{.650 {\small(.14)}} & {.613 {\small(.13)}} \\
    \bottomrule[\heavyrulewidth]
	\end{tabular}}
\vspace{-0.1in}
\end{table*}

\subsection{Fairness of \ditto} \label{sec:exp:fair}
To explore the fairness of \ditto, we compare against TERM~\cite{li2020tilted} as a baseline. It is an improved version of the $q$-FFL~\cite{li2019fair} objective, which has been recently proposed for fair federated learning. TERM also recovers AFL~\cite{mohri2019agnostic}, another fair FL objective, as a special case. TERM uses a parameter $t$ to offer flexible tradeoffs between fairness and accuracy. 
In Table~\ref{table:mtl_fairness}, we compare the proposed objective with global, local, and fair methods (TERM) in terms of test accuracies and standard deviation. When the corruption level is high, `global' or `fair' will even fail to converge. \ditto results in more accurate and fair solutions both with and without attacks. On average across all datasets, \ditto reduces variance across devices by $\sim$10\% while improving absolute test accuracy by $5\%$ compared with TERM (on clean data).

\subsection{Addressing Competing Constraints} \label{sec:exp:tension}
In this section, we examine the competing constraints between robustness and fairness. 
When training a single global model, fair methods 
aim to encourage a more uniform performance distribution, 
but may be highly susceptible to training-time attacks in statistically heterogeneous environments. We investigate the test accuracy on benign devices when learning global, local, and fair models. In the TERM objective, we set $t=1, 2, 5$ to achieve different levels of fairness (the higher, the fairer). We perform the data poisoning attack (A1 in Def.~\ref{def:robustness}). The results are plotted in Figure~\ref{fig:fair}. 
As the corruption level increases, we see that fitting a global model becomes less robust. Using fair methods will be more susceptible to attacks. When $t$ gets larger, the test accuracy gets lower, an indication that the fair method is overfitting to the corrupted devices relative to the global baseline. 

\begin{figure}[h]
    \centering
    \begin{subfigure}{0.235\textwidth}
    \includegraphics[width=\textwidth]{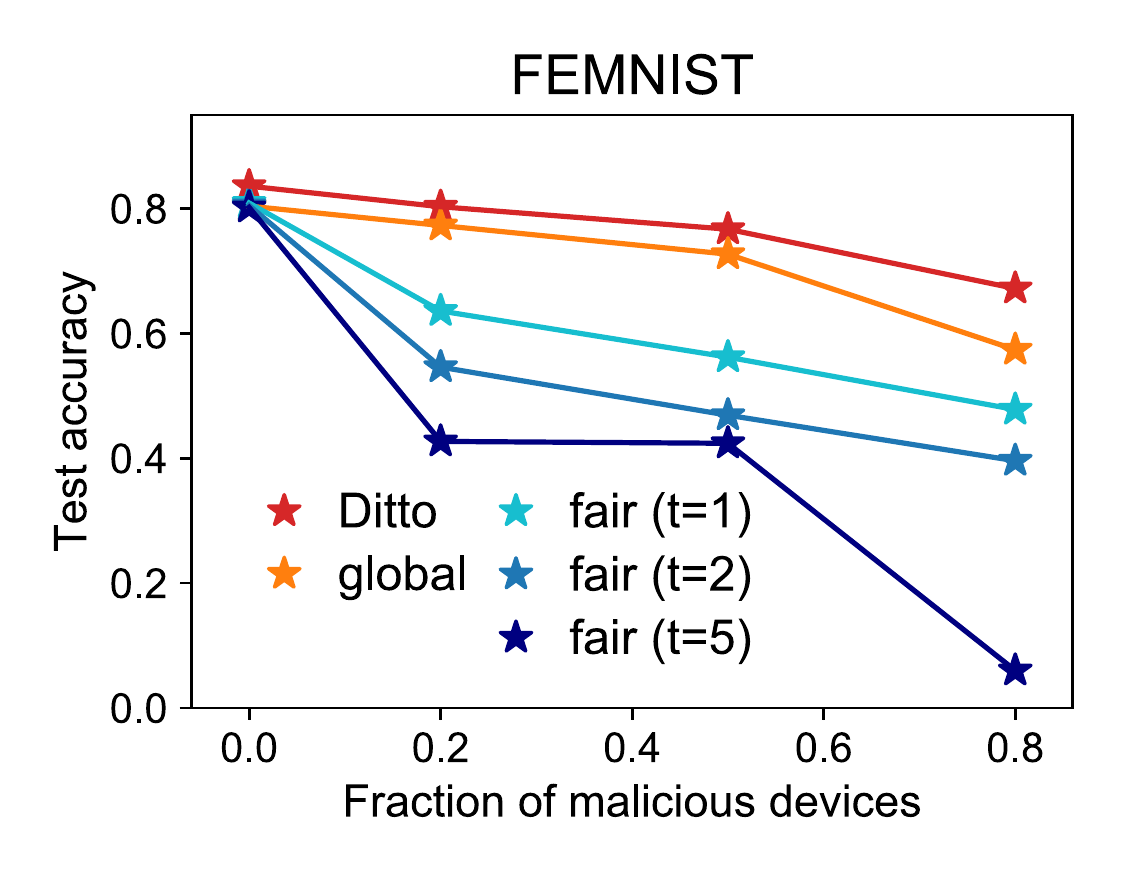}
    \end{subfigure}
    \begin{subfigure}{0.235\textwidth}
    \includegraphics[width=\textwidth]{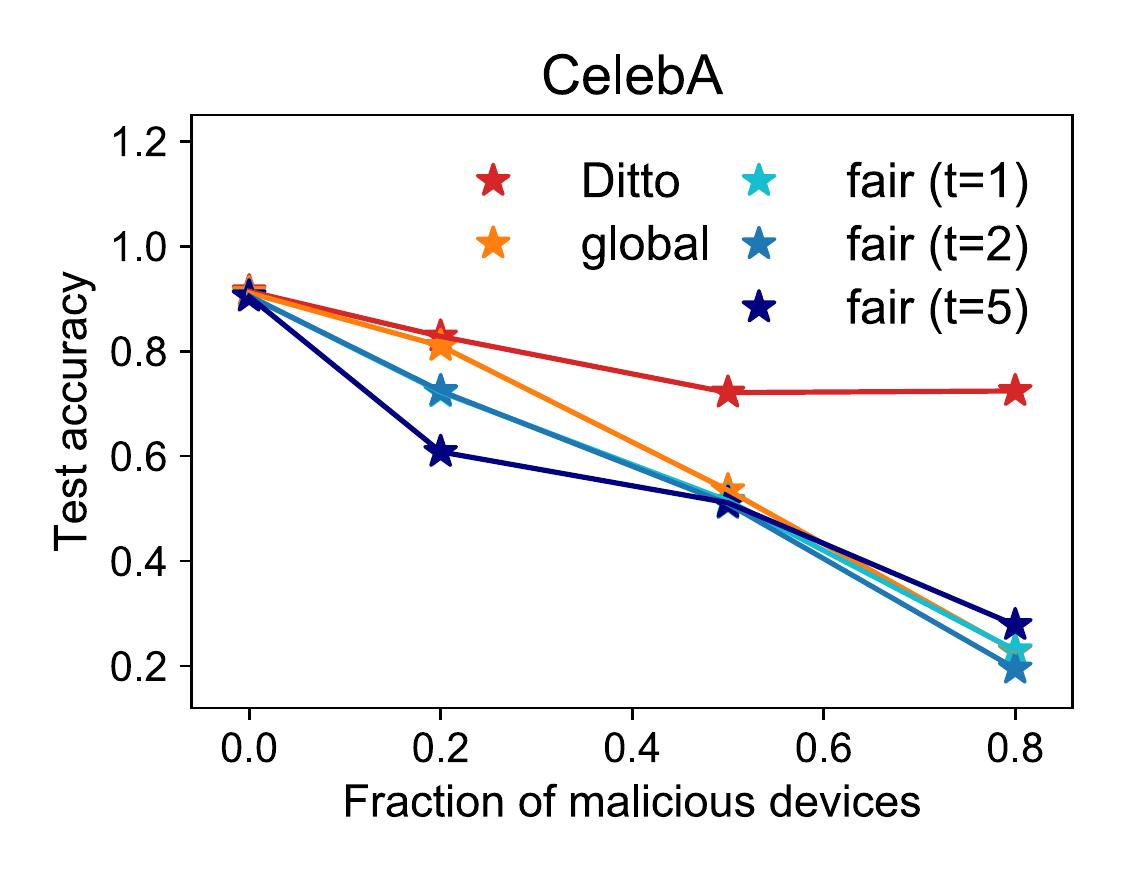}
    \end{subfigure}
    \caption{Fair methods can overfit to corrupted devices (possibly with large training losses) by imposing more weights on them, thus being particularly susceptible to attacks.}
   \vspace{0.15in}
    \label{fig:fair}
\end{figure}

\begin{figure}[h!]
    \centering
    \includegraphics[width=0.49\textwidth]{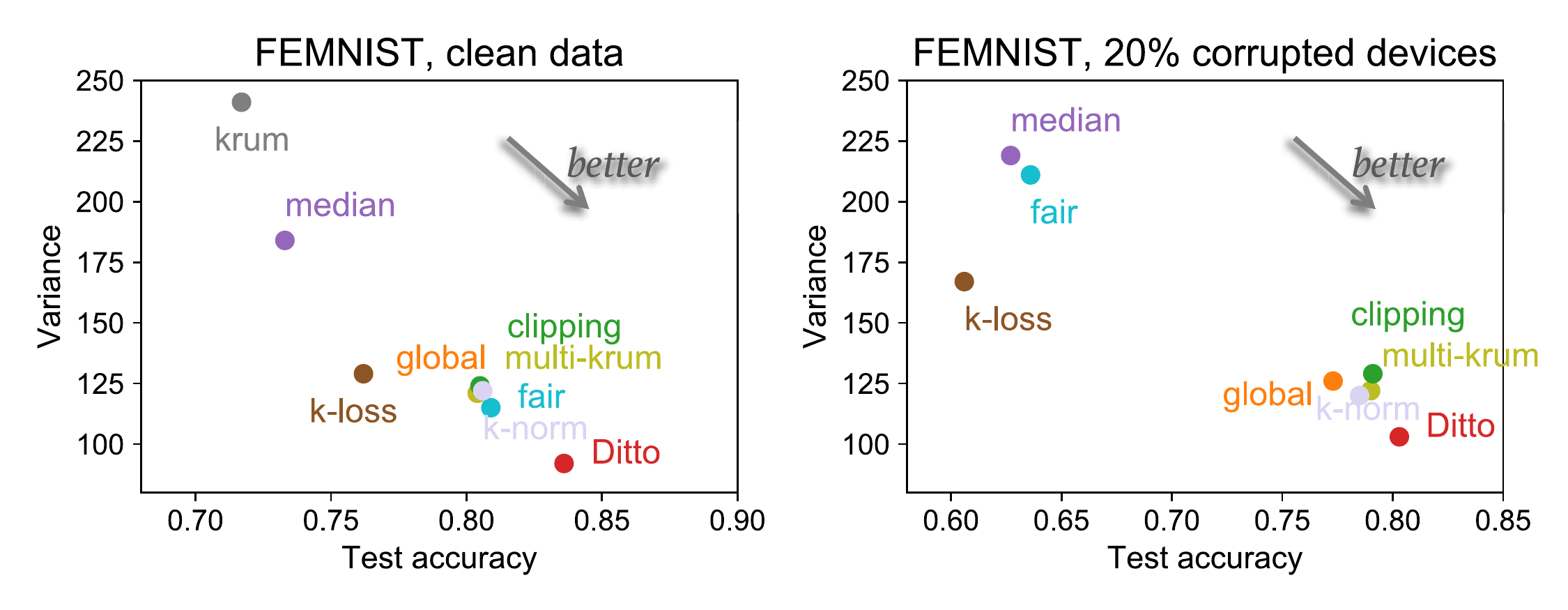}
    \vspace{-0.2in}
    \caption{Compared with learning a global model, robust baselines (i.e., the methods listed in the figure excluding `global' and `Ditto') are either robust but not fair (with higher accuracy, larger variance), or not even robust (with lower accuracy). \ditto lies at the lower right corner, which is our preferred region.}
    \vspace{-0.1in}
    \label{fig:robust}
\end{figure}

Next, we apply various strong robust methods under the same attack, and explore the robustness/accuracy and fairness performance. The robust approaches include: Krum, multi-Krum~\cite{Blanchard2017MachineLW}, taking the coordinate-wise median of gradients (`median'), gradient clipping (`clipping'), filtering out the gradients with largest norms (`k-norm'), and taking the gradient of the $k$-th largest loss where $k$ is the number of malicious devices (`k-loss'). For Krum, multi-Krum,  $k$-norm, and $k$-loss, we assume that the server knows the expected number of malicious devices that are selected each round, and can set $k$ accordingly for $k$-norm and $k$-loss. From Figure~\ref{fig:robust}, we see that robust baselines are either (i) more robust than global but less fair, or (ii) fail to provide robustness due to heterogeneity. \ditto is more robust, accurate, and fair.

\subsection{Additional Properties of \ditto} \label{sec:exp:other_properties}

\paragraph{Personalization.}
We additionally explore the performance of other personalized FL methods in terms of accuracy and fairness, on both clean and adversarial cases.
In particular, we consider objectives that (i) regularize with the average (L2SGD~\cite{hanzely2020federated}) or the learnt device relationship matrix (MOCHA~\cite{smith2017federated}), (ii) encourage closeness to the global model in terms of some specific function behavior (EWC~\cite{kirkpatrick2017overcoming, yu2020salvaging} and  Symmetrized KL (SKL)), (iii) interpolate between local and global models (APFL~\cite{deng2020adaptive} and mapper~\cite{mansour2020three}), and (iv) have been motivated by meta-learning (Per-FedAvg (HF)~\cite{fallah2020personalized}). We provide a detailed description in Appendix~\ref{app:exp:detail}.

We compare \ditto with the above alternatives, using the same learning rate tuned on FedAvg on clean data for all methods except  Per-FedAvg, which requires additional tuning to prevent divergence. 
For finetuning methods (EWC and SKL), we finetune on each local device for 50 epochs starting from the converged global model. We report results of baseline methods using their best hyperparameters.  Despite \ditto's simplicity, in Table~\ref{table: compare_other_mtl} below, we see that \ditto achieves similar or superier test accuracy with slightly lower standard deviation compared with these recent personalization methods. 

\textcolor{black}{We also evaluate the performance of MOCHA with a convex SVM model in Table~\ref{table:vehicle_full} in the appendix. MOCHA is more robust and fair than most baselines, which is in line with our reasoning that personalization can provide benefits for these constraints.}  Further understanding the robustness/fairness benefits of other personalized approaches would be an interesting direction of future work. 
\begin{table}[h!]
	\caption{\ditto is competitive with or outperforms other recent personalization methods. We report the {average (standard deviation)} of test accuracies across all devices to capture performance and fairness (Definition~\ref{def:fairness}), respectively. }
	\vspace{1em}
	\centering
	\label{table: compare_other_mtl}
	\scalebox{0.83}{
	\begin{tabular}{l cc|cc} 
	   \toprule[\heavyrulewidth]
	    & \multicolumn{2}{c}{{Clean}}  &  \multicolumn{2}{c}{50\% Adversaries (A1)} \\
        \cmidrule(r){2-5} 
        Methods &  \textbf{~FEMNIST~}  & \textbf{~CelebA~} & \textbf{~FEMNIST~}  & \textbf{~CelebA~} \\
        \hline
        global                & .804 {\small (.11)}   & .911 {\small  (.19)} & .727 {\small (.12)} & .538 {\small (.28)} \\
        local & .628 {\small (.15)}  & .692 {\small (.27)} & .627 {\small (.14)}  & .682 {\small (.27)} \\
        plain finetuning & .815 (.09) & .912 (.18) & .734 (.12) & .721 (.28) \\
        L2SGD & .817 {\small (.10)}  & .899 {\small (.18)} & .732 {\small (.15)}  & .725 {\small (.25)}  \\
        EWC  & .810 {\small (.11)}	 & .910 {\small (.18)} & .756 {\small (.12)}  & .642 {\small (.26)} \\
        SKL & .820 {\small (.10)} & \textbf{.915 {\small (.16)}} & .752 {\small (.12)}  & .708 {\small (.27)} \\
        Per-FedAvg (HF) & .827 {\small (.09)} & .907 {\small (.17)} & .604 {\small (.14)}  & \textbf{.756 {\small (.26)}} \\
        mapper & .792 (.12) & .773 (.25) & .726 (.13) & .704 (.27) \\
        APFL   & .811 {\small (.11)} & .911 {\small (.17)} & .750 {\small (.11)} & .710 {\small  (.27)} \\
        \ditto & \textbf{.836  {\small (.10)}}  & {.914 {\small (.18)}} & \textbf{.767 {\small (.10)}}  & .721 {\small  (.27)} \\
    \bottomrule[\heavyrulewidth]
	\end{tabular}}
\end{table}

\paragraph{Augmenting with Robust Baselines.}
\ditto allows the flexibility of learning robust $w^*$ leveraging any previous robust aggregation techniques, which could further improve the performance of personalized models. For instance, in the aggregation step at the server side (Line 7 in Algorithm~\ref{alg:1}), instead of simply averaging the global model updates as in FedAvg, we can aggregate them via multi-Krum, or after gradient clipping. As is shown in Table~\ref{table:ditto+robust_baseline}, \ditto combined with clipping yields improvements compared with vanilla \ditto. We present full results on different datasets trying varying robust methods in Table~\ref{table:ditto+robust_baseline_full} in the appendix.

\setlength{\tabcolsep}{4pt}
\begin{table}[h]
    \vspace{-0.05in}
	\caption{Augmenting \ditto with robust baselines can further improve performance. }
	\vspace{1em}
	\centering
	\label{table:ditto+robust_baseline}
	\scalebox{0.95}{
	\begin{tabular}{l cccccc} 
	   \toprule[\heavyrulewidth]
        \textbf{FEMNIST} & \multicolumn{2}{c}{{\bf A1}}  & \multicolumn{2}{c}{{\bf A2}} & \multicolumn{2}{c}{{\bf A3}} \\
        \cmidrule(r){2-7}
         Methods &  20\%  & 80\%  & 20\%  & 80\%  &  10\%  &  20\%   \\
        \midrule
        global & .773 & .574 & .774 & .636 & .517 & .364 \\
        clipping	        &  .791 & .408 & .791	&  .656	& .795		&  .061	 \\
        \ditto	    & .803	& \textbf{.669} & .792   & .681	    & .695		&  .650 \\
        \ditto + clipping & \textbf{.810} & .645 & \textbf{.808}	& \textbf{.684}	& \textbf{.813}	&  \textbf{.672} \\
    \bottomrule[\heavyrulewidth]
	\end{tabular}}
\end{table}

\begin{figure}[h]
    \centering
    \begin{subfigure}{0.24\textwidth}
    \includegraphics[width=\textwidth]{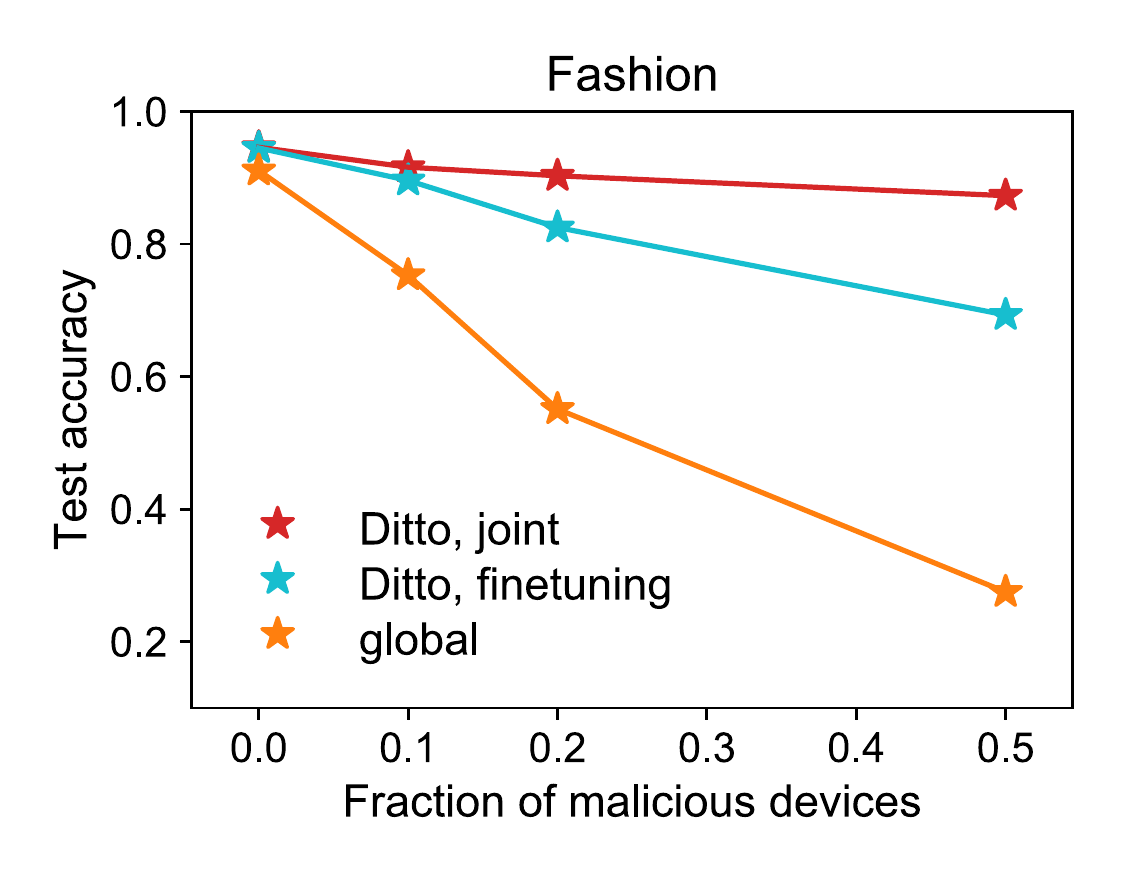}
    \end{subfigure}
    \hfill
    \begin{subfigure}{0.23\textwidth}
    \includegraphics[width=\textwidth]{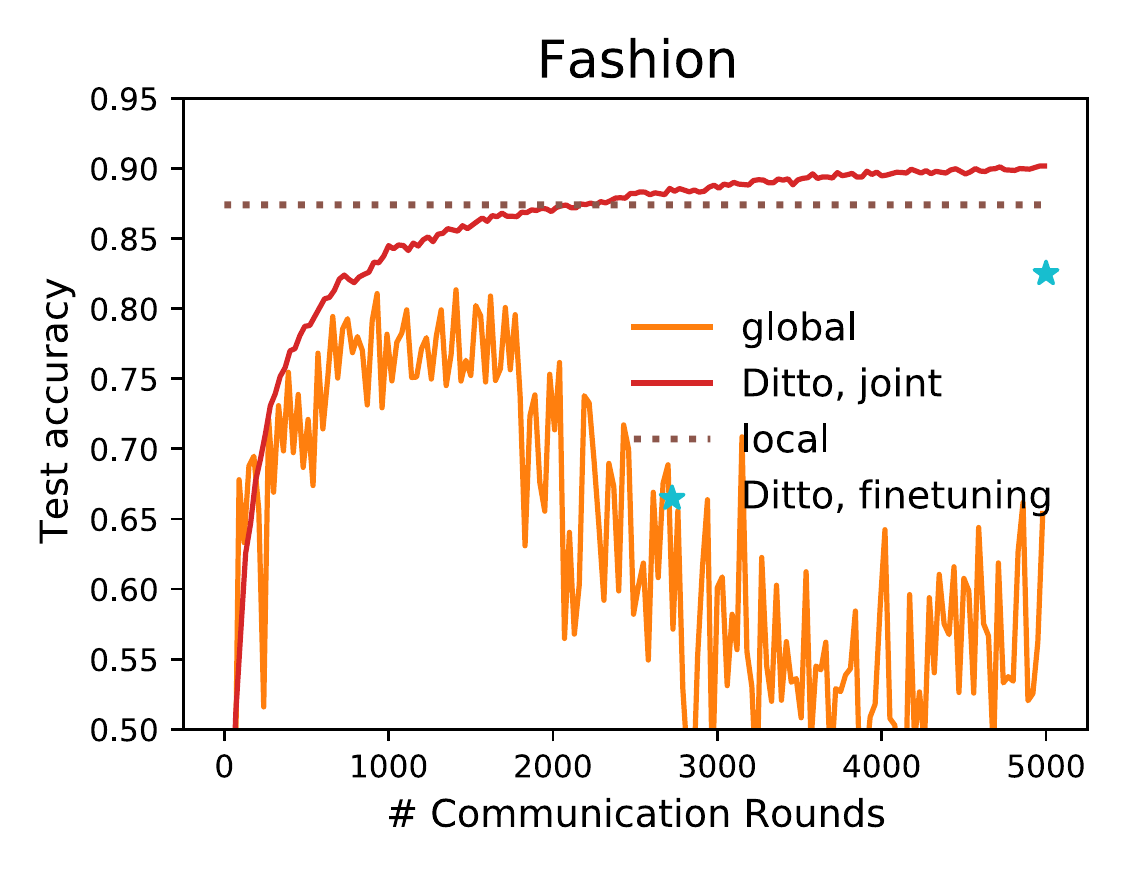}
    \end{subfigure}
    \vspace{-0.05in}
    \caption{\small \ditto with joint optimization (Algorithm~\ref{alg:1}) outperforms the alternative local finetuning solver under the strong model replacement attack.}
    \label{fig:finetuning}
\end{figure}

\paragraph{Comparing Two Solvers.}
As mentioned in Section~\ref{sec:solver}, another way to solve \ditto is to finetune on $\min_{v_k} h_k (v_k; w^*)$ for each $k \in [K] $ after obtaining $w^*$. 
We examine the performance of two solvers under the model replacement attack (A3) with 20\% adversaries. 
In realistic federated networks, it may be challenging to determine how many iterations to finetune for, particularly over a heterogeneous network of devices. 
To obtain the best performance of finetuning, we solve $\min_{v_k} h_k(v_k;w^*)$ on each device by running different iterations of mini-batch SGD and pick the best one. As shown in Figure~\ref{fig:finetuning}, the finetuning solver improves the performance compared with learning a global model, while~\ref{obj:multitask} combined with joint optimization performs the best. One can also perform finetuning after early stopping; however, it is essentially solving a different objective and it is difficult to determine the stopping criteria. We discuss this in more detail in Appendix~\ref{app:exp:full:two_solver}.

\section{Conclusion and Future Work}
We propose \ditto, a simple MTL framework, to address the competing constraints of accuracy, fairness, and robustness in federated learning. \ditto can be thought of as a lightweight personalization add-on for any global federated objective, which maintains the privacy and communication efficiency of the global solver.
We theoretically analyze the ability of \ditto to mitigate the tension between fairness and robustness on a class of linear problems. Our empirical results demonstrate that \ditto can result in both more robust and fairer models compared with strong baselines across a diverse set of attacks. Our work suggests several interesting directions of future study, such as exploring the applicability of \ditto to other attacks such as backdoor attacks~\citep[e.g.,][]{Sun2019CanYR}; understanding the fairness/robustness properties of other personalized methods; and considering additional constraints, such as privacy.

\section*{Acknowledgements}
The work of TL, SH, and VS was supported in part by the
National Science Foundation Grant IIS1838017, a Google Faculty Award, a Facebook Faculty Award, and the CONIX Research Center. Any opinions, findings, and conclusions or
recommendations expressed in this material are those of the author(s) and do not necessarily reflect
the National Science Foundation or any other funding agency.

\bibliography{references}
\bibliographystyle{icml2021}

\newpage
\appendix
\onecolumn

\section*{Appendix}
{
}
We provide a simple table of contents below for easier navigation of the appendix.

\textbf{CONTENTS}

\textbf{Section~\ref{app:theory}: Analysis of the Federated Multi-Task Learning Objective \ditto}

\quad \quad Section~\ref{app:theory:general_property}: {Properties of \ditto for Strongly Convex Functions } 

\quad \quad Section~\ref{app:theory:lr}: {Federated Linear Regression} 

\quad \quad Section~\ref{app:theory:pe}: {The Case of Federated Point Estimation} 

\textbf{Section~\ref{app:convg}: {Algorithm and Convergence Analysis}}

\textbf{Section~\ref{app:exp:detail}: {Experimental Details}}

\quad \quad Section~\ref{app:exp:data}: {Datasets and Models} 

\quad \quad  Section~\ref{app:exp:baseline}: {Personalization Baselines} 

\textbf{Section~\ref{app:exp:full}: {Additional and Complete Experiment Results}} 

\quad \quad  Section~\ref{app:exp:full:two_solver}: {Comparing with Finetuning} 

\quad \quad  Section~\ref{app:exp:full:lambda}: {Tuning $\lambda$}

\quad \quad  Section~\ref{app:exp:full:robustify}: {\ditto Augmented with Robust Baselines} 

\quad \quad  Section~\ref{app:exp:full:big_table}: {\ditto Complete Results}

\newpage
\section{Analysis of the Federated Multi-Task Learning Objective \ditto} \label{app:theory}

Here, we provide theoretical analyses of \ref{obj:multitask}, mainly on a class of linear models. 
In this linear setting, we investigate accuracy, fairness, and robustness of \ditto. We first discuss some general properties of \ditto for strongly convex functions in terms of the training performance in Section~\ref{app:theory:general_property}. We next present our main results on characterizing the benefits (accuracy, fairness, and robustness) of \ditto on linear regression in Section~\ref{app:theory:lr}. Finally, we present results on a special case of linear regression (federated point estimation problem examined in Section~\ref{sec:theory}) in Section~\ref{app:theory:pe}.

\subsection{Properties of \ditto for Strongly Convex Functions } \label{app:theory:general_property}
Let the \ditto objective on device $k$ be
\begin{equation}
    h_k(w) = F_k(w) + \lambda \psi(w), 
\end{equation}
where $F_k$ is strongly convex, and
\begin{align}
    & \psi(w) := \frac{1}{2}\| w- w^* \|^2,\\
    & w^* := \arg\min_{w} \left\{ \frac{1}{K}\sum_{k \in [K]} F_k(w)\right\}.
\end{align}

Let
\begin{equation}
    \widehat{w}_k(\lambda)  =  \arg\min_{w} h_k(w).
\end{equation}
Without any distributional assumptions on the tasks, we first characterize the solutions of the objective $h_k(w)$.
\begin{lemma}
For all $\lambda \geq 0,$
\begin{align}
    \frac{\partial}{\partial \lambda } F_k(\widehat{w}_k(\lambda)) &\geq 0,\\
    \frac{\partial}{\partial \lambda } \psi(\widehat{w}_k(\lambda)) &\leq 0.
\end{align}
In addition, for all $k$, if $F_k(w^*)$ is finite, then
\begin{equation}
    \lim_{\lambda \to \infty}  \widehat{w}_k(\lambda) = w^*.
\end{equation}
\end{lemma}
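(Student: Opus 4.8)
The plan is to work entirely from the first-order optimality condition for the (unique) minimizer $\widehat{w}_k(\lambda)$. Since $F_k$ is strongly convex and $\psi$ is a quadratic with Hessian $I$, the regularized objective $h_k$ is strongly convex for every $\lambda \geq 0$, so $\widehat{w}_k(\lambda)$ exists, is unique, and is characterized by
\begin{equation*}
\nabla F_k(\widehat{w}_k(\lambda)) + \lambda\bigl(\widehat{w}_k(\lambda) - w^*\bigr) = 0.
\end{equation*}
Writing $\widehat{w} := \widehat{w}_k(\lambda)$, I would first record two consequences: $\nabla\psi(\widehat{w}) = \widehat{w} - w^*$, and, from stationarity, $\nabla F_k(\widehat{w}) = -\lambda(\widehat{w} - w^*)$.

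For the two derivative inequalities, I would differentiate the stationarity condition in $\lambda$. Differentiability of $\lambda \mapsto \widehat{w}_k(\lambda)$ follows from the implicit function theorem, since the Jacobian $\nabla^2 F_k(\widehat{w}) + \lambda I$ is positive definite (this is where twice-differentiability of $F_k$ is used). This yields $\widehat{w}_k'(\lambda) = -\bigl(\nabla^2 F_k(\widehat{w}) + \lambda I\bigr)^{-1}(\widehat{w} - w^*)$. The chain rule then gives
\begin{equation*}
\frac{\partial}{\partial\lambda}\psi(\widehat{w}) = (\widehat{w} - w^*)^\top \widehat{w}_k'(\lambda) = -(\widehat{w} - w^*)^\top \bigl(\nabla^2 F_k(\widehat{w}) + \lambda I\bigr)^{-1}(\widehat{w} - w^*),
\end{equation*}
which is $\leq 0$ because $\bigl(\nabla^2 F_k(\widehat{w}) + \lambda I\bigr)^{-1}$ is positive definite. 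For $F_k$, substituting $\nabla F_k(\widehat{w}) = -\lambda(\widehat{w} - w^*)$ into the chain rule gives the clean identity $\frac{\partial}{\partial\lambda}F_k(\widehat{w}) = -\lambda\,\frac{\partial}{\partial\lambda}\psi(\widehat{w})$, so the sign flips and the first inequality follows for every $\lambda \geq 0$.

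For the limit, I would use only the optimality inequality $h_k(\widehat{w}_k(\lambda)) \leq h_k(w^*)$. Since $\psi(w^*) = 0$, this reads $F_k(\widehat{w}) + \lambda\psi(\widehat{w}) \leq F_k(w^*)$, and bounding $F_k(\widehat{w})$ below by its (finite) strongly-convex minimum $F_k^\star$ gives $\psi(\widehat{w}) \leq (F_k(w^*) - F_k^\star)/\lambda$. Because $\psi(\widehat{w}) = \tfrac12\|\widehat{w} - w^*\|^2$, this forces $\|\widehat{w}_k(\lambda) - w^*\| \to 0$ as $\lambda \to \infty$, which is the claim; the hypothesis that $F_k(w^*)$ is finite is exactly what keeps the numerator bounded.

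The main obstacle is not any single computation but the regularity bookkeeping needed to make the derivative statements rigorous, namely justifying differentiability of $\widehat{w}_k(\lambda)$ and the existence of $\nabla^2 F_k$. If one wishes to avoid second-order assumptions entirely, I would instead prove the two monotonicity claims by the standard exchange argument: adding the optimality inequalities at $\lambda_1 < \lambda_2$ yields $(\lambda_2 - \lambda_1)\bigl(\psi(\widehat{w}_2) - \psi(\widehat{w}_1)\bigr) \leq 0$, giving monotonicity of $\psi$, and substituting this back into one of the inequalities gives monotonicity of $F_k$, establishing the same conclusions without differentiating.
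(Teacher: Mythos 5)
Your proof is correct, but it is worth noting what the paper actually does here: it gives no argument at all, simply deferring to Theorem~3.1 of \citet{hanzely2020federated}, whose proof is essentially the exchange argument you sketch at the end --- compare the optimality inequalities $h_k(\widehat{w}_1) \leq F_k(\widehat{w}_2) + \lambda_1 \psi(\widehat{w}_2)$ and $h_k(\widehat{w}_2) \leq F_k(\widehat{w}_1) + \lambda_2 \psi(\widehat{w}_1)$ at $\lambda_1 < \lambda_2$, add them to get $(\lambda_2-\lambda_1)\left(\psi(\widehat{w}_2)-\psi(\widehat{w}_1)\right) \leq 0$, and substitute back. Your primary route via the implicit function theorem is genuinely different and buys more: it gives an explicit formula $\widehat{w}_k'(\lambda) = -\left(\nabla^2 F_k(\widehat{w}) + \lambda I\right)^{-1}(\widehat{w}-w^*)$ and the clean identity $\frac{\partial}{\partial\lambda}F_k(\widehat{w}) = -\lambda\,\frac{\partial}{\partial\lambda}\psi(\widehat{w})$, which quantifies the tradeoff rather than just signing it. The cost, as you correctly flag, is twice differentiability of $F_k$, which the lemma does not assume (it assumes only strong convexity, and the lemma's own derivative notation already presupposes some regularity); your fallback exchange argument closes that gap and works under exactly the stated hypotheses, so the combined write-up is complete. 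Your limit argument --- $\lambda\,\psi(\widehat{w}_k(\lambda)) \leq F_k(w^*) - F_k^\star$ from optimality against $w^*$ and $\psi(w^*)=0$, hence $\|\widehat{w}_k(\lambda)-w^*\|^2 = O(1/\lambda)$ --- is also correct and in fact gives a convergence rate, slightly more than the qualitative limit claimed.
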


\begin{proof}
The proof here directly follows the proof in~\citet[Theorem 3.1,][]{hanzely2020federated}.
\end{proof}

As $\lambda$ increases, the local empirical training loss $F_k(\widehat{w}_k(\lambda))$ will also increase, and the resulting personalized models will be closer to the global model. 
Therefore, $\lambda$ effectively controls how much personalization we impose. Since for any device $k \in [K]$, training loss is minimized when $\lambda=0$, training separate local models is the most robust and fair  \textit{in terms of training performance when we do not consider generalization}. 
 
However, in order to obtain the guarantees on the test performance, we need to explicitly model the joint distribution of data on all devices. In the next section, we explore a Bayesian framework on a class of linear problems to examine the generalization, fairness, and robustness of the \ditto objective, all on the underlying test data.

\newpage
\subsection{Federated Linear Regression} \label{app:theory:lr}

We first examine the case without corrupted devices in Section~\ref{app:theory:mtl_clean}. We prove that there exists a $\lambda$ that results in an optimal average test performance among all possible federated learning algorithms, which coincides with the optimal $\lambda$ in \ditto's solution space in terms of fairness. When there are adversaries, we analyze the robustness benefits of \ditto in Section~\ref{app:theory:mtl_adversary}. In particular, we show there exists a $\lambda$ which leads to the highest test accuracy across benign devices (i.e., the most robust) \text{and} minimizes the variance of the test error across benign devices (i.e., the most fair) jointly.

Before we proceed, we first state a technical lemma that will be used throughout the analyses. 
\begin{lemma}
 \label{lem:parallel-general}
 Let $\theta$ be drawn from the non-informative uniform prior on $\mathbb{R}^d$. Further, let $\{\phi_k\}_{k \in [K]}$ denote noisy observations of $\theta$ with additive zero-mean independent Gaussian noises with covariance matrices $\{ \Sigma_k \}_{k \in [K]}$. Let 
\begin{equation}
 \Sigma_\theta := \left(\sum_{k \in [K]} \Sigma^{-1}_k\right)^{-1}.
\label{eq:Sigma-theta}
\end{equation}
Then, conditioned on $\{\phi_k\}_{k \in [K]}$, we can write $\theta$ as
 $$
 \theta = \Sigma_\theta  \sum_{k \in [K]} \Sigma_k^{-1} \phi_k + z,
 $$
 where $z$ is $\mathcal{N}(0,  \Sigma_\theta)$ which is independent of $\{\phi_k\}_{k \in [K]}$.
\end{lemma}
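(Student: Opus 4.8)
The plan is to recognize this as a standard Gaussian posterior computation: since $\phi_k = \theta + \epsilon_k$ with $\epsilon_k \sim \mathcal{N}(0,\Sigma_k)$ mutually independent, the conditional law of $\theta$ given $\{\phi_k\}_{k\in[K]}$ is obtained by Bayes' rule, and the asserted decomposition simply re-expresses $\theta$ as its posterior mean plus a posterior residual $z$. First I would write the likelihood: because the $\epsilon_k$ are independent Gaussians,
\begin{equation*}
p(\{\phi_k\}\mid\theta) \;\propto\; \prod_{k\in[K]} \exp\!\left(-\tfrac12 (\phi_k-\theta)^\top \Sigma_k^{-1}(\phi_k-\theta)\right),
\end{equation*}
and since the prior on $\theta$ is the flat (uniform) prior on $\mathbb{R}^d$, the posterior $p(\theta\mid\{\phi_k\})$ is proportional to this product.

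Next I would complete the square in $\theta$. Collecting terms, the exponent equals $-\tfrac12\big(\theta^\top(\sum_k\Sigma_k^{-1})\theta - 2\theta^\top\sum_k\Sigma_k^{-1}\phi_k\big)$ up to a $\theta$-independent constant. The quadratic term identifies the posterior precision as $\sum_k\Sigma_k^{-1}=\Sigma_\theta^{-1}$, hence posterior covariance $\Sigma_\theta$; matching the linear term gives posterior mean $\mu := \Sigma_\theta\sum_k\Sigma_k^{-1}\phi_k$. Thus $\theta\mid\{\phi_k\}\sim\mathcal{N}(\mu,\Sigma_\theta)$, and setting $z:=\theta-\mu$ yields $z\mid\{\phi_k\}\sim\mathcal{N}(0,\Sigma_\theta)$, which is exactly the claimed identity.

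The main obstacle is the independence of $z$ from $\{\phi_k\}$, complicated by the improper prior: the joint law of $(\theta,\{\phi_k\})$ is non-normalizable, so ``independence'' must be justified with some care. I would handle this by a limiting argument: replace the flat prior by $\theta\sim\mathcal{N}(0,\tau^2 I)$, so that $(\theta,\{\phi_k\})$ is genuinely jointly Gaussian. Then the posterior precision is $\tau^{-2}I+\sum_k\Sigma_k^{-1}$ with mean $(\tau^{-2}I+\sum_k\Sigma_k^{-1})^{-1}\sum_k\Sigma_k^{-1}\phi_k$, and by the Gaussian orthogonality principle the residual $z_\tau:=\theta-\mathbb{E}[\theta\mid\{\phi_k\}]$ is uncorrelated with, hence independent of, $\{\phi_k\}$. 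Letting $\tau\to\infty$ sends $\tau^{-2}I\to 0$, so the precision converges to $\Sigma_\theta^{-1}$ and the mean to $\mu$, recovering both the decomposition and the independence of $z$.

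Alternatively, one can argue directly and avoid the limit: since the conditional distribution of $z$ given $\{\phi_k\}$ is $\mathcal{N}(0,\Sigma_\theta)$ and does not depend on the conditioning values, $z$ is independent of $\{\phi_k\}$ by definition. I expect the posterior computation itself to be routine, with essentially all of the genuine content lying in this independence claim and the (harmless) improper-prior technicality it introduces.
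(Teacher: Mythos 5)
Your proposal is correct, and it is worth noting how it relates to what the paper actually does: the paper gives no self-contained argument for this lemma at all. It restates the scalar version (Lemma 11 of Mahdavifar et al., reproduced as the paper's Lemma~3) and simply asserts that the multivariate statement ``also follows from the proof'' there. Your derivation is the explicit version of that same underlying fact --- writing the Gaussian likelihood, completing the square to read off the posterior precision $\sum_{k}\Sigma_k^{-1}=\Sigma_\theta^{-1}$ and mean $\mu=\Sigma_\theta\sum_k\Sigma_k^{-1}\phi_k$, and then extracting the residual $z=\theta-\mu$ --- so it is not a different mathematical route so much as a filling-in of what the paper leaves to a citation. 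What your write-up buys is self-containedness and, more substantively, an explicit treatment of the only genuinely delicate point, namely the independence of $z$ from $\{\phi_k\}$ under an improper prior. Of your two justifications for that point, the direct one is the cleaner and fully sufficient one: the conditional law of $z$ given $\{\phi_k\}$ is $\mathcal{N}(0,\Sigma_\theta)$ for every value of the observations, and a conditional distribution that does not depend on the conditioning value is exactly independence. The limiting argument with a proper prior $\mathcal{N}(0,\tau^2 I)$, $\tau\to\infty$, is fine as intuition but is slightly awkward as a proof, since the joint law of $(\theta,\{\phi_k\})$ does not converge to a probability measure as $\tau\to\infty$ (only the conditional does), so independence does not automatically ``pass to the limit''; since your direct argument already closes the gap, this is a stylistic rather than a substantive issue.
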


Lemma~\ref{lem:parallel-general} is a generalization of Lemma 11 presented in~\citet{mahdavifar2017global} (restated in Lemma~\ref{lem:parallel} below) to the multivariate Gaussian case. The proof also follows from the proof in ~\citet{mahdavifar2017global}.

 \begin{lemma}[Lemma 11 in~\citet{mahdavifar2017global}]
 \label{lem:parallel}
 Let $\theta$ be drawn from the non-informative uniform prior on $\mathbb{R}$. Further, let $\{\phi_k\}_{k \in [K]}$ denote noisy observations of $\theta$ with additive zero-mean independent Gaussian noises with variances $\{ \sigma^2_k \}_{k \in [K]}$. Let 
\begin{equation}
 \frac{1}{\sigma^2_\theta} := \sum_{k \in [K]} \frac{1}{\sigma^2_k}.
\label{eq:sigma-theta}
\end{equation}
Then, conditioned on $\{\phi_k\}_{k \in [K]}$, we can write $\theta$ as
 $$
 \theta = \sigma^2_\theta  \sum_{k \in [K]} \frac{\phi_k}{\sigma^2_k}+ z,
 $$
 where $z$ is $\mathcal{N}(0,  \sigma^2_\theta)$ which is independent of $\{\phi_k\}_{k \in [K]}$. 
\end{lemma}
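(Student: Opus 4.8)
The plan is to prove the lemma by a direct computation of the Bayesian posterior of $\theta$ given the observations $\{\phi_k\}_{k \in [K]}$. Since the prior on $\theta$ is the (improper) uniform prior on $\mathbb{R}$, Bayes' rule says the posterior density is proportional to the likelihood, $p(\theta \mid \{\phi_k\}) \propto \prod_{k \in [K]} p(\phi_k \mid \theta)$. Each factor is the Gaussian density $p(\phi_k \mid \theta) \propto \exp\!\big(-(\phi_k - \theta)^2 / (2\sigma_k^2)\big)$, because $\phi_k = \theta + \epsilon_k$ with $\epsilon_k \sim \mathcal{N}(0, \sigma_k^2)$ independent across $k$. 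So the whole problem reduces to understanding a product of Gaussians viewed as a function of $\theta$.

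First I would take the negative log of the posterior and collect terms in $\theta$. Expanding $\sum_k (\phi_k - \theta)^2 / (2\sigma_k^2)$ gives a quadratic in $\theta$ whose coefficient of $\theta^2$ is $\tfrac12 \sum_k 1/\sigma_k^2 = \tfrac{1}{2\sigma_\theta^2}$ by the definition~\eqref{eq:sigma-theta}, and whose coefficient of $\theta$ is $-\sum_k \phi_k/\sigma_k^2$. Completing the square in $\theta$ then shows that, up to an additive constant independent of $\theta$, the log-posterior equals $-\tfrac{1}{2\sigma_\theta^2}\big(\theta - \sigma_\theta^2 \sum_k \phi_k/\sigma_k^2\big)^2$. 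Hence the posterior is exactly the Gaussian $\mathcal{N}\!\big(\sigma_\theta^2 \sum_k \phi_k/\sigma_k^2,\ \sigma_\theta^2\big)$, which already identifies the posterior mean and variance asserted in the statement.

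To obtain the stated representation, I would define $z := \theta - \sigma_\theta^2 \sum_{k} \phi_k/\sigma_k^2$. The posterior computation shows that, conditioned on any fixed value of $\{\phi_k\}_{k \in [K]}$, the variable $z$ is distributed as $\mathcal{N}(0, \sigma_\theta^2)$. The key point is that this conditional law does not depend on the conditioning values $\{\phi_k\}$ — neither the mean $0$ nor the variance $\sigma_\theta^2$ involves them. By the standard fact that a variable whose conditional distribution given the data is a fixed law is (marginally) distributed according to that law and independent of the data, I conclude that $z \sim \mathcal{N}(0, \sigma_\theta^2)$ is independent of $\{\phi_k\}_{k \in [K]}$, yielding $\theta = \sigma_\theta^2 \sum_k \phi_k/\sigma_k^2 + z$ as required.

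The main obstacle is the rigorous handling of the improper uniform prior, since it is not a genuine probability distribution and ``conditioning'' on it needs care. I would address this by treating the uniform prior as the limit of proper priors $\mathcal{N}(0, \tau_0^2)$ as $\tau_0 \to \infty$: for finite $\tau_0$ the posterior is an honest Gaussian, and letting $\tau_0 \to \infty$ recovers the coefficients above, as the extra $1/\tau_0^2$ contribution to the precision vanishes. The remaining check — that the quadratic form in $\theta$ has a strictly positive leading coefficient, so the posterior is normalizable — is immediate from $\sigma_\theta^2 > 0$. The completing-the-square algebra and the independence argument are then routine.
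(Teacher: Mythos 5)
Your proof is correct: the complete-the-square posterior computation, the identification of the conditional law $\mathcal{N}\bigl(\sigma_\theta^2 \sum_k \phi_k/\sigma_k^2,\ \sigma_\theta^2\bigr)$, and the observation that $z$ has a conditional distribution not depending on $\{\phi_k\}$ (hence is independent of them) constitute the standard and intended argument, with the improper prior properly handled via the $\mathcal{N}(0,\tau_0^2)$, $\tau_0 \to \infty$ limit. Note that the paper itself offers no proof of this lemma---it is quoted verbatim from \citet{mahdavifar2017global}, and the paper's multivariate generalization (Lemma~\ref{lem:parallel-general}) is likewise deferred to that reference---so your self-contained derivation matches the canonical route and would in fact generalize directly to the matrix-valued case by the same completion of squares.
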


\subsubsection{No Adversaries: \ditto for Accuracy and Fairness} \label{app:theory:mtl_clean}

We consider a Bayesian framework.
Let $\theta$ be drawn from the non-informative prior on $\mathbb{R}^d$, i.e., uniformly distributed on $\mathbb{R}^d$.
We assume that $K$ devices have their data distributed with parameters $\{w_k\}_{k \in [K]}$:
\begin{equation}
    w_k = \theta + \zeta_k,
\end{equation}
where $\zeta_k \sim \mathcal{N}(0, \tau^2 \mathbf{I}_d)$ are I.I.D, and $\mathbf{I}_d$ denotes the $d\times d$ identity matrix.  $\tau$ controls the degree of dependence between the tasks on different devices. If $\tau = 0,$ then the data on all devices is distributed according to parameter $\theta,$ i.e., the tasks are the same, and if $\tau \to \infty,$ the tasks on different devices become completely unrelated.

We first derive optimal estimators $\{w_k\}_{k \in [K]}$ for each device $w_k$ given observations $\{X_i, y_i\}_{i \in [K]}$.

\begin{lemma} \label{lemma:w_hat_LR}
Assume that we have
\begin{equation}
    y = Xw + z
\end{equation}
where $y \in \mathbb{R}^n$, $X\in \mathbb{R}^{n\times d}$, and $w \in \mathbb{R}^d$, and $z \in \mathbb{R}^n.$ Further assume that $z \sim \mathcal{N}(0, \sigma^2 \mathbf{I}_d)$ and $w$ follows the non-informative uniform prior on $\mathbb{R}^d$. Let
\begin{equation}
    \widehat{w} = (X^TX)^{-1} X^T y.
\end{equation}
Then, we have $\widehat{w}$ follows a multi-variate normal distribution as follows:
\begin{equation}
    \widehat{w} \sim \mathcal{N}\left( (X^TX)^{-1}X^T y, \sigma^2 (X^T X)^{-1} \right).
\end{equation}
\end{lemma}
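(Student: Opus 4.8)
The plan is to read the claim as a statement about the \emph{posterior} law of $w$ given the data $(X,y)$ under the improper flat prior, since the surrounding section explicitly sets up a Bayesian model. Note that the displayed mean $(X^TX)^{-1}X^Ty$ is exactly the ordinary least squares estimator $\widehat{w}$, so the content of the lemma is that, conditioned on the observation $y$, the parameter $w$ is Gaussian centered at $\widehat{w}$ with covariance $\sigma^2(X^TX)^{-1}$. First I would write down the Gaussian likelihood induced by $y = Xw + z$ with $z \sim \mathcal{N}(0,\sigma^2\mathbf{I})$, namely $p(y \mid w) \propto \exp\!\big(-\tfrac{1}{2\sigma^2}\|y - Xw\|^2\big)$, and combine it with the uniform prior $p(w) \propto 1$ through Bayes' rule, giving $p(w \mid X,y) \propto \exp\!\big(-\tfrac{1}{2\sigma^2}\|y - Xw\|^2\big)$ viewed as a function of $w$.

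Next I would complete the square in $w$. Expanding $\|y - Xw\|^2 = y^Ty - 2w^TX^Ty + w^TX^TXw$ and invoking the normal equations $X^TX\widehat{w} = X^Ty$, the cross term $-2w^TX^Ty = -2w^TX^TX\widehat{w}$ matches the cross term in $(w-\widehat{w})^T X^TX (w-\widehat{w})$. Hence $\|y - Xw\|^2 = (w-\widehat{w})^T X^TX (w-\widehat{w}) + c(y)$, where $c(y)$ collects the terms independent of $w$. Absorbing $c(y)$ into the normalizing constant yields $p(w \mid X,y) \propto \exp\!\big(-\tfrac{1}{2\sigma^2}(w-\widehat{w})^T X^TX (w-\widehat{w})\big)$, which I would then recognize as the density of a multivariate normal with mean $\widehat{w}$ and precision matrix $\sigma^{-2}X^TX$, i.e.\ covariance $\sigma^2(X^TX)^{-1}$.

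The calculation is elementary, so the main obstacle is conceptual and notational rather than technical: one must be explicit that the asserted distribution is the posterior over $w$ (whose randomness comes from the prior, now conditioned on the data), \emph{not} the sampling distribution of the estimator $\widehat{w}$ over the noise $z$, for which the literal statement $\widehat{w} \sim \mathcal{N}(\widehat{w},\cdot)$ would be vacuous. The one point deserving care is that the flat prior on $\mathbb{R}^d$ is improper; I would note that the posterior is nevertheless a well-defined, proper Gaussian precisely because $X^TX$ is assumed invertible (implicit in the appearance of $(X^TX)^{-1}$), which guarantees integrability of the completed-square exponent. I would also remark that the same conclusion follows from the multivariate conjugacy encoded in Lemma~\ref{lem:parallel-general}, treating $\widehat{w}$ as a single noisy observation of $w$ with noise covariance $\sigma^2(X^TX)^{-1}$; this alternative viewpoint is what makes the lemma directly composable with the per-device aggregation used in the linear regression analysis that follows.
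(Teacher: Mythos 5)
Your proposal is correct, and in fact it supplies more than the paper does: the paper states Lemma~\ref{lemma:w_hat_LR} with no proof at all, treating it as standard Bayesian linear regression. Your reading of the (admittedly confusingly written) conclusion is the right one --- the literal statement ``$\widehat{w} \sim \mathcal{N}(\widehat{w},\cdot)$'' only makes sense as the posterior law of $w$ given the data, or equivalently as the assertion that $\widehat{w}$ is an unbiased Gaussian observation of $w$ with covariance $\sigma^2(X^TX)^{-1}$; this is exactly how the paper invokes the lemma in the proof of Lemma~\ref{lemma:minus_k} (``$\widehat{w}_i$ is a noisy observation of the underlying $w_i$ with additive covariance $\sigma^2(X_i^TX_i)^{-1}$''). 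Both of your routes are sound: the complete-the-square computation via Bayes' rule with the flat prior, and the direct decomposition $\widehat{w} = w + (X^TX)^{-1}X^Tz$, whose noise term has covariance $(X^TX)^{-1}X^T(\sigma^2\mathbf{I})X(X^TX)^{-1} = \sigma^2(X^TX)^{-1}$, followed by the flat-prior conjugacy of Lemma~\ref{lem:parallel-general} applied to a single observation. The second route is the one the paper implicitly relies on, since it is what makes the lemma composable with the per-device aggregation downstream, and your observations about propriety of the posterior under the improper prior (requiring $X^TX$ invertible) and about the estimator-versus-posterior distinction are exactly the points the paper's terse statement glosses over.
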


\begin{lemma}\label{lemma:minus_k}
Let 
\begin{equation}
    \widehat{w}_i : = (X_i^TX_i)^{-1} X_i^T y_i.
\end{equation}

Let 
\begin{equation}
    \Sigma_i := \sigma^2 (X_i^T X_i)^{-1} + \tau^2 \mathbf{I}_d.
\end{equation}
Further, let
\begin{equation}
 \Sigma_{\theta}^{\setminus k} := \left(\sum_{i \in [K], i\neq k} \Sigma^{-1}_i\right)^{-1}.
\end{equation}
Further let
\begin{equation}
    \mu_\theta^{\setminus k}  := \Sigma_{\theta}^{\setminus k}  \sum_{i \in [K], i \neq k} \Sigma_i^{-1} \widehat{w}_i 
\end{equation}
Then, conditioned on $\{X_i, y_i\}_{i \in [K], i \neq k}$, we can write $\theta$ as
 $$
 \theta = \mu_\theta^{\setminus k}  + \eta,
 $$
 where $\eta$ is $\mathcal{N}(0,  \Sigma_{\theta}^{\setminus k})$ which is independent of $\{X_i, y_i\}_{i \in [K], i \neq k}$.
\end{lemma}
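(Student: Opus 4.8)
The plan is to prove Lemma~\ref{lemma:minus_k} by combining the two preceding lemmas with the multivariate Gaussian conjugacy result of Lemma~\ref{lem:parallel-general}. First I would observe that by Lemma~\ref{lemma:w_hat_LR}, each ordinary least squares estimate $\widehat{w}_i = (X_i^T X_i)^{-1} X_i^T y_i$ is, conditioned on the true device parameter $w_i$, distributed as $\mathcal{N}(w_i, \sigma^2 (X_i^T X_i)^{-1})$. Since by the modeling assumption $w_i = \theta + \zeta_i$ with $\zeta_i \sim \mathcal{N}(0, \tau^2 \mathbf{I}_d)$ independent of the regression noise, I would marginalize over $w_i$ to conclude that, conditioned on $\theta$, the estimate $\widehat{w}_i$ is a noisy observation of $\theta$ with additive zero-mean Gaussian noise whose covariance is the sum of the two independent contributions, namely $\Sigma_i = \sigma^2 (X_i^T X_i)^{-1} + \tau^2 \mathbf{I}_d$. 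This is exactly the definition of $\Sigma_i$ given in the statement, so the $\{\widehat{w}_i\}$ play the role of the noisy observations $\{\phi_k\}$ in Lemma~\ref{lem:parallel-general}.

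Next I would restrict attention to the collection of estimates excluding device $k$, i.e.\ $\{\widehat{w}_i\}_{i \in [K],\, i \neq k}$, treating these as independent noisy observations of the common $\theta$ (they are conditionally independent given $\theta$ because the data across devices are independent). Applying Lemma~\ref{lem:parallel-general} with the index set $[K] \setminus \{k\}$ in place of $[K]$ yields the posterior of $\theta$ conditioned on $\{X_i, y_i\}_{i \neq k}$: the aggregate precision is $\left(\Sigma_\theta^{\setminus k}\right)^{-1} = \sum_{i \neq k} \Sigma_i^{-1}$, the posterior mean is the precision-weighted combination $\mu_\theta^{\setminus k} = \Sigma_\theta^{\setminus k} \sum_{i \neq k} \Sigma_i^{-1} \widehat{w}_i$, and the posterior is Gaussian with covariance $\Sigma_\theta^{\setminus k}$. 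Writing $\theta = \mu_\theta^{\setminus k} + \eta$ with $\eta \sim \mathcal{N}(0, \Sigma_\theta^{\setminus k})$ independent of the conditioning data then gives precisely the claimed decomposition, matching the definitions of $\Sigma_\theta^{\setminus k}$ and $\mu_\theta^{\setminus k}$ in the statement.

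The main obstacle, or really the only subtle point, is justifying the noise-convolution step cleanly: one must verify that conditioned on $\theta$ the law of $\widehat{w}_i$ is genuinely $\mathcal{N}(\theta, \Sigma_i)$, which requires that the OLS estimation noise $\widehat{w}_i - w_i$ and the task-heterogeneity noise $\zeta_i = w_i - \theta$ are independent so that their covariances add. This follows because the regression noise $z_i$ is independent of $\zeta_i$ by assumption, but it should be stated explicitly since it is what makes $\Sigma_i$ decompose additively. A secondary point is that the non-informative uniform prior on $\theta$ is improper, so strictly speaking Lemma~\ref{lem:parallel-general} must be invoked in the limiting sense in which it was itself proved; since I am allowed to assume that lemma, I would simply cite it and not reprove the limiting argument. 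Everything else is a direct substitution into Lemma~\ref{lem:parallel-general}, so the proof is short once these identifications are in place.
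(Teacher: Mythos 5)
Your proposal is correct and follows essentially the same route as the paper's proof: invoke Lemma~\ref{lemma:w_hat_LR} to view each $\widehat{w}_i$ as a noisy observation of $w_i$ with covariance $\sigma^2 (X_i^T X_i)^{-1}$, convolve with the task noise $\zeta_i$ to get a noisy observation of $\theta$ with covariance $\Sigma_i$, and apply Lemma~\ref{lem:parallel-general} to the collection $\{\widehat{w}_i\}_{i \in [K], i \neq k}$. Your explicit remarks on the independence of the two noise sources and on the improper prior are sound refinements of steps the paper leaves implicit, but the argument is the same.
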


\begin{proof}
From Lemma~\ref{lemma:w_hat_LR}, we know $\widehat{w}_i$ is a noisy observation of the underlying $w_i$ with additive covariance $\sigma^2 (X_i^T X_i)^{-1}$. For $\{w_k\}_{k \in [K]}$ defined in our setup, $\widehat{w}_i$ is a noisy observation of $\theta$ with additive zero mean and covariance $\Sigma_i := \tau^2 \mathbf{I}_d + \sigma^2 (X_i^T X_i)^{-1}$. 
The proof completes by applying Lemma~\ref{lem:parallel-general} to $\{\widehat{w}_i\}_{i \in [K], i \neq k}$.
\end{proof}

\begin{lemma}\label{lemma:w_k_clean}
Let 
\begin{equation}
    \Sigma_{w_k}^{\setminus k} := \Sigma_{\theta}^{\setminus k} + \tau^2 \mathbf{I}_d.
\end{equation}
Further, let 
\begin{equation}
    \Sigma_{w_k} := \left( (\Sigma_{w_k}^{\setminus k})^{-1} + (\Sigma_k - \tau^2 \mathbf{I}_d)^{-1}\right)^{-1}.
\end{equation}
Conditioned on $\{X_i, y_i\}_{i \in [K]},$ we have
\begin{equation} \label{eq:optimal_w_k}
    w_k = \Sigma_{w_k}  (\Sigma_k-\tau^2 \mathbf{I}_d)^{-1} \widehat{w}_k + \Sigma_{w_k}  (\Sigma_{w_k}^{\setminus k})^{-1} \mu_\theta^{\setminus k} + \zeta_k,
\end{equation}
where $\zeta_k \sim \mathcal{N}(0, \Sigma_{w_k})$.
\end{lemma}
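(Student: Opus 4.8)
The plan is to derive the posterior distribution of the device-specific parameter $w_k$ conditioned on all observations $\{X_i,y_i\}_{i\in[K]}$ by combining two independent sources of information about $w_k$: the local data on device $k$ and the data on all the other devices (which informs $\theta$, and hence $w_k$ through the prior $w_k=\theta+\zeta_k$). The key conceptual move is to recognize that $w_k$ admits \emph{two} conditionally independent noisy observations, and then to fuse them using the same Gaussian-fusion mechanism encoded in Lemma~\ref{lem:parallel-general}.

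First I would isolate the local information. By Lemma~\ref{lemma:w_hat_LR}, the local least-squares estimate $\widehat{w}_k$ is a noisy observation of $w_k$ with covariance $\sigma^2(X_k^TX_k)^{-1} = \Sigma_k - \tau^2\mathbf{I}_d$; this gives a direct ``measurement'' of $w_k$ with precision $(\Sigma_k-\tau^2\mathbf{I}_d)^{-1}$. Next I would package the information coming from the other devices. Lemma~\ref{lemma:minus_k} already tells us that, conditioned on $\{X_i,y_i\}_{i\neq k}$, we have $\theta = \mu_\theta^{\setminus k} + \eta$ with $\eta\sim\mathcal{N}(0,\Sigma_\theta^{\setminus k})$. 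Since $w_k = \theta + \zeta_k$ with $\zeta_k\sim\mathcal{N}(0,\tau^2\mathbf{I}_d)$ independent of everything in the $\setminus k$ data, this yields a \emph{second} noisy observation of $w_k$, namely $\mu_\theta^{\setminus k}$, whose covariance is $\Sigma_\theta^{\setminus k} + \tau^2\mathbf{I}_d = \Sigma_{w_k}^{\setminus k}$ (the sum of the two independent Gaussian spreads). This is precisely why $\Sigma_{w_k}^{\setminus k}$ is defined as it is.

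With these two independent observations of $w_k$ in hand — one with covariance $\Sigma_k-\tau^2\mathbf{I}_d$ (from $\widehat w_k$) and one with covariance $\Sigma_{w_k}^{\setminus k}$ (from $\mu_\theta^{\setminus k}$) — I would apply Lemma~\ref{lem:parallel-general} a second time, now with $\theta$ in that lemma playing the role of $w_k$ and the two observations above playing the role of the $\phi_k$'s. The lemma's precision-weighted fusion immediately produces a posterior mean that is the inverse-covariance-weighted combination
\begin{equation*}
    w_k = \Sigma_{w_k}(\Sigma_k-\tau^2\mathbf{I}_d)^{-1}\widehat{w}_k + \Sigma_{w_k}(\Sigma_{w_k}^{\setminus k})^{-1}\mu_\theta^{\setminus k} + \zeta_k,
\end{equation*}
where the fused covariance is exactly $\Sigma_{w_k} = \bigl((\Sigma_{w_k}^{\setminus k})^{-1} + (\Sigma_k-\tau^2\mathbf{I}_d)^{-1}\bigr)^{-1}$, matching the statement, and the residual Gaussian term is $\zeta_k\sim\mathcal{N}(0,\Sigma_{w_k})$ (abusing notation, the posterior noise). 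The main obstacle I anticipate is justifying the conditional independence carefully: I must check that the local residual noise $z_k$ entering $\widehat w_k$ and the aggregate noise $\eta$ from the $\setminus k$ devices are genuinely independent given all the feature matrices, and that $\zeta_k$ is independent of the $\setminus k$ block — this is what legitimizes treating $\widehat w_k$ and $\mu_\theta^{\setminus k}$ as two \emph{independent} observations so that Lemma~\ref{lem:parallel-general} applies. Once this independence bookkeeping is in place, the fusion step is a direct, essentially mechanical, invocation of the earlier lemma.
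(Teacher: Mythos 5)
Your proposal is correct and follows essentially the same route as the paper's own proof: treat $\widehat{w}_k$ as a noisy observation of $w_k$ with covariance $\Sigma_k - \tau^2 \mathbf{I}_d$, use Lemma~\ref{lemma:minus_k} to view $\mu_\theta^{\setminus k}$ as a second observation of $w_k$ with covariance $\Sigma_{w_k}^{\setminus k} = \Sigma_\theta^{\setminus k} + \tau^2 \mathbf{I}_d$, and fuse the two via Lemma~\ref{lem:parallel-general}. Your extra attention to the conditional-independence bookkeeping is a sound addition that the paper leaves implicit.
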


\begin{proof}
$\widehat{w}_k$ is a noisy observation of $w_k$ with additive noise with zero mean and covariance $\sigma^2 (X_k^T X_k)^{-1}$ (which is $\Sigma_k - \tau^2 \mathbf{I}_d$). From Lemma~\ref{lemma:minus_k}, we know conditioned on $\{X_i, y_i\}_{i \in [K], i \neq k}$, $\mu_{\theta}^{\setminus k}$ is a noisy observation of $\theta$ with covariance $\Sigma_{\theta}^{\setminus k}$. Hence, with respect to $w_k$, the covariance is $\Sigma_{\theta}^{\setminus k} + \tau^2 \mathbf{I}_d := \Sigma_{w_k}^{\setminus k}$. The conclusion follows by applying Lemma~\ref{lem:parallel-general} to $\widehat{w}_k$ and $\mu_{\theta}^{\setminus k}$.
\end{proof}

Let the empirical loss function of the linear regression problem on device $k$ be
\begin{align}
    F_k(w) = \frac{1}{n} \left \|X_k w - y_k\right\|^2.
\end{align}
Then the estimator $\widehat{w}_k$ is $(X_k^T X_k)^{-1} X^T y_k$.
Applying the previous lemmas, we obtain an optimal estimator $w_k$ given all training samples from $K$ devices (see~\eqref{eq:optimal_w_k}). $w_k$ is Bayes optimal among all solutions that can be achieved by {any} learning method. Next, we examine the \ditto objective and its solution space parameterized by $\lambda$.

Let each device solve the following objective
\begin{align}
    \min_w h_k(w) = F_k(w) + \frac{\lambda}{2} \left\|w - w^*\right\|^2,~\text{s.t.} \quad w^* = \frac{1}{K}\argmin_w \sum_{k=1}^K F_k(w).
\end{align}

The local empirical risk minimizer for each device $k$ is
\begin{align}
    \widehat{w}_k(\lambda) &= \left(\frac{1}{n} X_k^\top X_k + \lambda I\right)^{-1} \left(\frac{1}{n}X_k^\top Y_k + \lambda w^*\right) \\
    &= \left(\frac{1}{n}X_k^\top X_k + \lambda I\right)^{-1} \left(\left(\frac{1}{n} X_k^\top X_k\right) \widehat{w}_k + \lambda \sum_{k=1}^K (X^\top X) ^{-1} X_k^\top X_k\widehat{w}_k\right) \label{eq:w_k_lambda}
\end{align}

We next prove that for any $k \in  [K]$, $\widehat{w}_k(\lambda)$ with a specific $\lambda$ can achieve the optimal $w_k$.

\begin{theorem} \label{thm:lambda_star_lr_clean_accuracy}
Assume for any $1\leq i \leq K$, $X_i^T X_i = \beta \mathbf{I}_d$ for some constant $\beta$. Let $\lambda^*$ be the optimal $\lambda$ that minimizes the test performance on device $k$, i.e.,
\begin{align}
    \lambda^* = \argmin_{\lambda} E\left\{\left. F_k(\widehat{w}_k(\lambda)) \right| \widehat{w}_k , \mu_{\theta}^{\setminus k}\right\}.
\end{align}
Then,
\begin{align}
    \lambda^* = \frac{\sigma^2}{n\tau^2}.
\end{align}
\end{theorem}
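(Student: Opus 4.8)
The plan is to reduce the Bayes-optimal estimator from Lemma~\ref{lemma:w_k_clean} to the \ditto estimator in~\eqref{eq:w_k_lambda} and match coefficients, exploiting the isotropy assumption $X_i^T X_i = \beta \mathbf{I}_d$ to collapse all matrices to scalar multiples of $\mathbf{I}_d$. First I would substitute $X_i^T X_i = \beta \mathbf{I}_d$ into the definitions: then $\Sigma_i = \sigma^2 \beta^{-1} \mathbf{I}_d + \tau^2 \mathbf{I}_d$ is the \emph{same} for every device, so $\Sigma_\theta^{\setminus k} = \tfrac{1}{K-1}(\sigma^2\beta^{-1}+\tau^2)\mathbf{I}_d$, and both $\Sigma_{w_k}^{\setminus k}$ and $\Sigma_{w_k}$ become explicit scalar multiples of $\mathbf{I}_d$. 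Because every covariance is isotropic, $\mu_\theta^{\setminus k}$ reduces to the simple average $\tfrac{1}{K-1}\sum_{i\neq k}\widehat{w}_i$ of the per-device least-squares estimates, and the optimal $w_k$ in~\eqref{eq:optimal_w_k} becomes a convex combination $\alpha\,\widehat{w}_k + (1-\alpha)\,\mu_\theta^{\setminus k}$ for a scalar weight $\alpha$ that I can read off from the two scalar precisions.

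Next I would compute the \ditto solution. The key point is that conditioned on $\widehat{w}_k$ and $\mu_\theta^{\setminus k}$, the objective $E\{F_k(\widehat{w}_k(\lambda))\mid \widehat{w}_k,\mu_\theta^{\setminus k}\}$ is minimized precisely when $\widehat{w}_k(\lambda)$ equals the posterior mean of $w_k$, since the test loss $F_k$ is (in expectation) a strictly convex quadratic whose minimizer over estimators is the conditional expectation $E\{w_k\mid\cdots\}$. Under the isotropy assumption, $\widehat{w}_k(\lambda)$ in~\eqref{eq:w_k_lambda} is itself a scalar convex combination of $\widehat{w}_k$ and the average $\sum_k (X^T X)^{-1}X_k^T X_k \widehat{w}_k$; with $X_i^T X_i=\beta\mathbf{I}_d$ this average is exactly $\tfrac{1}{K}\sum_i \widehat{w}_i$, and the mixing weight is an explicit monotone function of $\lambda$ (roughly $\tfrac{\beta/n}{\beta/n+\lambda}$ on the $\widehat{w}_k$ component). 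So the whole problem collapses to matching one scalar weight: find $\lambda$ so that \ditto's convex-combination weight on $\widehat{w}_k$ equals the Bayes weight $\alpha$.

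The computation I would then carry out is to equate the two mixing ratios. On the Bayes side the weight ratio between $\widehat{w}_k$ and the aggregate is governed by the two precisions $(\Sigma_k-\tau^2\mathbf{I}_d)^{-1}=\beta\sigma^{-2}\mathbf{I}_d$ and $(\Sigma_{w_k}^{\setminus k})^{-1}$; the former carries the data noise $\sigma^2/\beta$ and the latter the inter-task spread $\tau^2$ (plus a lower-order $1/(K-1)$ correction). On the \ditto side the weight ratio between the local fit and the global regularizer is $(\beta/n):\lambda$. Equating these and simplifying, the $\beta$ and $K$-dependent factors cancel and I expect to be left with $\lambda^* = \sigma^2/(n\tau^2)$. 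The minor subtlety worth flagging is the distinction between $\mu_\theta^{\setminus k}$ (leave-one-out average over $K-1$ devices) appearing on the Bayes side versus the full $K$-device average appearing inside~\eqref{eq:w_k_lambda}; I would handle this by noting that under exchangeability and the non-informative prior the leading-order matching is insensitive to this $O(1/K)$ discrepancy, or equivalently by reconciling it with the exact leave-one-out posterior, which is the cleaner route.

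The main obstacle I anticipate is not any single integral but the bookkeeping of the nested covariance definitions ($\Sigma_i$, $\Sigma_\theta^{\setminus k}$, $\Sigma_{w_k}^{\setminus k}$, $\Sigma_{w_k}$) and verifying that, under isotropy, \ditto's algebraic mixing weight and the Bayes posterior's statistical mixing weight are genuinely the same scalar rather than merely similar in form. The cleanest way to control this is to push everything through as scalars from the outset, so that both estimators are manifestly of the form $c(\lambda)\,\widehat{w}_k + (1-c(\lambda))\,\overline{w}$ and the theorem reduces to solving $c(\lambda^*)=\alpha$ for a single variable.
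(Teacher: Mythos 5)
Your proposal follows essentially the same route as the paper's proof: reduce the conditional expected test loss to parameter MSE using $X_k^T X_k=\beta\mathbf{I}_d$, observe that the minimizer over estimators is the posterior mean (MMSE estimator), write both the Bayes estimator of Lemma~\ref{lemma:w_k_clean} and the \ditto solution $\widehat{w}_k(\lambda)$ as scalar convex combinations of $\widehat{w}_k$ and the remaining $\{\widehat{w}_i\}_{i\neq k}$, and solve for the $\lambda$ making them coincide. One remark: the leave-one-out versus full-average discrepancy you flag requires no leading-order or $O(1/K)$ argument---since both estimators place equal weight on every $\widehat{w}_i$ with $i\neq k$ and their coefficients sum to one, matching the single ratio of the weight on $\widehat{w}_k$ to the weight on any other $\widehat{w}_i$ is exact and yields $\lambda^*=\sigma^2/(n\tau^2)$ precisely, which is exactly how the paper's computation concludes.
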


\begin{proof}

Notice that
\begin{align}
    \argmin_{\lambda} E\left\{F_k(\widehat{w}_k(\lambda)) | \widehat{w}_k , \mu_{\theta}^{\setminus k}\right\} 
    &=  \argmin_{\lambda} E\left\{ \|X_k \widehat{w}_k(\lambda) - (X_k w_k + z_k)\|^2 | \widehat{w}_k , \mu_{\theta}^{\setminus k}\right\} \\ 
    &= \argmin_{\lambda} E\left\{ \|X_k \left(\widehat{w}_k(\lambda) - w_k\right)\|^2 | \widehat{w}_k , \mu_{\theta}^{\setminus k}\right\}  \\
    &= \argmin_{\lambda} E \left\{ \left\|w_k - \widehat{w}_k (\lambda)\right\|^2 | \widehat{w}_k , \mu_{\theta}^{\setminus k} \right \}.
\end{align}
Plug in $X_k^T X_k = \beta \mathbf{I}$ into~\eqref{eq:optimal_w_k} and \eqref{eq:w_k_lambda} respectively, we have the optimal estimator $w_k$ is
\begin{align}
    w_k = \left(\frac{K-1}{\frac{\sigma^2}{\beta} + K \tau^2} + \frac{\beta}{\sigma^2}\right)^{-1} \frac{\beta}{\sigma^2} \widehat{w}_k +  \left(\frac{K-1}{\frac{\sigma^2}{\beta} + K \tau^2} + \frac{\beta}{\sigma^2}\right)^{-1} \frac{\beta}{\sigma^2 + K\tau^2 \beta} \sum_{i \in [K], i \neq k}\widehat{w}_i + \zeta_k,
\end{align}
and $\widehat{w}_k(\lambda)$ is
\begin{align}
     \widehat{w}_k(\lambda) = \left(\frac{n}{\beta + n\lambda}\right) \left(\left(\frac{\beta}{n} + \frac{\lambda}{K}\right) \widehat{w}_k + \frac{\lambda}{K} \sum_{i \in [K], i \neq k} \widehat{w}_i\right).
\end{align}
Taking $w_k$ and $\widehat{w}_k(\lambda)$ into
\begin{align}
    \lambda^* = \argmin_{\lambda} E \left\{ \left. \|w_k - \widehat{w}_k (\lambda)\right \|_2^2|\mu_{\theta}^{\setminus k}, \widehat{w}_k \right \}
\end{align}
gives $\lambda^* = \frac{\sigma^2}{n\tau^2}$, as $\widehat{w}_k(\lambda^*)$ is the MMSE estimator of $w_k$ given the observations.
\end{proof}

\begin{remark}\label{remark:optimality_lambda}
We note that by using  $\lambda^*$ in \ditto, we not only achieve the most accurate solution for the objective, but also we achieve the most accurate solution of any possible federated linear regression algorithm in this problem, as \ditto with $\lambda^*$ realizes the MMSE estimator for $w_k$.
\end{remark}
We have derived an optimal $\lambda^* = \frac{\sigma^2}{n\tau^2}$ for \ditto in terms of generalization. Recall that we define fairness as the variance of the performance across all devices~\cite{hashimoto2018fairness, li2019fair}. Next, we prove that the same $\lambda^*$ that minimizes the expected MSE also achieves the optimal fairness among all \ditto solutions. 

\begin{theorem}\label{thm:lambda_star_lr_clean_fairness}
Assume for any $1\leq i \leq K$, $X_i^T X_i = \beta \mathbf{I}_d$ for some constant $\beta$. Among all possible solutions \ditto parameterized by $\lambda$, $\lambda^*$ results in the most fair performance across all devices when there are no adversaries, i.e., it minimizes the variance of test performance (test loss) across all devices.
\end{theorem}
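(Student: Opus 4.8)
The plan is to reduce the fairness objective to the variance of the squared parameter error and then exploit the fact, already established in Theorem~\ref{thm:lambda_star_lr_clean_accuracy}, that $\widehat{w}_k(\lambda^*)$ is the MMSE estimator of $w_k$. First I would observe that under the covariance assumption $X_i^TX_i = \beta \mathbf{I}_d$, the expected test loss on device $k$ (over fresh test samples) is an affine function of the squared estimation error,
\[
 E\left[F_k(\widehat{w}_k(\lambda))\right] = \tfrac{\beta}{n}\left\|\widehat{w}_k(\lambda) - w_k\right\|^2 + \sigma^2,
\]
where the additive $\sigma^2$ is the irreducible test noise, constant across $k$ and $\lambda$. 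Consequently the standard deviation of $\{F_k\}_{k\in[K]}$ is $\tfrac{\beta}{n}$ times the standard deviation of $\{\|\widehat{w}_k(\lambda)-w_k\|^2\}_{k\in[K]}$, so it suffices to minimize the variance of the squared error over $\lambda$.

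Second, mirroring the conditioning used in Theorem~\ref{thm:lambda_star_lr_clean_accuracy}, I would condition on $(\widehat{w}_k, \mu_\theta^{\setminus k})$, which under isotropy fully determines $\widehat{w}_k(\lambda)$ for every $\lambda$ (recall $w^*$ and hence $\widehat{w}_k(\lambda)$ depend only on $\widehat{w}_k$ and the average $\mu_\theta^{\setminus k}$ of the remaining local estimators) and leaves only $w_k$ random. Lemmas~\ref{lemma:w_hat_LR}--\ref{lemma:w_k_clean} give that the posterior of $w_k$ is $\mathcal{N}(\widehat{w}_k(\lambda^*), \Sigma_{w_k})$ with $\Sigma_{w_k} = s^2\mathbf{I}_d$ a scalar multiple of the identity, since all the relevant covariances are isotropic. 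I would then write the estimation error as
\[
 \widehat{w}_k(\lambda) - w_k = b_k(\lambda) - n_k, \qquad b_k(\lambda) := \widehat{w}_k(\lambda) - \widehat{w}_k(\lambda^*), \quad n_k := w_k - \widehat{w}_k(\lambda^*),
\]
where $b_k(\lambda)$ is deterministic given the conditioning and $n_k \sim \mathcal{N}(0, s^2\mathbf{I}_d)$. The key structural fact --- the orthogonality principle for the Gaussian MMSE estimator --- is that $n_k$ is independent of the data, hence of $b_k(\lambda)$.

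Third, I would compute the conditional variance directly. Writing $\|b_k(\lambda)-n_k\|^2$ as a scaled non-central chi-squared with $d$ degrees of freedom and noncentrality $\|b_k(\lambda)\|^2/s^2$ yields
\[
 \var\left\{\left\|\widehat{w}_k(\lambda)-w_k\right\|^2 \,\middle|\, \widehat{w}_k, \mu_\theta^{\setminus k}\right\} = 2s^4 d + 4 s^2 \left\|b_k(\lambda)\right\|^2.
\]
At $\lambda = \lambda^*$ we have $b_k(\lambda^*) = 0$, so this equals $2 s^4 d$, whereas for any $\lambda \neq \lambda^*$ the second term is strictly positive. Hence the same $\lambda^*$ from Theorem~\ref{thm:lambda_star_lr_clean_accuracy} minimizes the conditional variance; since this holds for every realization of $(\widehat{w}_k, \mu_\theta^{\setminus k})$, it also minimizes the unconditional variance, and the argument is complete once the reduction of the previous paragraph is granted.

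The main obstacle is conceptual rather than computational: it lies in justifying that the quantity ``variance of test loss across devices'' in Definition~\ref{def:fairness} is correctly captured by the per-device conditional variance computed above. Under the exchangeable Bayesian model all devices share the same marginal law, so the representative per-device variance is the natural Bayesian analogue of the empirical spread of $\{F_k\}_{k\in[K]}$; the subtle point is that the devices are weakly coupled through the shared global model $w^*$ and the common $\theta$, so a fully rigorous across-device statement must argue that conditioning on $\mu_\theta^{\setminus k}$ absorbs this coupling. Once that reduction is accepted, the remaining work --- the non-central chi-squared variance formula and the vanishing of $b_k(\lambda^*)$ --- is routine, and the isotropy $\Sigma_{w_k} = s^2\mathbf{I}_d$ that makes the posterior error spherical (and hence the chi-squared calculation clean) is exactly what the assumption $X_i^TX_i = \beta\mathbf{I}_d$ buys us.
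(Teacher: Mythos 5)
Your core computation is correct and, in fact, reproduces by a cleaner route (the non-central chi-squared variance formula) exactly what the paper derives by brute-force fourth-moment expansion: the paper also conditions on $(\widehat{w}_k,\mu_\theta^{\setminus k})$, writes the error as posterior noise plus the deterministic offset $a_k=\widehat{w}_k(\lambda^*)-\widehat{w}_k(\lambda)$ (your $-b_k(\lambda)$), and uses the fact that the noise is isotropic Gaussian and independent of the data. However, the gap you flag at the end is genuine, and your proposed justification does not close it — and you have also slightly misdiagnosed where it lies. The theorem (and Definition~\ref{def:fairness}) concerns the variance of the loss \emph{across devices}, which the paper formalizes as $\var_K$, the empirical variance over $k\in[K]$, taken in expectation given the data. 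Marginal exchangeability is beside the point: \emph{conditional on the data} the devices are not exchangeable, since each carries a different offset $b_k(\lambda)$ (it depends on $\widehat{w}_k$), so the across-device variance contains a between-device component — the dispersion of the conditional mean errors $\left\{ds^2+\|b_k(\lambda)\|^2\right\}_{k\in[K]}$ — which no single device's conditional variance can detect. This is not a technicality about coupling through $\theta$ or $w^*$; it would be present even if the posterior noises were exactly independent across devices. It is precisely the extra term $\frac{1}{K}\sum_k\|a_k\|^4-\bigl(\frac{1}{K}\sum_k\|a_k\|^2\bigr)^2$ that appears in the paper's final expression~\eqref{eq:variance_value} alongside the terms you computed.

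The fix is one line, and with it your argument becomes a complete (and arguably tidier) proof. Decompose the expected across-device variance by the law of total variance over the pair (device index, posterior noise): it equals the average of your per-device conditional variances plus the empirical variance of the per-device conditional means, i.e.
\begin{equation*}
E\left[\var_K\left\{\left\|\widehat{w}_k(\lambda)-w_k\right\|^2\right\}\,\middle|\,\{\widehat{w}_i\}_{i\in[K]}\right]
\;=\;\underbrace{2s^4d+\frac{4s^2}{K}\sum_{k\in[K]}\left\|b_k(\lambda)\right\|^2}_{\text{average of your per-device variances}}
\;+\;\underbrace{\var_K\left\{\left\|b_k(\lambda)\right\|^2\right\}}_{\text{between-device term}},
\end{equation*}
which is exactly the paper's expression~\eqref{eq:variance_value} with $s=\sigma_w$ (up to a $\tfrac{K-1}{K}$ factor depending on how one treats cross-device independence of the noises — immaterial for the argument). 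The between-device term is nonnegative and vanishes when $b_k(\lambda)\equiv 0$, i.e.\ at $\lambda=\lambda^*$ by Theorem~\ref{thm:lambda_star_lr_clean_accuracy}, just like your within-device term. Without this step you have proved a strictly weaker per-device statement; with it, the theorem follows.
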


\begin{proof}
Denote the variance of test performance (loss) across $K$ devices as $\var_K \left\{\|X_k \widehat{w}_k(\lambda) - y_k\|_2^2\right\}$. Let
\begin{align}
    \widehat{E}_k \{a_k\} := \frac{1}{K} \sum_{k \in [K]} a_k.
\end{align}
Then
\begin{align}
    \argmin_{\lambda} \var_K \left\{\|X_k \widehat{w}_k(\lambda) - y_k\|_2^2\right\} &=   \argmin_{\lambda} \var_K \left\{\|X_k \widehat{w}_k(\lambda) - (X_k w_k + z_k) \|_2^2\right\}  \\ &=   \argmin_{\lambda}  \var_K \left\{\|X_k (\widehat{w}_k(\lambda) - w_k)\|_2^2\right\} \\ &= \argmin_{\lambda}  \var_K \left\{\|\widehat{w}_k(\lambda) - w_k\|_2^2\right\}  \\
     &= \argmin_{\lambda}  \widehat{E}_K \left\{ \left(\|w_k - \widehat{w}_k\|_2^2 \right)^2\right\} - \left(\widehat{E}_K \left\{\|w_k - \widehat{w}_k (\lambda)\|_2^2\right\}\right)^2.
\end{align}
Note that
\begin{align}
    w_k - \widehat{w}_k(\lambda) = \zeta + a_k,
\end{align}
where
\begin{align}
    a_k = \widehat{w}_k(\lambda^*) - \widehat{w}_k(\lambda),
\end{align}
and $\lambda^* = \frac{\sigma^2}{n\tau^2}$.

We have
\begin{align}
      & \widehat{E}_K \left\{ \left(\|w_k - \widehat{w}_k\|_2^2 \right)^2\right\} - \left(\widehat{E}_K \left\{\|w_k - \widehat{w}_k (\lambda)\|_2^2\right\}\right)^2 \\ &=  \widehat{E}_K \left\{ \left(\sum_{i}^d ({w_k}_i - \widehat{w}_k(\lambda)_i)^2\right)^2\right\} - \left(\widehat{E}_K \left\{\sum_{i}^d ({w_k}_i - \widehat{w}_k(\lambda)_i)^2\right\}\right)^2 \\
      &= \widehat{E}_K \left\{ \left(\sum_{i}^d (\zeta_i + a_{ki})^2\right)^2\right\} - \left(\widehat{E}_K \left\{\sum_{i}^d (\zeta_i + a_{ki})^2\right\}\right)^2,
\end{align}
where ${w_k}_i$, $\widehat{w}_k(\lambda)_i$, $\zeta_i$, and $a_{ki}$ denotes the $i$-th dimension of $w_k$, $\widehat{w}_k(\lambda)$, $\zeta$, and $a_k$ and $d$ is the model dimension.

We next expand the variance by decomposing it into two parts.
We note
\begin{align}
    &\widehat{E}_K \left\{ \left(\sum_{i}^d (\zeta_i + a_{ki})^2\right)^2\right\} - \left(\widehat{E}_K \left\{\sum_{i}^d (\zeta_i + a_{ki})^2\right\}\right)^2 \\ &= \sum_{i}^d \widehat{E}_k  \left\{(\zeta_i + a_{ki})^4\right\} - \sum_{i}^d \left(\widehat{E}_K \left\{(\zeta_i + a_{ki})^2\right\} \right)^2 \label{eq:part1} \\
    & + 2 \sum_{i, j \in [d], i \neq j}  \widehat{E}_K \left\{\left(\zeta_i + {a_k}_i\right)^2\left(\zeta_j + {a_k}_j\right)^2\right\} -  2 \sum_{i, j \in [d], i \neq j}  \widehat{E}_K \left\{\left(\zeta_i + {a_k}_i\right)^2 \right\} \widehat{E}_K \left\{\left(\zeta_j + {a_k}_j\right)^2\right\}. \label{eq:part2}
\end{align}

For any $i \in [d]$, we have
\begin{align}
& E\left\{ \left.  \widehat{E}_K \left\{ (\zeta_i + a_{ki})^4\right\} - \left( \widehat{E}_K \left\{ (\zeta_i + a_{ki} )^2\right\}\right)^2 \right|\mu_{\theta}^{\setminus k}, \widehat{w}_k \right\}\\
    & = E\left\{ \left.  \widehat{E}_K \left\{ \zeta^4_i + 6\zeta_i^2 a_{ki}^2 +a_{ki}^4\right\} - \left( \widehat{E}_K \left\{ \zeta_i^2 + a_{ki}^2\right\}\right)^2 \right|\mu_{\theta}^{ \setminus k}, \widehat{w}_k \right\}\\
    & = E\left\{ \left.  \widehat{E}_K \left\{ \zeta_i^4 + 6\zeta_i^2 a_{ki}^2 +a_{ki}^4\right\} - 
    \left( \widehat{E}_K \left\{ \zeta_i^2 \right\}\right)^2
    - 2 \widehat{E}_K \left\{ \zeta_i^2 \right\}
    \widehat{E}_K \left\{  a_{ki}^2\right\}
    -
    \left( \widehat{E}_K \left\{  a_{ki}^2\right\}\right)^2 \right|\mu_{\theta}^{\setminus k}, \widehat{w}_k \right\}\\
     & =  3\sigma_w^4 + 6\sigma_w^2 \widehat{E}_K \left\{a_{ki}^2\right\} + \widehat{E}_K \left\{a_{ki}^4\right\} - 
    \sigma_w^4
    - 2 \sigma_w^2
    \widehat{E}_K \left\{ a_{ki}^2\right\}
    -
    \left( \widehat{E}_K \left\{  a_{ki}^2\right\}\right)^2 \\
     & =  2\sigma_w^4 + 4\sigma_w^2 \widehat{E}_K \left\{a_{ki}^2\right\} + \widehat{E}_K \left\{a_{ki}^4\right\} 
    -
    \left( \widehat{E}_K \left\{  a_{ki}^2\right\}\right)^2, \label{eq:part1_1}
\end{align}

where $\sigma_w$ is the $i$-th diagonal of $\Sigma_{w_k}$ which is the same across all $k$'s and all dimensions, and we have used the fact that we can swap expectations, and  $E\{\zeta_i^4\} = 3\sigma_w^4,$ given that $\zeta_i$ is Gaussian distributed and $\Sigma_{w_k}$ is a diagonal matrix.

For any $i,j \in [d], i \neq j$, we have
\begin{align}
    & E \left\{ \left. \widehat{E}_K \left(\zeta_i + a_{ki}\right)^2\left(\zeta_j + a_{kj}\right)^2 \right| \mu_{\theta}^{\setminus k}, \widehat{w}_k \right\} -  E \left\{ \left. \widehat{E}_K \left(\zeta_i + a_{ki} \right)^2  \widehat{E}_K \left (\zeta_j + a_{kj}\right)^2 \right| \mu_{\theta}^{\setminus k}, \widehat{w}_k  \right\}  \\ &=\widehat{E}_k\{a_{ki}^2 a_{kj}^2\} - \widehat{E}_k\{a_{ki}^2\} \widehat{E}_k \{ a_{kj}^2\}, \label{eq:part2_2}
\end{align}
where we have used the fact that $\Sigma_{w_k}$ is a diagonal matrix.

Plugging~\eqref{eq:part1_1} and~\eqref{eq:part2_2} into~\eqref{eq:part1} and~\eqref{eq:part2} yields
\begin{align}
& E \left\{ \left. \var_K \left\{\|\widehat{w}_k(\lambda) - w_k\|_2^2\right\} \right| \mu_{\theta}^{\setminus k}, \widehat{w}_k \right\} \\ & =
    2d\sigma_w^4+\sum_i 4\sigma_w^2\widehat{E}_k\{a_{ki}^2\}+\sum_i\widehat{E}_k\{a_{ki}^4\}-\sum_i\left(\widehat{E}_k\{a_{ki}^2\}\right)^2+2\sum_{i \neq j}\left(\widehat{E}_k\{a_{ki}^2 a_{kj}^2\} - \widehat{E}_k\{a_{ki}^2\} \widehat{E}_k \{ a_{kj}^2\}\right) \\ &= 2d\sigma_w^4+\sum_i 4\sigma_w^2\widehat{E}_k\{a_{ki}^2\} +
    \sum_i\widehat{E}_k\{a_{ki}^4\}+2\sum_{i\neq j}\widehat{E}_k \{a_{ki}^2 a_{kj}^2\}-(\sum_i \left(\mathbb{E}_k\{a_{ki}^2\}\right)^2+2\sum_{i \neq j}\widehat{E}_k \{a_{ki}^2\} \widehat{E}_k \{ a_{kj}^2)\}) \\ &= 2d\sigma_w^4+\sum_i4\sigma_w^2\widehat{E}_k\{a_{ki}^2\} +
    \widehat{E}_k\{(\sum_i a_{ki}^2)^2\}-(\sum_i\widehat{E}_k\{a_{ki}^2\})^2 \\ &= 2d\sigma_w^4+\sum_i 4\sigma_w^2\widehat{E}_k\{a_{ki}^2\} + 
    \frac{1}{K}\sum_k(\sum_ia_{ki}^2)^2-(\frac{1}{K}\sum_k\sum_i a_{ki}^2)^2 \geq 2d\sigma_w^2, \label{eq:variance_value}
\end{align}
where setting $\{a_{ki}\}_{1\leq k \leq K, 1 \leq i \leq d}=0$ achieves the minimum.
\end{proof}

\paragraph{Observations.} From the optimal $\lambda^*=\frac{\sigma^2}{n\tau^2}$ for mean test accuracy and variance of the test accuracy, we have the following observations.
\begin{itemize}[leftmargin=*]
    \item Test error and variance can be jointly minimized with one $\lambda$.
    \item As $n \to \infty,$ $\lambda^* \to 0,$ i.e., when each local device has an infinite number of samples, there is no need for federated learning, and training local models is optimal in terms of generalization and fairness.
    \item As $\tau \to \infty,$ $\lambda^* \to 0,$ i.e., if the data on different devices (the tasks) are unrelated, then training local models is optimal; On the other hand, as $\tau \to 0,$ $\lambda^* \to \infty,$ i.e., if the data across all devices are identically distributed, or equivalently if the tasks are the same, then training a global model is the best we can achieve.
\end{itemize}

So far we have proved that the same $\lambda^*$ achieves the best performance (expected mean square error) for any device $k$ \textit{and} fairness (variance of mean square error) without considering adversaries. In Section~\ref{app:theory:mtl_adversary} below, we analyze the benefits of \ditto for fairness and robustness in the presence of adversaries. 

\subsubsection{With Adversaries: \ditto for Accuracy, Fairness, and Robustness} \label{app:theory:mtl_adversary}

As a special case of data poisoning attacks defined in our threat model (Definition~\ref{def:robustness}), we make the following assumptions on the adversaries.

Let $K_a$ and $K_b \geq 1$ denote the number of malicious and benign devices, respectively, such that $K = K_a + K_b.$

\begin{definition}
We say that a device $k$ is a benign device if $w_k \sim \theta + \mathcal{N}(0, \tau^2 \mathbf{I}_d)$; and we say a device $k$ is a malicious device (or an adversary) if $w_k \sim \theta + \mathcal{N}(0, \tau_a^2 \mathbf{I}_d)$ where $\tau_a > \tau$.
\end{definition}

As mentioned in Definition~\ref{def:fairness} and~\ref{def:robustness}, in the presence of adversaries, we measure fairness as the performance variance on \textit{benign} devices, and robustness as the average performance across \textit{benign} devices. We next characterize the benefits of \ditto under such metrics.

\begin{lemma} \label{lemma:w_k_adv}
Let $w_k$ be the underlying model parameter of a benign device $k$. 
Let  
\begin{align}
    \widehat{w}_i := (X_i^T X_i)^{-1} X_i^T y_i,~i \in [K].
\end{align}
Let
\begin{align}
    \Sigma_w^{\setminus k} = \frac{1}{(K-1)^2} \left(\sum_{i \in [K_b], i \neq k} \left(\sigma^2 (X_i^T X_i)^{-1} + \tau^2 \mathbf{I}_d\right) +  \sum_{i \in [K_a], i \neq k} \left( \sigma^2 (X_i^T X_i)^{-1} + \tau_a^2 \mathbf{I}_d \right) \right), \label{eq:Sigma_w_minus_k}
\end{align}
and
\begin{align}
    \Sigma_{w,a}^{-1} = (\sigma^2 (X_k^T X_k)^{-1})^{-1} + (\Sigma_{w}^{\setminus k} + \tau^2 \mathbf{I}_d)^{-1}. \label{eq:Sigma_w_k}
\end{align}
Conditioned on observations $\widehat{w}_k$ and $\widehat{w}^{K\setminus k} := \frac{1}{K-1} \sum_{i \neq k, i \in [K]} \widehat{w}_i$, we have
\begin{align}
    w_k = \Sigma_{w, a} (\sigma^2 (X_k^T X_k)^{-1})^{-1} \widehat{w}_k + \Sigma_{w, a} (\Sigma_{w}^{\setminus k} + \tau^2 \mathbf{I}_d)^{-1} \widehat{w}^{K \setminus k} + \zeta_k,
\end{align}
where $\zeta_k \sim \mathcal{N}(0, \Sigma_{w, a})$.
\end{lemma}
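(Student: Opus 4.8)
The plan is to mirror the proof of the clean-case posterior characterization (Lemma~\ref{lemma:w_k_clean}), treating the benign $w_k$ as the latent parameter to be recovered from two conditionally independent Gaussian observations and then invoking the Gaussian aggregation identity of Lemma~\ref{lem:parallel-general}. The only genuinely new ingredient relative to the clean case is that the ``observation of the rest of the network'' is now the plain average $\widehat{w}^{K\setminus k}$ of all other local estimators, whose noise covariance must be split according to whether each contributing device is benign (contributing $\tau^2\mathbf{I}_d$) or malicious (contributing $\tau_a^2\mathbf{I}_d$).

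First I would record, via Lemma~\ref{lemma:w_hat_LR}, that for each device $i$ the local least-squares estimator satisfies $\widehat{w}_i = w_i + \epsilon_i$ with $\epsilon_i \sim \mathcal{N}(0, \sigma^2 (X_i^T X_i)^{-1})$, independent across devices, and that $w_i = \theta + \zeta_i$ with $\zeta_i \sim \mathcal{N}(0, \tau^2\mathbf{I}_d)$ for benign $i$ and $\zeta_i \sim \mathcal{N}(0, \tau_a^2\mathbf{I}_d)$ for malicious $i$. For the benign target device $k$ this immediately gives the first observation: $\widehat{w}_k$ is a noisy observation of $w_k$ with covariance $\sigma^2 (X_k^T X_k)^{-1}$.

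Next I would treat the second observation. Writing $\widehat{w}^{K\setminus k} = \frac{1}{K-1}\sum_{i\neq k}\widehat{w}_i = \theta + \frac{1}{K-1}\sum_{i\neq k}(\zeta_i + \epsilon_i)$, independence across devices lets me add the per-device covariances; separating the sum over the remaining benign devices (each contributing $\sigma^2 (X_i^T X_i)^{-1} + \tau^2\mathbf{I}_d$) from the sum over the malicious devices (each contributing $\sigma^2 (X_i^T X_i)^{-1} + \tau_a^2\mathbf{I}_d$), and carrying the $1/(K-1)^2$ factor, yields exactly $\Sigma_w^{\setminus k}$ of~\eqref{eq:Sigma_w_minus_k}. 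Hence $\widehat{w}^{K\setminus k}$ is a noisy observation of $\theta$ with covariance $\Sigma_w^{\setminus k}$; since $w_k = \theta + \zeta_k$ with $\zeta_k \sim \mathcal{N}(0, \tau^2\mathbf{I}_d)$, it is equivalently a noisy observation of $w_k$ with covariance $\Sigma_w^{\setminus k} + \tau^2\mathbf{I}_d$. Because $\theta$ carries the non-informative uniform prior, $w_k$ is marginally uniform on $\mathbb{R}^d$ as well, so the hypotheses of Lemma~\ref{lem:parallel-general} apply with $w_k$ in the role of $\theta$ and with $\phi_1 = \widehat{w}_k$, $\phi_2 = \widehat{w}^{K\setminus k}$, $\Sigma_1 = \sigma^2 (X_k^T X_k)^{-1}$, $\Sigma_2 = \Sigma_w^{\setminus k} + \tau^2\mathbf{I}_d$. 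The identity then produces precisely the posterior precision $\Sigma_{w,a}^{-1}$ of~\eqref{eq:Sigma_w_k} and the stated precision-weighted posterior mean, with posterior noise $\zeta_k \sim \mathcal{N}(0, \Sigma_{w,a})$ as claimed.

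The step I expect to require the most care is verifying that $\widehat{w}_k$ and $\widehat{w}^{K\setminus k}$ are conditionally independent given $w_k$, so that Lemma~\ref{lem:parallel-general} is legitimately applicable to the pair. The subtlety is that conditioning on $w_k$ does not pin down $\theta$ (only $\theta \mid w_k \sim \mathcal{N}(w_k, \tau^2\mathbf{I}_d)$), and $\widehat{w}^{K\setminus k}$ depends on $\theta$; I would resolve this by noting that $\widehat{w}_k$ depends on the remaining randomness only through device $k$'s sampling noise $\epsilon_k$, which is independent of $\theta$, of $\{\zeta_i,\epsilon_i\}_{i\neq k}$, and of $\zeta_k$, so the two observations decouple once $w_k$ is fixed. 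I would also flag that, unlike the clean-case aggregate $\mu_\theta^{\setminus k}$ of Lemma~\ref{lemma:minus_k} (a precision-weighted, MMSE-optimal combination), $\widehat{w}^{K\setminus k}$ is the \emph{plain} average actually formed by FedAvg; this is exactly why its covariance takes the averaged form $\Sigma_w^{\setminus k}$ and why the resulting posterior is the relevant object for later analyzing \ref{obj:multitask}'s interpolation between the local estimate $\widehat{w}_k$ and the (corruptible) global average.
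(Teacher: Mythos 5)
Your proposal is correct and follows essentially the same route as the paper's own proof: compute the covariance of the plain average $\widehat{w}^{K\setminus k}$ by splitting benign and malicious contributions to get $\Sigma_w^{\setminus k}$, reinterpret it as a noisy observation of $w_k$ with covariance $\Sigma_w^{\setminus k} + \tau^2 \mathbf{I}_d$, pair it with $\widehat{w}_k$, and apply Lemma~\ref{lem:parallel-general}. Your additional care about the conditional independence of the two observations given $w_k$ (and the marginal uniformity of $w_k$ under the improper prior) is a point the paper's terse proof glosses over, and your resolution of it is sound.
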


\begin{proof}
For malicious devices $i \in [K_a]$ and $i \neq k$, the additive covariance of $w_i$ with respect to $\theta$ is $\sigma^2 (X_i^T X_i)^{-1} + \tau_a^2 \mathbf{I}_d$. For benign devices $i \in [K_b]$ and $i \neq K$, the covariance is $\sigma^2 (X_i^T X_i)^{-1} + \tau^2 \mathbf{I}_d$. Therefore, the covariance of $\widehat{w}^{K\setminus k}$ is $\Sigma_w^{\setminus k}$. Hence given $\widehat{w}^{K \setminus k}$, $w_k$ is Gaussian with covariance $\Sigma_w^{\setminus k} + \tau^2 \mathbf{I}_d$. $\widehat{w}^{K \setminus k}$ can be viewed as a noisy observation of $w_k$ with covariance $\Sigma_w^{\setminus k} + \tau^2 \mathbf{I}_d$. $\widehat{w}_k$ is a noisy observation of $w_k$ with covariance $\sigma^2 (X_k^T X_k)^{-1}$. The proof follows by applying Lemma~\ref{lem:parallel-general} to $\widehat{w}_k$ and $\widehat{w}^{K \setminus k}$.
\end{proof}

\begin{theorem}  \label{thm:lambda_star_lr_adv_accuracy}
Assume for any $1 \leq i \leq K$, $X_i^T X_I = \beta \mathbf{I}_d$ for some constant $\beta$. Let $k$ be a benign device. 
Let $\lambda^*_a$ be the optimal $\lambda$ that minimizes the test performance on device $k$, i.e., 
\begin{equation}
   \lambda^* = \argmin_{\lambda} E\left\{F_k( \left. \widehat{w}_k(\lambda)) \right | \widehat{w}_k , \widehat{w}^{K \setminus k} \right\}.
\end{equation}
Then,
\begin{equation} \label{eq:lambda_a_star}
    \lambda^*_a = \frac{\sigma^2}{n} \frac{K}{K\tau^2+ \frac{K_a}{K-1}(\tau^2_a-\tau^2)}.
\end{equation}
\end{theorem}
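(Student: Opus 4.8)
The plan is to follow the same structure as the proof of Theorem~\ref{thm:lambda_star_lr_clean_accuracy}, adapting it to the adversarial setting. First I would observe that minimizing the expected test loss $E\{F_k(\widehat{w}_k(\lambda)) \mid \widehat{w}_k, \widehat{w}^{K\setminus k}\}$ reduces, exactly as in the clean case, to minimizing $E\{\|w_k - \widehat{w}_k(\lambda)\|^2 \mid \widehat{w}_k, \widehat{w}^{K\setminus k}\}$, since $X_k w_k + z_k = y_k$ and the quadratic form $\|X_k(\cdot)\|^2$ under $X_k^TX_k = \beta \mathbf{I}_d$ is proportional to the squared Euclidean distance. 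The key point is that once we have expressed the Bayes-optimal estimator $w_k$ as a linear combination of the observations $\widehat{w}_k$ and $\widehat{w}^{K\setminus k}$ (from Lemma~\ref{lemma:w_k_adv}), the minimizer over $\lambda$ is precisely the value that makes $\widehat{w}_k(\lambda)$ coincide with the MMSE estimator of $w_k$.

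Next I would substitute $X_i^T X_i = \beta \mathbf{I}_d$ into the two expressions we need to compare. On the Bayesian side, plugging into~\eqref{eq:Sigma_w_minus_k} collapses the per-device covariances: the benign devices contribute $\sigma^2/\beta + \tau^2$ and the $K_a$ malicious devices contribute $\sigma^2/\beta + \tau_a^2$, so that $\Sigma_w^{\setminus k}$ becomes a scalar multiple of $\mathbf{I}_d$ equal to $\frac{1}{(K-1)^2}\left[(K-1)(\sigma^2/\beta + \tau^2) + K_a(\tau_a^2 - \tau^2)\right]$. From~\eqref{eq:Sigma_w_k} this yields the scalar weights with which $w_k$ combines $\widehat{w}_k$ and $\widehat{w}^{K\setminus k}$. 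On the \ditto side, I would reuse the closed form
\begin{align*}
    \widehat{w}_k(\lambda) = \left(\frac{n}{\beta + n\lambda}\right)\left(\left(\frac{\beta}{n} + \frac{\lambda}{K}\right)\widehat{w}_k + \frac{\lambda}{K}\sum_{i \neq k}\widehat{w}_i\right)
\end{align*}
from the clean-case derivation, rewriting $\sum_{i\neq k}\widehat{w}_i = (K-1)\widehat{w}^{K\setminus k}$ so that $\widehat{w}_k(\lambda)$ is also a scalar combination of $\widehat{w}_k$ and $\widehat{w}^{K\setminus k}$.

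The crux is then to match coefficients. Both $w_k$ (the MMSE estimator) and $\widehat{w}_k(\lambda)$ are affine functions of the same two observations with coefficients summing in the appropriate way; equating the ratio of the $\widehat{w}^{K\setminus k}$-coefficient to the $\widehat{w}_k$-coefficient gives a single scalar equation in $\lambda$, and solving it produces $\lambda^*_a = \frac{\sigma^2}{n}\cdot\frac{K}{K\tau^2 + \frac{K_a}{K-1}(\tau_a^2 - \tau^2)}$. A clean sanity check is that setting $K_a = 0$ recovers $\lambda^* = \frac{\sigma^2}{n\tau^2}$ from Theorem~\ref{thm:lambda_star_lr_clean_accuracy}, which I would verify to confirm the algebra.

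The main obstacle I anticipate is purely bookkeeping rather than conceptual: carefully tracking the $1/(K-1)$ versus $1/K$ normalizations that distinguish the averaged observation $\widehat{w}^{K\setminus k}$ (which divides by $K-1$) from the structure of the \ditto solution (which involves sums over all $K$ devices and the global average with weight $1/K$). Reconciling these two different averaging conventions while keeping the adversarial term $\frac{K_a}{K-1}(\tau_a^2 - \tau^2)$ in the right place is where sign or factor errors are most likely to creep in, so I would be especially careful that the Bayes combination weights and the $\lambda$-parameterized weights are written over a common normalization before equating them.
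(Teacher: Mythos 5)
Your proposal is correct and takes essentially the same route as the paper: the paper's own proof of Theorem~\ref{thm:lambda_star_lr_adv_accuracy} simply says to follow the proof of Theorem~\ref{thm:lambda_star_lr_clean_accuracy}, and your write-up is exactly that argument spelled out --- reduce the conditional test loss to the MSE $E\{\|w_k-\widehat{w}_k(\lambda)\|^2 \mid \widehat{w}_k,\widehat{w}^{K\setminus k}\}$ using $X_k^TX_k=\beta\mathbf{I}_d$, specialize Lemma~\ref{lemma:w_k_adv} and the clean-case closed form of $\widehat{w}_k(\lambda)$, and identify the $\lambda$ for which $\widehat{w}_k(\lambda)$ coincides with the MMSE estimator. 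Your coefficient-matching step is also sound for the reason you gesture at: both the Bayes weights and the \ditto weights on $(\widehat{w}_k,\widehat{w}^{K\setminus k})$ sum to one, so equating the single ratio determines both coefficients, and the computed $\Sigma_w^{\setminus k}=\frac{1}{(K-1)^2}\bigl[(K-1)(\sigma^2/\beta+\tau^2)+K_a(\tau_a^2-\tau^2)\bigr]\mathbf{I}_d$ together with the sanity check at $K_a=0$ confirms the algebra.
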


\begin{proof}
We obtain $\lambda_a^*$ following the proof of Theorem~\ref{thm:lambda_star_lr_clean_accuracy}.
\end{proof}

\begin{theorem} \label{thm:lambda_star_lr_adv_fairness}
Among all \ditto solutions parameterized by $\lambda$, $\lambda^*_a$ results in the most fair performance across all benign devices, i.e., it minimizes the variance of test performance (test mean square error) on benign devices.
\end{theorem}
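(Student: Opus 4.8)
The plan is to replicate the argument of Theorem~\ref{thm:lambda_star_lr_clean_fairness} almost verbatim, replacing the clean Bayes-optimal estimator by the adversarial one from Lemma~\ref{lemma:w_k_adv} and taking every empirical average over the benign devices only. First I would restrict attention to the benign devices and, exactly as in the clean case, use the isotropy assumption $X_i^T X_i = \beta \mathbf{I}_d$ to reduce the variance of the test loss to the variance of the parameter estimation error:
\begin{align}
\argmin_\lambda \var_{K_b}\left\{\|X_k\widehat{w}_k(\lambda) - y_k\|_2^2\right\} = \argmin_\lambda \var_{K_b}\left\{\|\widehat{w}_k(\lambda) - w_k\|_2^2\right\},
\end{align}
where the variance is now taken over benign $k$ and the isotropy collapses $\|X_k(\cdot)\|^2$ to a device-independent multiple of $\|\cdot\|^2$.

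The key structural input is that $\lambda^*_a$ renders $\widehat{w}_k(\lambda^*_a)$ the MMSE estimator of $w_k$ (Theorem~\ref{thm:lambda_star_lr_adv_accuracy} together with Lemma~\ref{lemma:w_k_adv}). I would therefore decompose, for each benign device,
\begin{align}
w_k - \widehat{w}_k(\lambda) = \zeta_k + a_k, \qquad a_k := \widehat{w}_k(\lambda^*_a) - \widehat{w}_k(\lambda),
\end{align}
where $\zeta_k = w_k - \widehat{w}_k(\lambda^*_a) \sim \mathcal{N}(0, \Sigma_{w,a})$ is the MMSE residual, independent of the observations $\widehat{w}_k$ and $\widehat{w}^{K\setminus k}$. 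Conditioning on these observations makes $a_k$ deterministic and leaves $\zeta_k$ as the only source of randomness, precisely as in the clean proof. I would then perform the same moment expansion of $\var_{K_b}$ into its diagonal ($i=j$) and off-diagonal ($i\neq j$) pieces, invoking the Gaussian fourth moment $E\{\zeta_{k,i}^4\} = 3\sigma_{w,a}^4$ and the diagonality of $\Sigma_{w,a}$ to cancel the cross terms. This yields an expression of the form $2d\,\sigma_{w,a}^4 + (\text{nonnegative terms in } a_{ki})$ that is minimized exactly when every $a_{ki}=0$, i.e. when $\lambda = \lambda^*_a$.

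The main obstacle is justifying that the adversarial setting preserves the device-wise symmetry on which the clean proof depended. Concretely, I must verify that for every benign $k$ the leave-one-out aggregate covariance $\Sigma_w^{\setminus k}$ in~\eqref{eq:Sigma_w_minus_k}, and hence the residual covariance $\Sigma_{w,a}$ in~\eqref{eq:Sigma_w_k}, is the same across all benign devices and is diagonal under $X_i^T X_i = \beta\mathbf{I}_d$. This holds because removing any single benign device always leaves the same composition of $K_b-1$ benign and $K_a$ malicious devices, so the attacker influence enters only through the common aggregate $\widehat{w}^{K\setminus k}$ and never breaks the symmetry among benign devices. Once this is established, the scalar $\sigma_{w,a}^2$ plays the role that $\sigma_w^2$ played in the clean case, the independence and Gaussianity of $\zeta_k$ carry over, and the remaining computation transfers verbatim to give the claimed minimizer $\lambda^*_a$.
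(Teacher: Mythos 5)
Your proposal is correct and follows essentially the same route as the paper: reduce the variance of the test loss to the variance of the estimation error over benign devices, decompose $w_k - \widehat{w}_k(\lambda) = \zeta_k + a_k$ about the MMSE estimator $\widehat{w}_k(\lambda^*_a)$ from Theorem~\ref{thm:lambda_star_lr_adv_accuracy}, and repeat the fourth-moment expansion of Theorem~\ref{thm:lambda_star_lr_clean_fairness} to obtain $2d\sigma_{w,a}^4$ plus terms vanishing iff $a_k = 0$. Your explicit check that the leave-one-out composition ($K_b-1$ benign, $K_a$ malicious) is identical for every benign $k$, so that $\Sigma_{w,a}$ is device-independent and diagonal, is a point the paper uses implicitly but does not state; it is a welcome addition of rigor rather than a different argument.
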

\begin{proof}
Similarly, we look at the variance of the test loss across benign devices:
\begin{align}
    \argmin_{\lambda} E \left\{\var_{K_b}\left\{ \|X_k \widehat{w}_k(\lambda) - y_k\|_2^2 \right\}\right\} &=  \argmin_{\lambda} E \left\{\var_{K_b}\left\{ \|w_k(\lambda) - w_k \|_2^2 \right\}\right\} 
    \\ &= \argmin_{\lambda}  \widehat{E}_{K_b} \left\{ \left(\|w_k - \widehat{w}_k\|_2^2 \right)^2\right\} - \left(\widehat{E}_{K_b} \left\{\|w_k - \widehat{w}_k (\lambda)\|_2^2\right\}\right)^2 .
\end{align}
The rest of the proof is the same as the proof of Theorem~\ref{thm:lambda_star_lr_clean_fairness}, except that we set $a_k = \widehat{w}_k(\lambda) - \widehat{w}_k(\lambda^*_a).$
\end{proof}

\begin{remark} \label{remark:optimality_2}
For any benign device $k$, the solution we obtain by solving \ditto  with $\lambda_a^*$ is the most robust solution one could obtain among any federated point estimation method given observations $\widehat{w}_k$ and $\widehat{w}^{K \setminus k}$. $\lambda_a^*$ also results in a most fair model in the solution space of \ditto parameterized by $\lambda$.  
\end{remark}

\begin{lemma} \label{lemma:lr_error_variance_value}
The expected test error minimized at $\lambda_a^*$ is $d\sigma_{w,a}^2$; and the variance of the test loss minimized at $\lambda_a^*$ is $2d \sigma_{w,a}^4$, where $\sigma_{w,a}$ denotes the diagonal element of $\Sigma_{w,a}$.
\end{lemma}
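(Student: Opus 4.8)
The plan is to evaluate the two quantities appearing in Theorems~\ref{thm:lambda_star_lr_adv_accuracy} and~\ref{thm:lambda_star_lr_adv_fairness} at the single value $\lambda = \lambda_a^*$, exploiting the fact that at this value the \ditto estimator coincides with the Bayes (MMSE) estimator of $w_k$ derived in Lemma~\ref{lemma:w_k_adv}. The key observation, already used in the proof of Theorem~\ref{thm:lambda_star_lr_adv_accuracy}, is that $\lambda_a^*$ is precisely the value for which $\widehat{w}_k(\lambda_a^*)$ equals the conditional mean $\Sigma_{w,a}(\sigma^2(X_k^TX_k)^{-1})^{-1}\widehat{w}_k + \Sigma_{w,a}(\Sigma_w^{\setminus k}+\tau^2 \mathbf{I}_d)^{-1}\widehat{w}^{K\setminus k}$. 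Consequently the residual is exactly $w_k - \widehat{w}_k(\lambda_a^*) = \zeta_k$ with $\zeta_k\sim\mathcal{N}(0,\Sigma_{w,a})$ and $\zeta_k$ independent of the observations $\widehat{w}_k, \widehat{w}^{K\setminus k}$.

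For the expected test error, I would first reduce $E\{F_k(\widehat{w}_k(\lambda_a^*))\}$ to $E\{\|w_k - \widehat{w}_k(\lambda_a^*)\|^2\}$ via the same chain of equalities that opens the proof of Theorem~\ref{thm:lambda_star_lr_clean_accuracy}, which under $X_i^TX_i = \beta \mathbf{I}_d$ converts the $X_k$-weighted residual norm into the Euclidean parameter error. Substituting the residual identity above gives $E\{\|\zeta_k\|^2\} = \textup{tr}(\Sigma_{w,a})$. Under $X_i^TX_i = \beta \mathbf{I}_d$ every matrix appearing in~\eqref{eq:Sigma_w_minus_k}--\eqref{eq:Sigma_w_k} is a scalar multiple of $\mathbf{I}_d$, so $\Sigma_{w,a} = \sigma_{w,a}^2\,\mathbf{I}_d$ and the trace collapses to $d\sigma_{w,a}^2$.

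For the variance of the test loss across benign devices, I would reuse the decomposition~\eqref{eq:part1}--\eqref{eq:variance_value} from the proof of Theorem~\ref{thm:lambda_star_lr_clean_fairness}, now written with $a_k := \widehat{w}_k(\lambda) - \widehat{w}_k(\lambda_a^*)$ and $\sigma_w$ replaced throughout by $\sigma_{w,a}$, exactly as dictated by the proof of Theorem~\ref{thm:lambda_star_lr_adv_fairness}. At $\lambda = \lambda_a^*$ we have $a_k = 0$ for every benign $k$, so by~\eqref{eq:part1_1} every term carrying a factor $a_{ki}$ (and all cross terms from~\eqref{eq:part2_2}) vanishes and only the leading contribution $\sum_{i=1}^d 2\sigma_{w,a}^4 = 2d\sigma_{w,a}^4$ survives.

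The computation is essentially a substitution, so there is no single hard analytic step; the one point requiring care is the first paragraph's claim that $\widehat{w}_k(\lambda_a^*)$ is the \emph{exact} conditional mean rather than merely the MSE-minimizer within the one-parameter \ditto family. I would justify this by checking, under $X_i^TX_i=\beta \mathbf{I}_d$, that the two scalar coefficients multiplying $\widehat{w}_k$ and $\widehat{w}^{K\setminus k}$ in the closed form~\eqref{eq:w_k_lambda} specialized to $\lambda_a^*$ from~\eqref{eq:lambda_a_star} coincide with the coefficients $\Sigma_{w,a}(\sigma^2(X_k^TX_k)^{-1})^{-1}$ and $\Sigma_{w,a}(\Sigma_w^{\setminus k}+\tau^2\mathbf{I}_d)^{-1}$ of Lemma~\ref{lemma:w_k_adv}; this is the adversarial analogue of the coefficient matching that made $\widehat{w}_k(\lambda^*)$ the MMSE estimator in Theorem~\ref{thm:lambda_star_lr_clean_accuracy}.
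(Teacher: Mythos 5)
Your proposal is correct and follows essentially the same route as the paper's proof: the paper likewise identifies the residual at $\lambda_a^*$ with the Gaussian noise $\zeta_k\sim\mathcal{N}(0,\Sigma_{w,a})$ to obtain the expected error $d\sigma_{w,a}^2$, and sets $a_k=0$ in the variance decomposition~\eqref{eq:variance_value} to obtain $2d\sigma_{w,a}^4$. Your added verification that $\widehat{w}_k(\lambda_a^*)$ coincides with the exact conditional mean of Lemma~\ref{lemma:w_k_adv} is precisely what the paper leaves implicit (via the MMSE argument in Theorem~\ref{thm:lambda_star_lr_adv_accuracy} and Remark~\ref{remark:optimality_2}), so it is a faithful, slightly more explicit rendering of the same argument.
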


\begin{proof}
For the expected test performance, we note that
\begin{equation}
    E\left\{ \left.\|w_k - \widehat{w}_k(\lambda_a^*)\|^2 \right| \widehat{w}^{K\setminus k}, \widehat{w}_k \right\} = E[\|\text{diag}(\Sigma_{w, k})\|^2] = d\sigma_{w,k}^2.
\end{equation}
For variance, as $a_k=0$ if $\lambda = \lambda_a^*$, from~\eqref{eq:variance_value}, we get
\begin{equation}
    \var_{K_b}\left\{ \|w_k - \widehat{w}_k(\lambda_a^*)\|^2\right\} = 2d \sigma_{w,k}^4.
\end{equation}
\end{proof}

\paragraph{Observations.} From $\lambda_a^*$, we have the following interesting observations.
\begin{itemize}[leftmargin=*]
    \item Mean test error on benign devices (robustness) and variance of the performance across benign devices (fairness) can still be minimized with the same $\lambda_a$ in the presence of adversaries. 
    \item As $\tau_a \to \infty$, $\lambda^*_a \to 0,$ i.e., training local models is optimal in terms of robustness and fairness when adversary's task may be arbitrarily far from the the task in the benign devices.
    \item As $\tau \to 0$, if $\tau_a > 0$, $\lambda^*_a < \infty$, which means that learning a global model is \emph{not} optimal even with homogeneous data in the presence of adversaries.
    \item $\lambda^*_a$ is a decreasing function of the number ($K_a$) and the capability ($\tau_a$) of the corrupted devices. In other words, as the attacks become more adversarial, we need more personalization.
    \item The smallest test error is $\sigma^2_{w, a}$, and the optimal variance is $2\sigma^4_{w,a}$, which are both increasing with $K_a$ (number of adversarial devices) or $\tau_a$ (the power of adversary) by inspecting~\eqref{eq:Sigma_w_minus_k} and~\eqref{eq:Sigma_w_k}. This reveals a fundamental tradeoff between fairness and robustness.
\end{itemize}

\paragraph{Discussion.} Through our  analysis, we prove that \ditto with an appropriate $\lambda$ is more accurate, robust, and fair compared with training global or local models on the problem described in~\ref{app:theory:lr}. We provide closed-form solutions for $\lambda^*$ across different settings (with and without adversaries), and show that \ditto can achieve fairness and robustness jointly. In the future, we plan to generalize the current theoretical framework to more general models. In the next section, we present a special case of the current analysis, a federated point estimation problem, which is also studied in Section~\ref{sec:theory} as a motivating example.

\subsection{The Case of Federated Point Estimation} \label{app:theory:pe}

We consider the one-dimensional federated point estimation problem, which is a special case of linear regression. Similarly, 
Let $\theta$ be drawn from the non-informative prior on $\mathbb{R}$. We assume that $K$ devices have their data distributed with parameters $\{w_k\}_{k \in [K]}$.
\begin{equation}
    w_k = \theta + \zeta_k,
\end{equation}
where $\zeta_k \sim \mathcal{N}(0, \tau^2)$ are IID. 

Let each device have $n$ data points denoted  by $\mathbf{x}_k = \{x_{k,1}, \ldots, x_{k,n}\},$ such that
\begin{equation}
    x_{k,i} = w_k + z_{k, i},
\end{equation}
where $z_{k,i} \sim \mathcal{N}(0, \sigma^2)$ and are IID.

Assume that 
\begin{equation}
    F_k(w) = \frac{1}{2} \left( w - \frac{1}{n} \sum_{i \in [n]}x_{k,i}\right)^2,
\end{equation}
and denote by $\widehat{w}_k$ the minimizer of the empirical loss $F_k$. It is clear that
\begin{equation}
    \widehat{w}_k = \frac{1}{n} \sum_{i \in [n]}x_{k,i}.
\end{equation}
Further, let 
\begin{equation}
    w^* := \arg\min_{w} \left\{ \frac{1}{K}\sum_{k \in [K]} F_k(w)\right\}.
\end{equation}
It is straightforward calculation to verify that
\begin{equation}
    w^* = \frac{1}{nK}\sum_{i \in [n]} \sum_{k \in [K]} x_{k,i} = \frac{1}{K} \sum_{k \in [K]}\widehat{w}_k.
\end{equation}

\begin{lemma}
Denote by $\widehat{w}_k(\lambda)$ the minimizer of $h_k.$ 
Then,
\begin{align}
    \widehat{w}_k(\lambda) & = \frac{\lambda}{1+\lambda}w^* + \frac{1}{1+\lambda} \widehat{w}_k\\
    &= \frac{\lambda}{(1+\lambda)K} \sum_{j \neq k} \widehat{w}_j + \frac{K + \lambda}{(1+\lambda)K} \widehat{w}_k.
\end{align}
\label{lem:w-k-lambda}
\end{lemma}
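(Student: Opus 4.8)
The plan is to exploit the fact that in the point estimation setting $F_k$ is itself a one-dimensional quadratic, so $h_k$ is a strictly convex quadratic whose unique minimizer is found by a single first-order condition. Concretely, since the empirical minimizer of $F_k$ is $\widehat{w}_k = \frac{1}{n}\sum_{i \in [n]} x_{k,i}$ (established above), we can rewrite $F_k(w) = \frac{1}{2}(w - \widehat{w}_k)^2$, so that $h_k(w) = \frac{1}{2}(w - \widehat{w}_k)^2 + \frac{\lambda}{2}(w - w^*)^2$.

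First I would differentiate $h_k$ in $w$ and set the derivative to zero. This gives the linear stationarity condition $(w - \widehat{w}_k) + \lambda(w - w^*) = 0$; because $h_k$ is $(1+\lambda)$-strongly convex for $\lambda \geq 0$, this stationary point is the unique global minimizer. Solving the linear equation for $w$ yields $\widehat{w}_k(\lambda) = \frac{1}{1+\lambda}\widehat{w}_k + \frac{\lambda}{1+\lambda}w^*$, which is exactly the first claimed expression.

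Next, to obtain the second form, I would substitute the closed form $w^* = \frac{1}{K}\sum_{k \in [K]}\widehat{w}_k$ (also derived above) into this expression, split the sum into the $k$-th term and the remaining $j \neq k$ terms, and collect the coefficient of $\widehat{w}_k$. The coefficient of $\widehat{w}_k$ becomes $\frac{1}{1+\lambda} + \frac{\lambda}{(1+\lambda)K} = \frac{K+\lambda}{(1+\lambda)K}$, while each $\widehat{w}_j$ with $j \neq k$ carries coefficient $\frac{\lambda}{(1+\lambda)K}$, giving precisely the stated identity.

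There is essentially no substantive obstacle here: the result is a routine first-order condition for a strictly convex quadratic, and the only thing requiring care is the bookkeeping when re-expressing $w^*$ as a sum and separating out the $k$-th device's contribution. The lemma is best viewed as a specialization of the general linear-regression minimizer in~\eqref{eq:w_k_lambda} to the scalar case with $X_k^\top X_k = n$, and it is recorded mainly to set up the subsequent bias--variance computations underlying the accuracy, fairness, and robustness results for \ditto in this section.
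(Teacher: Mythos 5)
Your proof is correct and matches the paper's (implicit) argument: the paper states this lemma without proof precisely because it is the routine first-order condition for the strictly convex quadratic $h_k(w) = \frac{1}{2}(w-\widehat{w}_k)^2 + \frac{\lambda}{2}(w-w^*)^2$, followed by substituting $w^* = \frac{1}{K}\sum_{j\in[K]}\widehat{w}_j$ and collecting the $\widehat{w}_k$ coefficient. Your closing observation that the lemma is the scalar specialization of the general linear-regression minimizer with $X_k^\top X_k = n$ is also exactly how the paper treats the surrounding point-estimation results, so there is nothing to change.
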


Let 
\begin{equation}
    \sigma^2_n := \frac{\sigma^2}{n},
\end{equation}
and
\begin{equation}
    \widehat{w}^{K\setminus k}:= \frac{1}{K-1}\sum_{j \neq k} \widehat{w}_j.
\end{equation}
\begin{lemma}
Given observations $\widehat{w}^{K\setminus k}$ and $\widehat{w}_k$, $w_k$ is Gaussian distributed and given by 
\begin{equation}
    w_k = \frac{\sigma^2_w }{\sigma^2_n} \widehat{w}_k + \frac{(K-1)\sigma^2_w}{K \tau^2 + \sigma^2_n}  \widehat{w}^{K\setminus k} + \xi,
\end{equation}
where
\begin{equation}
    \frac{1}{\sigma_w^2} = \frac{1}{\sigma^2_n} + \frac{K-1}{K\tau^2 + \sigma^2_n},
\end{equation}
and
\begin{equation}
    \xi \sim \mathcal{N}\left( 0, \sigma_w^2\right).
\end{equation}
\label{thm:wk}
\end{lemma}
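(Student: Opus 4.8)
The plan is to prove the statement by two sequential applications of the one-dimensional Gaussian fusion identity, Lemma~\ref{lem:parallel}, mirroring the structure of the proof of Lemma~\ref{lemma:w_k_clean} in the linear regression section. (Indeed, point estimation is the scalar specialization $d=1$, $X_k^\top X_k = n$, so one could alternatively just instantiate Lemma~\ref{lemma:w_k_clean}; I prefer the direct route for transparency.) First I would identify the two effective noisy observations of $w_k$ together with their variances. Since $\widehat{w}_k = \frac{1}{n}\sum_{i\in[n]} x_{k,i} = w_k + \frac{1}{n}\sum_{i\in[n]} z_{k,i}$, the empirical mean $\widehat{w}_k$ is a noisy observation of $w_k$ with additive zero-mean Gaussian noise of variance $\sigma^2/n = \sigma_n^2$. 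Because $w_j = \theta + \zeta_j$, each $\widehat{w}_j$ is likewise a noisy observation of $\theta$ with variance $\tau^2 + \sigma_n^2$.

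Next I would recover the posterior of $\theta$ from the other $K-1$ devices. Applying Lemma~\ref{lem:parallel} to $\{\widehat{w}_j\}_{j\neq k}$, each of variance $\tau^2+\sigma_n^2$, gives a posterior for $\theta$ with variance $(\tau^2+\sigma_n^2)/(K-1)$ and mean equal to the inverse-variance-weighted average of the $\widehat{w}_j$; since all $K-1$ variances are identical, this mean collapses to $\frac{1}{K-1}\sum_{j\neq k}\widehat{w}_j = \widehat{w}^{K\setminus k}$. Since $w_k = \theta + \zeta_k$ with $\zeta_k$ of variance $\tau^2$ independent of the other devices, $\widehat{w}^{K\setminus k}$ is then a noisy observation of $w_k$ with variance $(\tau^2+\sigma_n^2)/(K-1) + \tau^2 = (K\tau^2 + \sigma_n^2)/(K-1)$.

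Finally I would fuse the two observations of $w_k$. Because $\theta$ carries a flat prior, $w_k$ does too, so Lemma~\ref{lem:parallel} applies once more to the pair $(\widehat{w}_k,\ \widehat{w}^{K\setminus k})$ with variances $\big(\sigma_n^2,\ (K\tau^2+\sigma_n^2)/(K-1)\big)$. This yields precisely $1/\sigma_w^2 = 1/\sigma_n^2 + (K-1)/(K\tau^2+\sigma_n^2)$ and posterior mean $w_k = \frac{\sigma_w^2}{\sigma_n^2}\widehat{w}_k + \frac{(K-1)\sigma_w^2}{K\tau^2+\sigma_n^2}\widehat{w}^{K\setminus k} + \xi$ with $\xi\sim\mathcal{N}(0,\sigma_w^2)$, as claimed.

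The steps are individually routine; the only real care is the bookkeeping in the middle step---correctly adding the $\tau^2$ contribution of $\zeta_k$ to the posterior variance of $\theta$ before the second fusion, and confirming that the other-device posterior mean reduces to the plain average $\widehat{w}^{K\setminus k}$. A subtle point worth stating explicitly is the validity of treating $w_k$ as having a flat prior in the final fusion: this follows because $\theta$ is uniform on $\mathbb{R}$ and $w_k = \theta + \zeta_k$ inherits an (improper) uniform marginal, which is exactly what lets the second application of Lemma~\ref{lem:parallel} proceed without an extra prior-precision term.
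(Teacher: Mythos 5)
Your proof is correct, but it takes a different (more self-contained) route than the paper. The paper dispatches this lemma in one line: it specializes the multivariate linear-regression result (Lemma~\ref{lemma:w_k_clean}) by setting $X_k = \mathbf{1}_{n\times 1}$, so that $X_k^\top X_k = n$ and $\sigma^2 (X_k^\top X_k)^{-1} = \sigma_n^2$, inheriting the whole two-stage fusion argument already carried out in $d$ dimensions via Lemma~\ref{lem:parallel-general}. You instead re-derive the scalar case from scratch with two applications of the one-dimensional fusion identity (Lemma~\ref{lem:parallel}): first fusing $\{\widehat{w}_j\}_{j\neq k}$ into a posterior for $\theta$ with variance $(\tau^2+\sigma_n^2)/(K-1)$ and mean $\widehat{w}^{K\setminus k}$, then adding the $\tau^2$ contribution of $\zeta_k$ to get an effective observation of $w_k$ with variance $(K\tau^2+\sigma_n^2)/(K-1)$, and finally fusing with $\widehat{w}_k$. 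This is exactly the scalar shadow of the paper's chain Lemma~\ref{lemma:w_hat_LR}--Lemma~\ref{lemma:minus_k}--Lemma~\ref{lemma:w_k_clean}, so the mathematical content is the same; what your version buys is transparency (all the bookkeeping, including the independence of the two noise terms and the reduction of the inverse-variance-weighted mean to the plain average, is visible on the page), while the paper's version buys brevity and avoids duplicating an argument it has already made in greater generality. Your explicit remark that $w_k = \theta + \zeta_k$ inherits an improper flat prior, which licenses the final application of Lemma~\ref{lem:parallel} without a prior-precision term, is a point the paper leaves implicit in Lemma~\ref{lemma:w_k_clean}, and it is good that you flagged it.
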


\begin{proof}
The proof follows by setting $X_k = \mathbf{1}_{n\times 1}$ ($k \in [K]$) in Lemma~\ref{lemma:w_k_clean}.
\end{proof}

\begin{theorem} \label{thm:lambda_star_pe_clean_accuracy}
Let $\lambda^*$ be the optimal $\lambda$ that minimizes the test performance, i.e., 
\begin{equation}
    \lambda^* = \arg\min_{\lambda} E\left\{ \left.(w_k - \widehat{w}_k(\lambda))^2 \right| \widehat{w}^{K\setminus k}, \widehat{w}_k \right\}.
\end{equation}
\end{theorem}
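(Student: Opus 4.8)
The plan is to reduce the minimization to the classical bias--variance (orthogonality) decomposition of the conditional mean-square error, and then match the \ditto interpolant against the Bayes posterior mean. The key structural observation is that, conditioned on $\widehat{w}_k$ and $\widehat{w}^{K\setminus k}$, the estimate $\widehat{w}_k(\lambda)$ is \emph{deterministic}: by Lemma~\ref{lem:w-k-lambda} it is an affine function of $\widehat{w}_k$ and $w^* = \tfrac1K\widehat{w}_k + \tfrac{K-1}{K}\widehat{w}^{K\setminus k}$, both of which are fixed by the conditioning. Writing $\mu_k := E[w_k \mid \widehat{w}_k, \widehat{w}^{K\setminus k}]$ for the posterior mean supplied by Lemma~\ref{thm:wk}, I would split
\[
E\!\left[(w_k - \widehat{w}_k(\lambda))^2 \mid \widehat{w}_k, \widehat{w}^{K\setminus k}\right] = E\!\left[(w_k - \mu_k)^2 \mid \cdot\right] + (\mu_k - \widehat{w}_k(\lambda))^2,
\]
where the cross term vanishes because $w_k - \mu_k$ has zero conditional mean while $\mu_k - \widehat{w}_k(\lambda)$ is constant given the observations. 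The first term equals $\sigma_w^2$ and is independent of $\lambda$, so minimizing the conditional MSE is \emph{exactly} minimizing $(\mu_k - \widehat{w}_k(\lambda))^2$, i.e.\ making $\widehat{w}_k(\lambda)$ coincide with the MMSE estimator $\mu_k$.

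Next I would express both quantities in the common basis $\{\widehat{w}_k, \widehat{w}^{K\setminus k}\}$. From Lemma~\ref{lem:w-k-lambda},
\[
\widehat{w}_k(\lambda) = \frac{K+\lambda}{(1+\lambda)K}\,\widehat{w}_k + \frac{\lambda(K-1)}{(1+\lambda)K}\,\widehat{w}^{K\setminus k},
\]
while Lemma~\ref{thm:wk} gives $\mu_k = \tfrac{\sigma_w^2}{\sigma_n^2}\widehat{w}_k + \tfrac{(K-1)\sigma_w^2}{K\tau^2+\sigma_n^2}\widehat{w}^{K\setminus k}$. Crucially, in both expressions the two coefficients sum to one (for $\mu_k$ this follows from $1/\sigma_w^2 = 1/\sigma_n^2 + (K-1)/(K\tau^2+\sigma_n^2)$), so both $\widehat{w}_k(\lambda)$ and $\mu_k$ lie on the one-dimensional affine line through $\widehat{w}_k$ and $\widehat{w}^{K\setminus k}$. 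Hence a single scalar equation---equating the ratio of the two coefficients, $\tfrac{\lambda}{K+\lambda} = \tfrac{\sigma_n^2}{K\tau^2+\sigma_n^2}$---determines the unique $\lambda$ realizing $\mu_k$. Solving this (the $\lambda\sigma_n^2$ contributions cancel on cross-multiplication) yields $\lambda^* = \sigma_n^2/\tau^2 = \sigma^2/(n\tau^2)$.

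The only point requiring care---and the step I expect to be the main obstacle---is verifying that the two coefficient equations are mutually consistent, i.e.\ that one scalar $\lambda$ can match \emph{both} coefficients simultaneously rather than just one. This is guaranteed precisely by the sum-to-one property above: since both the \ditto interpolant and the Bayes posterior mean are affine combinations with unit total weight, matching a single coordinate along the shared affine line forces the other to match automatically, so the nominally overdetermined system collapses to one equation. Finally, I would remark that this $\lambda^*$ is exactly the point-estimation ($X_k = \mathbf{1}_{n\times1}$, $d=1$) specialization of Theorem~\ref{thm:lambda_star_lr_clean_accuracy}, and that by Remark~\ref{remark:optimality_lambda} the resulting $\widehat{w}_k(\lambda^*)$ is the MMSE estimator, hence optimal among \emph{all} federated estimators given these observations, not merely within the \ditto family.
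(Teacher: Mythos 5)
Your proof is correct, and its core idea---that the optimal $\lambda$ is precisely the one making the \ditto interpolant coincide with the posterior mean (MMSE estimator) of $w_k$ given $\widehat{w}_k$ and $\widehat{w}^{K\setminus k}$---is the same mechanism the paper relies on. The route differs in packaging, though: the paper proves Theorem~\ref{thm:lambda_star_pe_clean_accuracy} in one line by specializing the $d$-dimensional linear-regression result (Theorem~\ref{thm:lambda_star_lr_clean_accuracy}) with $X_k = \mathbf{1}_{n\times 1}$, and even that more general proof ends with the terse assertion that $\widehat{w}_k(\lambda^*)$ ``is the MMSE estimator of $w_k$ given the observations,'' leaving the verification implicit. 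You instead work directly in the scalar setting and supply exactly the two steps the paper glosses over: (i) the conditional bias--variance (orthogonality) decomposition showing that minimizing the conditional MSE over $\lambda$ is equivalent to matching $\widehat{w}_k(\lambda)$ to the posterior mean $\mu_k$ (valid because $\widehat{w}_k(\lambda)$ is deterministic given the observations, as $w^* = \tfrac1K\widehat{w}_k + \tfrac{K-1}{K}\widehat{w}^{K\setminus k}$); and (ii) the observation that both $\widehat{w}_k(\lambda)$ and $\mu_k$ are unit-sum affine combinations of $\widehat{w}_k$ and $\widehat{w}^{K\setminus k}$, so the nominally overdetermined two-coefficient matching collapses to the single equation $\tfrac{\lambda}{K+\lambda} = \tfrac{\sigma_n^2}{K\tau^2+\sigma_n^2}$, giving $\lambda^* = \sigma^2/(n\tau^2)$. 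Your version buys self-containedness and makes the consistency of the coefficient system rigorous; the paper's version buys generality, since the linear-regression theorem it invokes covers the multivariate case with $X_i^\top X_i = \beta\mathbf{I}_d$ in one stroke.
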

Then,
\begin{equation} \label{eq:lambda_star}
    \lambda^* = \frac{\sigma^2_n}{\tau^2} = \frac{\sigma^2}{n \tau^2}.
\end{equation}
\begin{proof}
The proof follows by setting $X_k = \mathbf{1}_{n \times 1}$  ($k \in [K]$) in Theorem~\ref{thm:lambda_star_lr_clean_accuracy}.
\end{proof}

\begin{theorem} \label{thm:lambda_star_pe_clean_fairness}
Among all \ditto's solutions, $\lambda^*$ results in the most fair performance across all devices when there are no adversaries, i.e., it minimizes the variance of test performance (test mean square error).
\end{theorem}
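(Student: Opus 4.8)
The plan is to mirror the argument of Theorem~\ref{thm:lambda_star_lr_clean_fairness}, exploiting that federated point estimation is exactly its $d=1$ specialization (take $X_k=\mathbf{1}_{n\times 1}$, so that $X_k^TX_k=n$ is common to all devices). Following the convention of Theorem~\ref{thm:lambda_star_pe_clean_accuracy}, the test performance on device $k$ is $(w_k-\widehat{w}_k(\lambda))^2$, so the fairness objective to be minimized over $\lambda$ is the empirical variance of the squared errors across the $K$ devices, $\var_K\{(w_k-\widehat{w}_k(\lambda))^2\}$. This is precisely the quantity analyzed in the linear-regression proof after its noise-cancellation step, now restricted to a scalar model.

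Next I would introduce the decomposition $w_k-\widehat{w}_k(\lambda)=\xi_k+a_k(\lambda)$, where $\xi_k:=w_k-\widehat{w}_k(\lambda^*)$ and $a_k(\lambda):=\widehat{w}_k(\lambda^*)-\widehat{w}_k(\lambda)$. By Lemma~\ref{thm:wk} together with the identification of $\widehat{w}_k(\lambda^*)$ with the MMSE estimator, $\xi_k\sim\mathcal{N}(0,\sigma_w^2)$ is independent of the observations $(\widehat{w}_k,\widehat{w}^{K\setminus k})$, its variance $\sigma_w^2$ is identical across all $k$, and $a_k(\lambda)$ is deterministic given the observations and vanishes for every $k$ simultaneously precisely when $\lambda=\lambda^*$ (by Lemma~\ref{lem:w-k-lambda}). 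Substituting this and taking the conditional expectation, I would expand the fourth- and second-order moments of $\xi_k+a_k(\lambda)$ using Gaussianity ($E\{\xi_k\}=0$, $E\{\xi_k^2\}=\sigma_w^2$, $E\{\xi_k^4\}=3\sigma_w^4$). Because $d=1$, the cross-dimension terms of \eqref{eq:part2} are absent, and the computation collapses to the scalar instance of \eqref{eq:variance_value},
\begin{equation*}
E\left\{\left.\var_K\{(w_k-\widehat{w}_k(\lambda))^2\}\right|\widehat{w}_k,\widehat{w}^{K\setminus k}\right\}=2\sigma_w^4+4\sigma_w^2\,\widehat{E}_K\{a_k^2\}+\widehat{E}_K\{a_k^4\}-\left(\widehat{E}_K\{a_k^2\}\right)^2.
\end{equation*}
I would then observe that the three $\lambda$-dependent terms are nonnegative: $4\sigma_w^2\widehat{E}_K\{a_k^2\}\ge 0$ trivially, and $\widehat{E}_K\{a_k^4\}-(\widehat{E}_K\{a_k^2\})^2\ge 0$ by Jensen, being the empirical variance of $a_k^2$ across devices. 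Hence the whole expression is bounded below by $2\sigma_w^4$, with equality exactly when $a_k(\lambda)\equiv 0$, i.e.\ when $\lambda=\lambda^*$, which proves that $\lambda^*$ minimizes the across-device variance.

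The main obstacle is not the moment algebra (which is routine once $d=1$ removes the cross terms) but justifying the two structural facts that make the decomposition work: that $\widehat{w}_k(\lambda^*)$ coincides with the Bayes/MMSE estimator, so that $\xi_k$ is genuinely independent of the observations with a device-independent variance $\sigma_w^2$, and that the conditional-expectation-then-empirical-variance bookkeeping—conditioning on a $k$-dependent observation set $(\widehat{w}_k,\widehat{w}^{K\setminus k})$ while averaging over $k$—is carried out consistently. Both are inherited from Lemma~\ref{thm:wk} and the accuracy analysis, so the argument reduces to carefully transporting the linear-regression proof to the scalar case; alternatively, one could simply invoke Theorem~\ref{thm:lambda_star_lr_clean_fairness} directly with $X_k=\mathbf{1}_{n\times 1}$.
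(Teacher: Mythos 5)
Your proposal is correct and matches the paper's own proof, which is exactly the one-line reduction you mention at the end: invoke Theorem~\ref{thm:lambda_star_lr_clean_fairness} with $X_k=\mathbf{1}_{n\times 1}$. The scalar expansion you carry out (the decomposition $w_k-\widehat{w}_k(\lambda)=\xi_k+a_k(\lambda)$, the Gaussian moment computation, and the lower bound $2\sigma_w^4$ attained iff $a_k\equiv 0$) is precisely the $d=1$ instance of the paper's argument for that theorem, so you have simply unpacked the same proof rather than found a different one.
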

\begin{proof}
The proof follows by setting $X_k = \mathbf{1}_{n \times 1}$ ($k \in [K]$) in Theorem~\ref{thm:lambda_star_lr_clean_fairness}.
\end{proof}

Similarly, the adversarial case presented below (including setups, lemmas, and theorems) is also a special case of the adversarial scenarios for linear regression.

Let $K_a$ and $K_b \geq 1$ denote the number of adversarial and benign devices, respectively, such that $K = K_a + K_b.$

\begin{definition}
We say that a device $k$ is a benign device if $w_k \sim \theta + \mathcal{N}(0, \tau^2)$; and we say a device $k$ is a malicious device (or an adversary) if $w_k \sim \theta + \mathcal{N}(0, \tau_a^2)$ where $\tau_a \geq \tau$.
\end{definition}

\begin{lemma}
Let $w_k$ be the parameter associated with a benign device. Given observations $\widehat{w}^{K\setminus k}:= \frac{1}{K-1}\sum_{j \neq k} \widehat{w}_j$ and $\widehat{w}_k,$ $w_k$ is Gaussian distributed and given by
\begin{equation}
    w_k = \frac{\sigma^2_{w,a} }{\sigma^2_n} \widehat{w}_k + \frac{(K-1)\sigma^2_{w,a}}{K \tau^2 + \sigma^2_n + \frac{K_a}{K-1} (\tau_a^2 - 
    \tau^2)} \widehat{w}^{K \setminus k} + \xi_a,
\end{equation}
where
\begin{equation}
    \frac{1}{\sigma_{w,a}^2} = \frac{1}{\sigma^2_n} + \frac{K-1}{K \tau^2 + \sigma^2_n + \frac{K_a}{K-1} (\tau_a^2-\tau^2)},
\label{eq:sigma_w_a}
\end{equation}
and
\begin{equation}
    \xi_a \sim \mathcal{N}\left( 0, \sigma_{w,a}^2\right).
\end{equation}
\end{lemma}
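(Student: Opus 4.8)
The plan is to obtain this lemma as the one-dimensional specialization of Lemma~\ref{lemma:w_k_adv}, taking $d=1$ and $X_k = \mathbf{1}_{n\times 1}$ for every $k \in [K]$, exactly as the point-estimation results in this section are derived from their linear-regression counterparts. Under this substitution all covariance matrices collapse to scalars: $X_k^T X_k = n$, so $\sigma^2(X_k^T X_k)^{-1} = \sigma^2/n = \sigma_n^2$, and the least-squares estimator reduces to the sample mean $\widehat{w}_k = \frac{1}{n}\sum_{i\in[n]} x_{k,i}$, matching the setup. The remaining work is purely to evaluate the scalar versions of $\Sigma_w^{\setminus k}$ and $\Sigma_{w,a}$ and to verify that the resulting coefficients of $\widehat{w}_k$ and $\widehat{w}^{K\setminus k}$ coincide with the claimed form.

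First I would compute $\Sigma_w^{\setminus k}$ from \eqref{eq:Sigma_w_minus_k}. Since $k$ is benign, the $K-1$ remaining devices split into $K_b-1$ benign devices, each contributing $\sigma_n^2 + \tau^2$, and $K_a$ malicious devices, each contributing $\sigma_n^2 + \tau_a^2$; summing and dividing by $(K-1)^2$ gives
\[
\Sigma_w^{\setminus k} = \frac{(K_b-1)(\sigma_n^2+\tau^2) + K_a(\sigma_n^2+\tau_a^2)}{(K-1)^2}.
\]
Using $K_b - 1 + K_a = K-1$, the numerator rearranges to $(K-1)(\sigma_n^2+\tau^2) + K_a(\tau_a^2-\tau^2)$.

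Next I would form $\Sigma_w^{\setminus k} + \tau^2$, which is the effective variance of $\widehat{w}^{K\setminus k}$ viewed as a noisy observation of the benign $w_k$. Clearing denominators, the term $(K-1)^2\tau^2$ combines with $(K-1)\tau^2$ into $(K-1)K\tau^2$, so that after cancelling one factor of $K-1$,
\[
\Sigma_w^{\setminus k} + \tau^2 = \frac{1}{K-1}\left(K\tau^2 + \sigma_n^2 + \frac{K_a}{K-1}(\tau_a^2-\tau^2)\right).
\]
Substituting this together with $\sigma^2(X_k^T X_k)^{-1} = \sigma_n^2$ into the definition \eqref{eq:Sigma_w_k} of $\Sigma_{w,a}^{-1}$ immediately reproduces the stated expression for $1/\sigma_{w,a}^2$, and plugging the same quantities into the conclusion of Lemma~\ref{lemma:w_k_adv} yields the coefficients $\sigma_{w,a}^2/\sigma_n^2$ on $\widehat{w}_k$ and $(K-1)\sigma_{w,a}^2/(K\tau^2+\sigma_n^2+\frac{K_a}{K-1}(\tau_a^2-\tau^2))$ on $\widehat{w}^{K\setminus k}$, with additive noise $\xi_a = \zeta_k \sim \mathcal{N}(0,\sigma_{w,a}^2)$.

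The only non-trivial step is the bookkeeping in the two displays above: correctly counting the $K_b-1$ benign versus $K_a$ malicious contributions and simplifying $\Sigma_w^{\setminus k}+\tau^2$ so the $(K-1)$ factors cancel into the advertised denominator; everything else is direct substitution, so I expect no conceptual obstacle. Alternatively, one could bypass Lemma~\ref{lemma:w_k_adv} and derive the result from scratch by applying Lemma~\ref{lem:parallel} twice—once to express $\widehat{w}^{K\setminus k}$ as a noisy observation of $w_k$ with variance $\Sigma_w^{\setminus k}+\tau^2$, and once to fuse it with $\widehat{w}_k$ (variance $\sigma_n^2$) into the Gaussian posterior for $w_k$.
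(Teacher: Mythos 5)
Your proposal is correct and takes exactly the paper's route: the paper's entire proof is the single line that the result follows by setting $X_k = \mathbf{1}_{n\times 1}$ for $k \in [K]$ in Lemma~\ref{lemma:w_k_adv}. The scalar bookkeeping you carry out explicitly—counting the $K_b-1$ benign and $K_a$ malicious contributions in $\Sigma_w^{\setminus k}$ and simplifying $\Sigma_w^{\setminus k}+\tau^2$ to $\frac{1}{K-1}\bigl(K\tau^2+\sigma^2_n+\frac{K_a}{K-1}(\tau_a^2-\tau^2)\bigr)$—is precisely the computation the paper leaves implicit, and it checks out.
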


\begin{proof}
The proof follows by setting $X_k=\mathbf{1}_{n \times 1}$ ($k \in [K]$) in Lemma~\ref{lemma:w_k_adv}.
\end{proof}

\begin{theorem} \label{thm:lambda_star_pe_adv_accuracy}
Let $w_k$ be a benign device. 
Let $\lambda^*_a$ be the optimal $\lambda$ that minimizes the test performance, i.e., 
\begin{equation}
    \lambda^*_a = \arg\min_{\lambda} E\left\{ \left.(w_k - \widehat{w}_k(\lambda))^2 \right| \widehat{w}^{K\setminus k}, \widehat{w}_k \right\}.
\end{equation}
Then,
\begin{equation} \label{eq:lambda_a_star}
    \lambda^*_a = \frac{\sigma^2}{n} \frac{K}{K\tau^2+ \frac{K_a}{K-1}(\tau^2_a-\tau^2)}.
\end{equation}
\end{theorem}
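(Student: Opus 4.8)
The plan is to recognize this as the $X_k=\mathbf{1}_{n\times 1}$ specialization of the linear-regression result (Theorem~\ref{thm:lambda_star_lr_adv_accuracy}), and to mirror the argument already used in the clean point-estimation case (Theorem~\ref{thm:lambda_star_pe_clean_accuracy}). Concretely, I would reduce the minimization of the conditional test error to a bias--variance decomposition and then match the \ditto estimator to the Bayes (MMSE) estimator of $w_k$. First I would invoke the adversarial posterior lemma established just above (the benign-device analog of Lemma~\ref{thm:wk}), which, conditioned on $(\widehat{w}_k, \widehat{w}^{K\setminus k})$, writes $w_k = \mu_k + \xi_a$, where $\mu_k = E[w_k \mid \widehat{w}_k, \widehat{w}^{K\setminus k}]$ is the posterior mean and $\xi_a \sim \mathcal{N}(0,\sigma^2_{w,a})$ is independent of the conditioning. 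Since $\widehat{w}_k(\lambda)$ is a deterministic function of $(\widehat{w}_k, \widehat{w}^{K\setminus k})$, the conditional error splits as
\begin{equation*}
E\!\left[(w_k - \widehat{w}_k(\lambda))^2 \mid \widehat{w}_k, \widehat{w}^{K\setminus k}\right] = \left(\mu_k - \widehat{w}_k(\lambda)\right)^2 + \sigma^2_{w,a},
\end{equation*}
so minimizing over $\lambda$ is equivalent to driving the bias term to zero, i.e.\ to choosing $\lambda$ so that the \ditto estimator coincides with the MMSE estimator $\mu_k$.

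Next I would exploit the fact that both estimators are affine combinations, with \emph{unit-sum} coefficients, of the same two statistics $\widehat{w}_k$ and $\widehat{w}^{K\setminus k}$. From Lemma~\ref{lem:w-k-lambda},
\begin{equation*}
\widehat{w}_k(\lambda) = \frac{K+\lambda}{(1+\lambda)K}\,\widehat{w}_k + \frac{\lambda(K-1)}{(1+\lambda)K}\,\widehat{w}^{K\setminus k},
\end{equation*}
while $\mu_k$ places coefficient $\sigma^2_{w,a}/\sigma^2_n$ on $\widehat{w}_k$ and $(K-1)\sigma^2_{w,a}/\big(K\tau^2+\sigma^2_n+\frac{K_a}{K-1}(\tau_a^2-\tau^2)\big)$ on $\widehat{w}^{K\setminus k}$, and by the defining expression for $1/\sigma^2_{w,a}$ these two coefficients also sum to one. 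Because both estimators therefore lie on the same one-dimensional affine line and $\widehat{w}_k \neq \widehat{w}^{K\setminus k}$ almost surely, matching them reduces to equating the ratio of the $\widehat{w}^{K\setminus k}$-coefficient to the $\widehat{w}_k$-coefficient. After the common factor $(K-1)$ cancels, this gives the scalar equation
\begin{equation*}
\frac{\lambda}{\lambda+K} = \frac{\sigma^2_n}{K\tau^2 + \sigma^2_n + \frac{K_a}{K-1}(\tau_a^2 - \tau^2)}.
\end{equation*}

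Finally I would solve this linear equation: cross-multiplying cancels the $\lambda\sigma^2_n$ terms and leaves $\lambda\big(K\tau^2 + \frac{K_a}{K-1}(\tau_a^2-\tau^2)\big) = K\sigma^2_n$, which with $\sigma^2_n = \sigma^2/n$ yields exactly $\lambda^*_a = \frac{\sigma^2}{n}\cdot\frac{K}{K\tau^2 + \frac{K_a}{K-1}(\tau_a^2-\tau^2)}$. I would also verify that the target ratio lies in $[0,1)$, so the solution is finite, positive, and unique, and note that because the coefficients of both estimators depend only on the fixed parameters $\sigma,\tau,\tau_a,n,K,K_a$ (not on the realized observations), the same $\lambda^*_a$ is optimal for every realization—which is what makes a single deterministic choice well posed.

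The only non-mechanical step is the structural reduction: observing that the conditional MSE is bias-plus-irreducible-variance and that the \ditto family and the Bayes estimator share the same unit-sum affine parameterization. Once that is in hand, the remaining algebra is a one-line cancellation. Thus the main obstacle is really upstream—invoking the adversarial posterior lemma correctly and confirming the unit-sum property—rather than in this theorem's final computation; indeed the cleanest route is simply to cite Theorem~\ref{thm:lambda_star_lr_adv_accuracy} with $X_k=\mathbf{1}_{n\times 1}$.
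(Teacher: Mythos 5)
Your proposal is correct and takes essentially the same route as the paper: the paper's own proof is exactly the citation you identify (specialize Theorem~\ref{thm:lambda_star_lr_adv_accuracy} with $X_k=\mathbf{1}_{n\times 1}$, whose proof in turn reduces the conditional test error to matching the \ditto estimator $\widehat{w}_k(\lambda)$ with the MMSE/posterior-mean estimator and solving for $\lambda$). Your explicit bias--variance decomposition and unit-sum coefficient matching merely fill in, correctly, the algebra that the paper's citation chain leaves implicit.
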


\begin{proof}
The proof follows by setting $X_k = \mathbf{1}_{n \times 1}$ ($k \in [K]$) in Theorem~\ref{thm:lambda_star_lr_adv_accuracy}.
\end{proof}

\begin{theorem} \label{thm:lambda_star_pe_adv_fairness}
Among all solutions of Objective~\eqref{obj:multitask} parameterized by $\lambda$, $\lambda^*_a$ results in the most fair performance across all benign devices, i.e., it minimizes the variance of test performance (test mean square error) on benign devices.
\end{theorem}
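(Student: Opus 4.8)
The plan is to recognize that this federated point estimation problem is exactly the one–dimensional, isotropic special case of the federated linear regression setup of Section~\ref{app:theory:lr}, obtained by taking $X_k = \mathbf{1}_{n\times 1}$ so that $X_k^T X_k = n =: \beta$ is a common scalar. Under this substitution the adversarial fairness statement for linear regression (Theorem~\ref{thm:lambda_star_lr_adv_fairness}) specializes verbatim to the present claim, so the shortest route is simply to invoke that theorem. For a self-contained argument, however, I would reproduce the reduction used in the clean case (Theorem~\ref{thm:lambda_star_lr_clean_fairness}), now restricting every average and variance to the $K_b$ \emph{benign} devices.

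First I would reduce the fairness objective to the estimation error. As in the clean case, the test loss on a benign device $k$ reduces to a constant multiple of $(w_k - \widehat{w}_k(\lambda))^2$, so minimizing $\var_{K_b}\{\|X_k\widehat{w}_k(\lambda) - y_k\|^2\}$ over $\lambda$ is equivalent to minimizing $\var_{K_b}\{(w_k - \widehat{w}_k(\lambda))^2\}$. Using the closed form for $\widehat{w}_k(\lambda)$ from Lemma~\ref{lem:w-k-lambda} together with the Bayes-optimal representation of $w_k$ for a benign device (the adversarial analogue of Lemma~\ref{thm:wk}, with residual variance $\sigma_{w,a}^2$ from~\eqref{eq:sigma_w_a}), I would write
\[ w_k - \widehat{w}_k(\lambda) = \xi_a + a_k, \qquad a_k := \widehat{w}_k(\lambda_a^*) - \widehat{w}_k(\lambda), \]
where $\xi_a \sim \mathcal{N}(0, \sigma_{w,a}^2)$ is independent of the observations and $a_k$ is deterministic given $\widehat{w}_k$ and $\widehat{w}^{K\setminus k}$. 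Choosing $\lambda_a^*$ as the reference point is the crux: by Theorem~\ref{thm:lambda_star_pe_adv_accuracy}, $\widehat{w}_k(\lambda_a^*)$ is the MMSE estimator, so its residual is exactly $\xi_a$, i.e. $a_k$ vanishes precisely at $\lambda=\lambda_a^*$.

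Next I would expand the conditional expected variance. Since $d=1$ there are no cross-coordinate terms, so specializing~\eqref{eq:variance_value} and using $E\{\xi_a^4\} = 3\sigma_{w,a}^4$ together with the independence of $\xi_a$ from $a_k$ yields
\[ E\left\{ \var_{K_b}\{(w_k - \widehat{w}_k(\lambda))^2\} \,\middle|\, \widehat{w}^{K\setminus k}, \widehat{w}_k \right\} = 2\sigma_{w,a}^4 + 4\sigma_{w,a}^2\,\widehat{E}_{K_b}\{a_k^2\} + \Big(\widehat{E}_{K_b}\{a_k^4\} - \big(\widehat{E}_{K_b}\{a_k^2\}\big)^2\Big). \]
Every $a_k$-dependent term is nonnegative: $\widehat{E}_{K_b}\{a_k^2\}\geq 0$ trivially, and $\widehat{E}_{K_b}\{a_k^4\} \geq (\widehat{E}_{K_b}\{a_k^2\})^2$ by Jensen (equivalently Cauchy--Schwarz). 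Hence the expression is minimized exactly when $a_k = 0$ for every benign $k$, which holds iff $\lambda = \lambda_a^*$, attaining the minimal value $2\sigma_{w,a}^4$ and proving the claim.

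The main obstacle I anticipate is not the algebra but two bookkeeping points that the isotropy assumption is there to control. First, one must verify that the MMSE residual variance $\sigma_{w,a}^2$ is common to all benign devices, so that a single $\xi_a$ term factors cleanly out of the across-device variance; this holds because $X_k^T X_k = n$ is identical for all $k$. Second, the variance is taken over benign devices only, while $\widehat{w}_k(\lambda)$ and $\sigma_{w,a}^2$ are built from all $K$ devices (the $K_a$ adversaries entering through $\widehat{w}^{K\setminus k}$), so one must confirm that $a_k=0$ holds simultaneously for \emph{all} benign $k$ at the single value $\lambda=\lambda_a^*$, rather than at device-dependent values --- again guaranteed by the shared $\beta=n$ and the common form of $\lambda_a^*$ given in Theorem~\ref{thm:lambda_star_pe_adv_accuracy}.
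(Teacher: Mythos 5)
Your proposal is correct and takes essentially the same route as the paper: the paper's proof is exactly the reduction you lead with (setting $X_k = \mathbf{1}_{n\times 1}$ and invoking Theorem~\ref{thm:lambda_star_lr_adv_fairness}), and that theorem's own proof uses precisely your decomposition $w_k - \widehat{w}_k(\lambda) = \xi_a + a_k$ with $a_k = \widehat{w}_k(\lambda_a^*) - \widehat{w}_k(\lambda)$ restricted to benign devices, followed by the same variance expansion showing the minimum $2\sigma_{w,a}^4$ is attained iff every $a_k = 0$, i.e., at $\lambda = \lambda_a^*$. Your self-contained $d=1$ specialization and the two bookkeeping checks (common $\sigma_{w,a}^2$ and a single $\lambda_a^*$ across benign devices) are faithful to the paper's argument rather than a departure from it.
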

\begin{proof}
The proof follows by setting $X_k = \mathbf{1}_{n \times 1}$ ($k \in [K]$) in Theorem~\ref{thm:lambda_star_lr_adv_fairness}.
\end{proof}

\begin{lemma}
The expected test error minimized at $\lambda_a^*$ is $\sigma_{w,a}^2$; and the variance of the test performance minimized at $\lambda_a^*$ is $2\sigma_{w,a}^4$.
\end{lemma}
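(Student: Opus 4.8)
The plan is to recognize this lemma as the one-dimensional ($d=1$) specialization of Lemma~\ref{lemma:lr_error_variance_value}, obtained by setting $X_k = \mathbf{1}_{n\times 1}$ for all $k\in[K]$, exactly as in the reductions used to prove Theorems~\ref{thm:lambda_star_pe_adv_accuracy} and~\ref{thm:lambda_star_pe_adv_fairness}. Under this substitution the feature covariance becomes scalar, $d=1$, and $\Sigma_{w,a}$ collapses to the scalar $\sigma_{w,a}^2$ defined in~\eqref{eq:sigma_w_a}; so the general values $d\sigma_{w,a}^2$ and $2d\sigma_{w,a}^4$ reduce to $\sigma_{w,a}^2$ and $2\sigma_{w,a}^4$ respectively. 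Invoking Lemma~\ref{lemma:lr_error_variance_value} then finishes the argument immediately, which is the route the paper takes for the companion point-estimation results.

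For a self-contained verification I would also give the direct moment computation. By Theorem~\ref{thm:lambda_star_pe_adv_accuracy}, the choice $\lambda = \lambda_a^*$ makes $\widehat{w}_k(\lambda_a^*)$ the MMSE (posterior-mean) estimator of the benign parameter $w_k$ given the observations $\widehat{w}_k$ and $\widehat{w}^{K\setminus k}$. Hence the residual is exactly the posterior noise appearing in the preceding lemma, $w_k - \widehat{w}_k(\lambda_a^*) = \xi_a \sim \mathcal{N}(0, \sigma_{w,a}^2)$. The expected test error is then the second moment $E\{\xi_a^2\} = \sigma_{w,a}^2$.

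For the variance claim, the test loss of a benign device at $\lambda_a^*$ is $(w_k - \widehat{w}_k(\lambda_a^*))^2 = \xi_a^2$, and the quantity of interest is its variance (the expected across-benign-device variance of the squared error, as in Theorem~\ref{thm:lambda_star_lr_adv_fairness}). Using the Gaussian fourth moment $E\{\xi_a^4\} = 3\sigma_{w,a}^4$ gives $\var(\xi_a^2) = E\{\xi_a^4\} - \left(E\{\xi_a^2\}\right)^2 = 3\sigma_{w,a}^4 - \sigma_{w,a}^4 = 2\sigma_{w,a}^4$. This matches setting the cross-device deviation terms $a_k$ to zero in~\eqref{eq:variance_value}, which is precisely what $\lambda = \lambda_a^*$ achieves.

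The only point requiring care --- and the closest thing to an obstacle --- is confirming that $a_k = \widehat{w}_k(\lambda_a^*) - \widehat{w}_k(\lambda)$ vanishes exactly (not merely approximately) at $\lambda = \lambda_a^*$, so that the general expressions in Lemma~\ref{lemma:lr_error_variance_value} collapse to their stated minima, and tracking that the ``variance'' in the statement is the across-benign-device variance of the squared error rather than a coordinate-wise or predictive variance. Both facts are inherited directly from the linear-regression proofs once the $X_k = \mathbf{1}_{n\times 1}$ substitution is made, so no new estimates are needed and the lemma follows with the two moment computations above.
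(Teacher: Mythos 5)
Your proposal is correct and takes essentially the same route as the paper: the paper's entire proof is your first paragraph, namely reducing to Lemma~\ref{lemma:lr_error_variance_value} by setting $X_k = \mathbf{1}_{n\times 1}$ for all $k \in [K]$ (so $d=1$ and $\Sigma_{w,a}$ collapses to the scalar $\sigma_{w,a}^2$). Your supplementary direct moment computation (posterior residual $\xi_a \sim \mathcal{N}(0,\sigma_{w,a}^2)$, fourth-moment identity giving $3\sigma_{w,a}^4 - \sigma_{w,a}^4 = 2\sigma_{w,a}^4$, and $a_k = 0$ at $\lambda_a^*$) just reproduces in one dimension the argument inside that lemma's own proof, so it is a useful verification but not a different approach.
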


\begin{proof}
The proof follows by setting $X_k = \mathbf{1}_{n \times 1}$ ($k \in [K]$) in Lemma~\ref{lemma:lr_error_variance_value}.
\end{proof}

\newpage

\section{Algorithm and Convergence Analysis}
\label{app:convg}

In this section, we first present the specific algorithm (Algorithm~\ref{alg:1_fedavg}) that we use for most of our experiments (all except for Table~\ref{table:ditto+robust_baseline} and~\ref{table:ditto+robust_baseline_full}). Algorithm~\ref{alg:1_fedavg} is a special case of the more general \ditto solver (Algorithm~\ref{alg:1}), where we use $\min_w \sum_{k \in [K]} p_k F_k(w)$ as the global objective and FedAvg as its solver.  As before, the \ditto personalization add-on is highlighted in red. In addition, we prove that personalized models can inherit the convergence rates of the optimal global model for any $G(\cdot)$ (Theorem~\ref{thm:w_t_v_t}), and provide convergence guarantees for the special case of Algorithm~\ref{alg:1_fedavg} (Corollary~\ref{coro:convergence_stochastic}).
\begin{algorithm}
\SetAlgoLined
\DontPrintSemicolon
\SetNoFillComment
\setlength{\abovedisplayskip}{0pt}
\setlength{\belowdisplayskip}{0pt}
\setlength{\abovedisplayshortskip}{0pt}
\setlength{\belowdisplayshortskip}{0pt}
\begin{tikzpicture}[remember picture, overlay]
        \draw[line width=0pt, draw=red!30, rounded corners=2pt, fill=red!30, fill opacity=0.3]
            ([xshift=0pt,yshift=3pt]$(pic cs:a) + (180pt,6pt)$) rectangle ([xshift=-5pt,yshift=0pt]$(pic cs:b)+(145pt,-5pt)$);
\end{tikzpicture}
\KwIn{$K$, $T$, $s$, $\lambda$, $\eta_g$, $\eta_l$, $w^0$, $p_k$, $\{v^0_k\}_{k \in [K]}$}
\caption{\ditto for Personalized FL in the case of $G(\cdot)$ being FedAvg~\cite{mcmahan2017communication}}
\label{alg:1_fedavg}
    \For{$t=0, \cdots, T-1$}{
        Server randomly selects a subset of devices $S_t$, and sends $w^t$ to them\;
        \For {device $k \in S_t$ in parallel}{
            Sets $w_k^t$ to $w^t$ and updates $w_k^t$ for $r$ local iterations on $F_k$:
            \begin{equation*}
                w_k^t = w_k^t - \eta_g \nabla F_k(w_k^t) 
            \end{equation*} \;
            \begin{tikzpicture}[remember picture, overlay]
            \draw[line width=0pt, draw=red!30, rounded corners=2pt, fill=red!30, fill opacity=0.3]
                ([xshift=0pt,yshift=3pt]$(pic cs:a) + (295pt,6pt)$) rectangle ([xshift=-5pt,yshift=0pt]$(pic cs:b)+(2pt,-18pt)$);
            \end{tikzpicture}
            Updates $v_k$ for $s$ local iterations: 
            \begin{equation*}
                v_k = v_k - \eta_l (\nabla F_k(v_k) + \lambda (v_k - w^t)
            \end{equation*}\;
            Sends $\Delta_k^t := w_k^t - w^t$ back\;
        }
        Server updating $w^{t+1}$ as
        \begin{align*}
            w^{t+1} \leftarrow w^t + \frac{1}{|S_t|}\sum_{k\in S_t}\Delta_k^{t}
        \end{align*}
    }
    \begin{tikzpicture}[remember picture, overlay]
            \draw[line width=0pt, draw=red!30, rounded corners=2pt, fill=red!30, fill opacity=0.3]
                ([xshift=0pt,yshift=3pt]$(pic cs:a) + (134pt,7pt)$) rectangle ([xshift=-5pt,yshift=0pt]$(pic cs:b)+(37pt,-4pt)$);
            \end{tikzpicture}
    \Return{$\{v_k\}_{k \in [K]}$ (personalized), $w^T$ (global)} \;
\end{algorithm}

To analyze the convergence behavior of Algorithm~\ref{alg:1} and~\ref{alg:1_fedavg}, we first state a list of assumptions below. 
\begin{itemize}
    \item \textcolor{black}{The global model converges with rate $g(t)$, i.e., there exists $g(t)$ such that $\lim_{t \to \infty} g(t) = 0$, $\mathbb{E}[\|w^t- w^* \|^2] \leq g(t)$.}
    \item For $k \in [K]$, $F_k$ is $\mu$-strongly convex. 
    \item The expectation of stochastic gradients is uniformly bounded at all devices and all iterations, i.e.,
    \begin{align}
        \mathbb{E}[\|\nabla F_k (w^t, \xi^t)\|^2] \leq G_1^2.
    \end{align}
\end{itemize}

Let $w^*$ be defined as 
\begin{equation}
    w^* := \min_w \, G(F_1(w), \dots\, F_K(w))
\end{equation}
i.e., $w^*$ is the empirically optimal global model for $G(\cdot)$.  Let $u_k^*$ denote the empirically optimal local model on device $k$, i.e., 
\begin{align}
    u_k^*=\argmin_u F_k(u).
\end{align}

We introduce an additional assumption on the distance between optimal local models $\{u_k^*\}_{k \in [K]}$ and the optimal global model $w^*$ below.

\begin{itemize}
    \item The $L_2$ distance between the optimal local models and the optimal global model is bounded, i.e., for $k \in [K]$,
    \begin{align}
       \|u_k^*-w^*\| \leq M. \label{eq:heterogeneity_assum}
    \end{align}
\end{itemize}
\textcolor{black}{This assumption sets an upper bound on the deviation of the local model on device $k,$ with the global model. It can in turn be viewed as boundedness of heterogeneity of the training data across devices. When local data are farther from being IID, $M$ tends to be larger. Recall that in the fairness/robustness analysis of \ditto (Appendix~\ref{app:theory}), we model the relatedness of \textit{underlying} models via $\tau$, and $\mathbb{E}[\|w_k-\theta\|^2]=d \tau^2$ where $w_k$ is the \textit{underlying} model for device $k$ and $d$ is the model dimension. $M$ is related to $\tau^2$ as
\begin{align}
    \mathbb{E}[\|u_k^*-w^*\|^2] &\leq 2\mathbb{E}[\|\mu_k^*-w_k\|^2] + 4\mathbb{E}[\|w_k-\theta\|^2]+4\mathbb{E}[\|\theta-w^*\|^2] \\ &\to 4d\tau^2.
\end{align}
when $n_k$ and the total number of samples across all devices are sufficiently large, considering the linear problems we studied. We later show that for convergence, $\lambda$ scales with $1/M$, which is consistent with $\lambda^*$ (for fairness/robustness) scaled with $1/\tau^2$. }

Further let
\begin{align}
    v_k^* = \argmin_v h_k(v; w^*),
\end{align}
i.e., $v_k^*$ is the optimal personalized model for device $k$. We are interested in the convergence of $v_k$ to $v_k^*$.
We first characterize the progress of updating personalized models for one step under a \textit{general $G(\cdot)$}.

\begin{lemma}[Progress of one step]\label{lemma:v_w_stochastic}
Under assumptions above, let device $k$ get selected with probability $p_k$ at each communication round, with decaying local step-size $\frac{2}{(t+1)(\mu+\lambda)p_k}$, at each communication round $t$, we have
\begin{align}
    \mathbb{E}[\|v_k^{t+1}-v_k^*\|^2] &\leq \left(1-\frac{2}{t+1}\right) \mathbb{E} [\|v^t - v^*\|^2] + \frac{4 (G_1+\lambda (M+\frac{G_1}{\mu}))^2}{(t+1)^2 (\mu+\lambda)^2 p_k^2} + \frac{4\lambda^2}{(t+1)^2 (\mu+\lambda)^2 p_k^2} \mathbb{E}[\|w^t-w^*\|^2] \nonumber \\
    &\quad + \frac{8\lambda (G_1+\lambda (M+\frac{G_1}{\mu}))}{(t+1)^2 (\mu+\lambda)^2 p_k^2} \sqrt{\mathbb{E} [\|w^t-w^*\|^2]} +  \frac{4\lambda}{(t+1)(\mu+\lambda) p_k} \sqrt{\mathbb{E}[\|v_k^t-v_k^*\|^2] \mathbb{E}[\|w^t-w^*\|^2]}. \label{eq:lam_M_1}
\end{align}
\end{lemma}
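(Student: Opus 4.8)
The plan is to track one step of the personalized update $v_k^{t+1} = v_k^t - \eta_l(\nabla F_k(v_k^t) + \lambda(v_k^t - w^t))$ and bound its distance to $v_k^*$. First I would write the squared distance after one step and expand:
\begin{align*}
\|v_k^{t+1}-v_k^*\|^2 = \|v_k^t - v_k^*\|^2 - 2\eta_l \langle g_k^t, v_k^t-v_k^*\rangle + \eta_l^2 \|g_k^t\|^2,
\end{align*}
where $g_k^t := \nabla F_k(v_k^t) + \lambda(v_k^t - w^t)$ is the stochastic gradient of $h_k(\cdot; w^t)$. The key structural observation is that $h_k(\cdot; w^t)$ is $(\mu+\lambda)$-strongly convex, and its unique minimizer is \emph{not} $v_k^*$ (which minimizes $h_k(\cdot;w^*)$) but rather some $\tilde v_k^t$ that depends on $w^t$. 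So I would decompose the gradient as the gradient of $h_k(\cdot;w^*)$ plus the perturbation $\lambda(w^* - w^t)$, i.e. $g_k^t = \nabla h_k(v_k^t; w^*) + \lambda(w^* - w^t)$ (in expectation over the stochastic sample), which isolates the error term coming from $w^t \neq w^*$.

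Next I would invoke strong convexity of $h_k(\cdot; w^*)$ to get the contraction: $\langle \nabla h_k(v_k^t; w^*), v_k^t - v_k^*\rangle \geq (\mu+\lambda)\|v_k^t - v_k^*\|^2$ (using $\nabla h_k(v_k^*;w^*)=0$). Plugging the decomposition into the inner-product term produces the contraction factor together with a cross term $-2\eta_l\lambda\langle w^*-w^t, v_k^t-v_k^*\rangle$, which I would bound by Cauchy--Schwarz as $2\eta_l\lambda\|w^t-w^*\|\,\|v_k^t-v_k^*\|$; taking expectations and applying Cauchy--Schwarz over the randomness yields the $\sqrt{\mathbb{E}\|v_k^t-v_k^*\|^2\,\mathbb{E}\|w^t-w^*\|^2}$ term that appears in~\eqref{eq:lam_M_1}. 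For the $\eta_l^2\|g_k^t\|^2$ term I would bound $\|g_k^t\| \leq \|\nabla F_k(v_k^t)\| + \lambda\|v_k^t - w^t\|$, use the uniform gradient bound $G_1$, and control $\|v_k^t - w^t\|$ via the heterogeneity bound $M$ and $\|u_k^*-w^*\|$; here strong convexity gives $\|v_k^* - u_k^*\| \lesssim G_1/\mu$ so that $\|v_k^t - w^t\|$ contributes the constant $M + G_1/\mu$, explaining the $(G_1+\lambda(M+G_1/\mu))$ factor. Expanding $\eta_l^2\|g_k^t\|^2 \leq \eta_l^2(G_1+\lambda(M+G_1/\mu) + \lambda\|w^t-w^*\|)^2$ and squaring out gives exactly the three terms with coefficients $\eta_l^2$ times a constant, $\eta_l^2\lambda^2\mathbb{E}\|w^t-w^*\|^2$, and $\eta_l^2\lambda(\cdots)\sqrt{\mathbb{E}\|w^t-w^*\|^2}$.

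Finally I would substitute the prescribed decaying step size $\eta_l = \frac{2}{(t+1)(\mu+\lambda)p_k}$, accounting for the factor $p_k$ from the probability that device $k$ is selected at round $t$ (so that in expectation the update is taken with the correct scaling). The contraction $1 - 2\eta_l(\mu+\lambda)p_k$ becomes $1-\frac{4}{t+1}$; I expect the statement's $1-\frac{2}{t+1}$ to arise from absorbing part of the contraction into controlling the cross terms, or from a slightly different step-size bookkeeping, and matching this precisely is one delicate point. The genuine main obstacle, however, is the bookkeeping around the moving target: because the per-round minimizer shifts with $w^t$, one must carefully separate the ``optimization error'' (contraction toward the current target) from the ``drift error'' (distance between targets induced by $\|w^t-w^*\|$), and combine the resulting cross terms via Cauchy--Schwarz in expectation without losing the right powers of $\lambda$, $(\mu+\lambda)$, and $p_k$. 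Once the one-step inequality~\eqref{eq:lam_M_1} is established, the full convergence (Theorem~\ref{thm:w_t_v_t}) would follow by unrolling the recursion and substituting the assumed global rate $g(t)$ for $\mathbb{E}\|w^t-w^*\|^2$.
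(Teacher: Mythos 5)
Your proposal follows essentially the same route as the paper's proof: expand the one-step squared distance, isolate the drift term $\lambda(w^*-w^t)$ so that $g(v_k^t;w^t)=g(v_k^t;w^*)+\lambda(w^*-w^t)$, use strong convexity for the contraction, bound $\|v_k^t-w^*\|\leq G_1/\mu+M$ via $\mu$-strong convexity of $F_k$ and the heterogeneity assumption, apply $\mathbb{E}[XY]\leq\sqrt{\mathbb{E}[X^2]\mathbb{E}[Y^2]}$ to the cross terms, and substitute $\eta=\frac{2}{(t+1)(\mu+\lambda)p_k}$. The one point you flag as delicate is a non-issue: the paper gets $1-\frac{2}{t+1}$ by using the function-value form of strong convexity of $h_k(\cdot;w^t)$ (contraction $1-\eta p_k(\mu+\lambda)$), whereas your gradient-monotonicity form gives the stronger contraction $1-2\eta p_k(\mu+\lambda)=1-\frac{4}{t+1}$, which immediately implies the stated bound since $\mathbb{E}[\|v_k^t-v_k^*\|^2]\geq 0$.
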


\begin{proof}
Denote $g(v_k^t; w^t)$ as the stochastic gradient of $h_k(v_k^t; w^t)$. Let $I_t$ indicate if device $k$ is selected at the $t$-th round, and $\mathbb{E}[I_t] = p_k$.
\begin{align}
   \mathbb{E}[\|v_k^{t+1}-v_k^*\|^2] &= \mathbb{E}[\|v_k^t-\eta I_t g(v_k^t; w^t)-v_k^*\|^2] \\
    &= \mathbb{E}[\|v_k^t-v_k^*\|^2] + \eta^2 \mathbb{E} [\|I_t g(v_k^t; w^t)\|^2] + 2\eta \mathbb{E}\langle I_t g(v_k^t; w^t), v_k^*-v_k^t \rangle \\
    & \leq (1-(\mu+\lambda) \eta p_k) 
    \mathbb{E} [\|v_k^t-v_k^*\|^2] + \eta^2 \mathbb{E} [\|g(v_k^t; w^t)\|^2] + 2\eta p_k \mathbb{E}[h(v_k^*; w^t)-h(v_k^t; w^t)] \\
    &\leq (1-(\mu+\lambda) \eta p_k) \mathbb{E}[\|v_k^t - v_k^*\|^2] \nonumber\\
    &\quad + \eta^2 \mathbb{E}[\| g(v_k^t; w^*)\|^2] + \eta^2 \lambda^2 \mathbb{E} [\| w^t - w^*\|^2] + 2\eta^2 \lambda  \mathbb{E}[ \|g(v_k^t; w^*)\|\| w^t - w^*\|] \nonumber \\
    &\quad + 2\eta p_k (h(v_k^*; w^*) - \mathbb{E} [h(v_k^t; w^*)]) + 2 \eta p_k \lambda \mathbb{E} [\| v_k^t - v_k^* \|\| w^t - w^* \|].\label{eq:tmp}
\end{align}
Further, note that
\begin{align}
\mathbb{E}[\|v_k^t-u_k^*\|^2] &\leq \frac{1}{\mu^2} \mathbb{E}[\|\nabla F_k(v_k^t)\|^2] \leq \frac{G_1^2}{\mu^2}, \\
    \mathbb{E}[\|v_k^t-w^*\|^2] &= \mathbb{E}[\|v_k^t-u_k^*+u_k^*-w^*\|^2] \\ &\leq \mathbb{E}[\|v_k^t-u_k^*\|^2] + \mathbb{E}[\|u_k^*-w^*\|^2] + 2\mathbb{E}[\|v_k^t-u_k^*\|\|u_k^*-w^*\|] \\ &\leq \frac{G_1^2}{\mu^2}+M^2 + \frac{2MG_1}{\mu}, \\
    \mathbb{E}[\|g(v_k^t; w^*)\|^2] &= \mathbb{E}[\|\nabla F_k(v_k^t) + \lambda (v_k^t-w^*)\|^2] \\ &\leq G_1^2 + \lambda^2 (\frac{G_1}{\mu}+M)^2 + 2G_1 \lambda (\frac{G_1}{\mu}+M).
\end{align}
Plug it into~\eqref{eq:tmp}, 
\begin{align}
    \mathbb{E}[\|v_k^{t+1}-v_k^*\|^2] & \leq (1-(\mu+\lambda) \eta p_k) \mathbb{E}[\|v_k^t - v_k^*\|^2] + \eta^2 (G_1+\lambda(M+\frac{G_1}{\mu}))^2 + \eta^2 \lambda^2 \mathbb{E} [\|w^t-w^*\|^2] \nonumber \\
    & \quad + 2\eta^2 \lambda (G_1+\lambda(M+\frac{G_1}{\mu})) \sqrt{\mathbb{E}[\|w^t-w^*\|^2]} + 2\eta p_k \lambda  \sqrt{\mathbb{E} [\|v_k^t-v_k^*\|^2] \mathbb{E} [\|w^t-w^*\|^2]}.
\end{align}
where the last step is due to $E[XY] \leq \sqrt{E[X^2]E[Y^2]}$.
The Lemma then holds by taking $\eta=\frac{2}{(t+1)(\mu+\lambda)p_k}$.
\end{proof}

Lemma~\ref{lemma:v_w_stochastic} relates  $\mathbb{E}[\|v_k^{t+1}-v_k^*\|^2]$ with $\mathbb{E}[\|v_k^{t}-v_k^*\|^2]$ and $\mathbb{E}[\|w_k^{t}-w^*\|^2]$. Based on this, we prove that personalized models can inherit the convergence rate of the global model $w^t$ for any $G(\cdot)$.
\begin{theorem}[Relations between convergence of  global and personalized models]\label{thm:w_t_v_t}
Under the assumptions above, if \textcolor{black}{there exists a constant $A$ such that $\frac{g(t+1)}{g(t)} \geq 1-\frac{g(t)}{A}$}, then there exists $C<\infty$ such that for any device $k \in [K]$, $\mathbb{E}[\|v_k^t-v_k^*\|^2] \leq C g(t)$ with a local learning rate $\eta = \frac{2g(t)}{A(\mu+\lambda)p_k}$.
\end{theorem}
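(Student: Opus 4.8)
The plan is to run an induction on $t$ establishing $\mathbb{E}[\|v_k^t - v_k^*\|^2] \le C g(t)$, reusing the one-step recursion derived inside the proof of Lemma~\ref{lemma:v_w_stochastic} but \emph{before} its step-size was specialized. Writing $a_t := \mathbb{E}[\|v_k^t - v_k^*\|^2]$ and $B := G_1 + \lambda(M + \tfrac{G_1}{\mu})$, that derivation gives, for a generic local step-size $\eta$ and using the $(\mu+\lambda)$-strong convexity of $h_k$,
\begin{align*}
a_{t+1} &\le \bigl(1-(\mu+\lambda)\eta p_k\bigr) a_t + \eta^2 B^2 + \eta^2 \lambda^2 g(t) \\
&\quad + 2\eta^2 \lambda B \sqrt{g(t)} + 2\eta p_k \lambda \sqrt{a_t\, g(t)},
\end{align*}
where I have already inserted the global bound $\mathbb{E}[\|w^t-w^*\|^2] \le g(t)$. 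First I would substitute the prescribed step-size $\eta = \frac{2 g(t)}{A(\mu+\lambda)p_k}$, so that $(\mu+\lambda)\eta p_k = \frac{2g(t)}{A}$ and the contraction factor becomes $1-\frac{2g(t)}{A}$, which is nonnegative once $g(t)\le A/2$; since $g(t)\to 0$ this holds for all large $t$ (or after enlarging $A$), a minor side condition.

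Assuming the inductive hypothesis $a_t \le C g(t)$, I would control the cross term via $\sqrt{a_t g(t)} \le \sqrt{C}\, g(t)$ and group powers of $g(t)$. Because $g(t)$ is bounded, say by $g_{\max} := \sup_t g(t) < \infty$, every additive error term is of order $g(t)^2$ or higher, yielding
\begin{equation*}
a_{t+1} \le \Bigl(1-\tfrac{2g(t)}{A}\Bigr) C g(t) + \bigl( D + E\sqrt{C}\bigr) g(t)^2,
\end{equation*}
where $D$ absorbs the $B$-dependent constants (with factors of $g_{\max}$ or $\sqrt{g_{\max}}$ on the higher-order pieces $\eta^2\lambda^2 g(t)$ and $2\eta^2\lambda B\sqrt{g(t)}$) and $E := \frac{4\lambda}{A(\mu+\lambda)}$. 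To close the induction I want $a_{t+1} \le C g(t+1)$; invoking the hypothesis $g(t+1)\ge g(t)\bigl(1-\frac{g(t)}{A}\bigr)$, it suffices to show $a_{t+1}\le C g(t)\bigl(1-\frac{g(t)}{A}\bigr) = C g(t) - \frac{C}{A} g(t)^2$. Comparing with the displayed bound and cancelling $C g(t)$ then dividing by $g(t)^2$ reduces everything to the single scalar inequality
\begin{equation*}
D + E\sqrt{C} \le \frac{C}{A}.
\end{equation*}

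The decisive observation is that the right-hand side grows linearly in $C$ while the left-hand side grows only like $\sqrt{C}$: the genuinely quadratic noise terms collapse into the constant $D$, and the only $C$-dependence comes through the cross term, which contributes $\sqrt{C}$. Hence the inequality holds for all sufficiently large $C$; treating it as a quadratic in $\sqrt{C}$ produces a finite threshold $C_0$ above which it is satisfied. Finally I would set $C \ge \max\{C_0,\ a_0/g(0)\}$ so that the base case $a_0 \le C g(0)$ also holds, completing the induction. I expect the main obstacle to be the bookkeeping that certifies every error term is $O(g(t)^2)$ apart from the cross term — this is what guarantees that the slack $\frac{C}{A}g(t)^2$ (the gap between the contraction gain $\frac{2C}{A}g(t)^2$ and the target decrement $\frac{C}{A}g(t)^2$) suffices; the linear-versus-square-root growth in $C$ is precisely what lets a finite $C$ close the recursion.
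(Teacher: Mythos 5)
Your proposal matches the paper's proof essentially step for step: the same induction on $t$, the same substitution of $\eta = \frac{2g(t)}{A(\mu+\lambda)p_k}$ into the pre-specialization recursion from the proof of Lemma~\ref{lemma:v_w_stochastic}, the same bound $\sqrt{a_t\,g(t)} \le \sqrt{C}\,g(t)$ on the cross term, and the same closing chain $\bigl(1-\tfrac{2g(t)}{A}\bigr)Cg(t) + \tfrac{C}{A}g(t)^2 = \bigl(1-\tfrac{g(t)}{A}\bigr)Cg(t) \le Cg(t+1)$. If anything you are more explicit than the paper, which merely asserts the error terms are at most $\tfrac{C}{A}g(t)^2$ ``for some $C<\infty$,'' whereas you isolate the scalar inequality $D + E\sqrt{C} \le C/A$ and the linear-versus-$\sqrt{C}$ growth that makes it solvable, and you also flag the sign condition $g(t)\le A/2$ on the contraction factor, which the paper silently ignores.
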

\begin{proof}
We proceed the proof by induction. First, for any constant $C > \frac{\mathbb{E}[\|v_k^0-v_k^*\|^2]}{g(0)}$,  $\mathbb{E}[\|v_k^0-v_k^*\|^2] \leq C g(0)$. If $\mathbb{E}[\|v_k^t-v_k^*\|^2] \leq C g(t)$ holds, then for $t+1$, from Lemma~\ref{lemma:v_w_stochastic}, 
\begin{align}
    \mathbb{E}[\|v_{k+1}^t-v_k^*\|^2] &\leq \left(1-\frac{2g(t)}{A}\right)  C g(t) \nonumber \\
    &\quad + \frac{g(t)^2}{A} \frac{4}{A p_k^2} \left(\frac{(G_1+\lambda (M+\frac{G_1}{\mu}))^2}{(\mu+\lambda)^2} + g(t) + \frac{2 (G_1+\lambda (M+\frac{G_1}{\mu})) \sqrt{g(t)}}{\mu+\lambda} \right) + g(t)^2 \frac{4\lambda \sqrt{C}}{(\mu+\lambda)} \label{eq:lam_M_2} \\
    & \leq \left(1-\frac{2g(t)}{A}\right)  C g(t) + \frac{C g(t)^2}{A}
\end{align}
holds  for some $C < \infty$. Hence, 
\begin{align}
     \mathbb{E}[\|v_{k+1}^t-v_k^*\|^2] &\leq \left(1-\frac{2g(t)}{A}\right)  C g(t) + \frac{Cg(t)^2}{A} \\
     &= \left(1-\frac{g(t)}{A}\right) Cg(t) \\
     &\leq Cg(t+1),
\end{align}
completing the proof.
\end{proof}
\textcolor{black}{\paragraph{Discussions.}  Theorem~\ref{thm:w_t_v_t} also suggests how the percentage/power of malicious devices can affect convergence rates. The percentage/power of adversaries impacts both the optimal global solution $w^*$, and the convergence rate of the global model $g(t)$. (i) For $w^*$, it affects $M$ in Eq~\eqref{eq:heterogeneity_assum}---the distance between the local model on a benign device and the global model. This in turn affects $\lambda$ in Eq~\eqref{eq:lam_M_1} and~\eqref{eq:lam_M_2}, and the constant $C$. $\lambda$ can scale inversely proportional to $M$, which is consistent with our fairness/robustness analysis where $\lambda^*$ should decrease as the increase of $\tau^2$. (ii) For $g(t)$, the modularity of Ditto  allows for decoupling the convergence of personalized models and the global model (as demonstrated by this theorem), and we can plug in any previous algorithms and their analysis on the convergence rate $g(t)$ as a function of malicious devices.}

As a direct result of Theorem~\ref{thm:w_t_v_t}, we could state a result for \ditto when the global objective is FedAvg.
\begin{corollary}[Convergence of personalized models]\label{coro:convergence_stochastic}
Under the assumptions above, if the global objective $G(\cdot)$ is FedAvg, then under Algorithm~\ref{alg:1_fedavg}, for $k \in [K]$,
\begin{align}
    \mathbb{E}[\|v_k^t-v_k^*\|^2] = O(1/t).
\end{align}
\end{corollary}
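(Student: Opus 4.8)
The plan is to reduce the corollary entirely to Theorem~\ref{thm:w_t_v_t}, whose hypothesis is that the global iterate $w^t$ converges at some rate $g(t)$ satisfying the regularity condition $g(t+1)/g(t) \geq 1 - g(t)/A$ for a constant $A$. Since, as noted just before Theorem~\ref{thm:convg}, the global update in Algorithm~\ref{alg:1_fedavg} is exactly FedAvg and is decoupled from the personalized models $\{v_k\}$, the first step is to obtain a concrete $g(t)$ for $w^t$ by invoking an existing convergence analysis of FedAvg under our standing assumptions ($\mu$-strong convexity, smoothness, bounded stochastic gradients $G_1$, partial participation with selection probability $p_k$, and $r$ local steps). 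Under a suitably decaying global step size, such analyses (e.g.\ \citealp{li2018convergence}) yield a bound of the form $\mathbb{E}[\|w^t - w^*\|^2] \leq g(t) := B/(\gamma + t)$ for constants $B,\gamma$ depending on $\mu$, the smoothness parameter, $G_1$, $r$, and the $p_k$; in particular $g(t) = O(1/t)$ and $\lim_{t\to\infty} g(t) = 0$, matching the convergence assumption required above.

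The second step is to verify that this $g(t)$ satisfies the technical hypothesis of Theorem~\ref{thm:w_t_v_t}. With $g(t) = B/(\gamma+t)$ one computes $g(t+1)/g(t) = (\gamma+t)/(\gamma+t+1) = 1 - 1/(\gamma+t+1)$, whereas $1 - g(t)/A = 1 - B/(A(\gamma+t))$. The required inequality $g(t+1)/g(t) \geq 1 - g(t)/A$ is therefore equivalent to $A(\gamma+t) \leq B(\gamma+t+1)$, which holds for all $t \geq 0$ upon taking $A = B$. This supplies an explicit admissible constant $A$ and confirms the hypothesis.

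With both steps in place the conclusion is immediate: Theorem~\ref{thm:w_t_v_t} then guarantees a finite $C$ with $\mathbb{E}[\|v_k^t - v_k^*\|^2] \leq C\,g(t) = O(1/t)$ for every $k \in [K]$, which is precisely the claim. The only remaining bookkeeping is to confirm that the local learning-rate schedule used to drive $v_k$ matches the $\eta = 2g(t)/(A(\mu+\lambda)p_k)$ prescribed in Theorem~\ref{thm:w_t_v_t} when $g(t) = B/(\gamma+t)$, so that the inductive step of that theorem applies verbatim.

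I expect the only genuine content — and hence the main obstacle — to lie in the first step: importing a FedAvg rate in the precise $B/(\gamma+t)$ form and checking that its constants are compatible with our assumption set, especially the interaction of partial device sampling $p_k$ with local updating, which introduces client-drift terms that the cited analysis must already absorb. Everything downstream is a mechanical substitution of this $g(t)$ into the already-proved Theorem~\ref{thm:w_t_v_t}, together with the elementary verification of the regularity inequality carried out above.
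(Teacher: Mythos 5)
Your proposal is correct and takes essentially the same route as the paper: the paper likewise imports the $O(1/t)$ FedAvg rate from an existing analysis (Li et al., Theorem 2), writes it as $g(t) = D/(t+1) $, sets $A = D$, and concludes via Theorem~\ref{thm:w_t_v_t}. Your explicit verification of the regularity condition $g(t+1)/g(t) \geq 1 - g(t)/A$ (which reduces to $A(\gamma+t) \leq B(\gamma+t+1)$ with $A=B$) is the only step the paper leaves implicit in its choice of constants.
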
 
\begin{proof}
From~\citet{li2019convergence} Theorem 2, we know the global model for FedAvg converges at a rate of $O(1/t)$, i.e.,
\begin{align}
    \mathbb{E} [\|w^t-w^*\|^2] \leq \frac{D'}{t+B} \|w^1-w^*\|^2 \leq \frac{D}{t+1},
\end{align}
where $D, D', B$ are constants.
Setting $g(t) = \frac{D}{t+1}$ and $A=D$ in Theorem~\ref{thm:w_t_v_t}, it follows that $\mathbb{E}[\|v_k^t-v_k^*\|^2] = O(1/t)$.
\end{proof}

\newpage
\section{Experimental Details}\label{app:exp:detail}

\subsection{Datasets and Models} \label{app:exp:data}
We summarize the datasets, corresponding models, and tasks in Table~\ref{table: data} below. We evaluate the performance of \ditto with both convex and non-convex models across a set of FL benchmarks. In our datasets, we have both image data (FEMNIST, CelebA, Fashion MNIST), and text data (StackOverflow). 
\setlength{\tabcolsep}{2pt}
\begin{table}[h!]
	\begin{center}
		\caption{\small Summary of datasets.}
		\label{table: data}
		\vspace{1em}
		\scalebox{0.84}{
		\begin{tabular}{ llllll } 
			\toprule
			\textbf{Datasets}  & \textbf{ \# Devices} & \textbf{Data Partitions} & \textbf{Models} & \textbf{Tasks} \\
			\hline
			Vehicle~\cite{duarte2004vehicle}\footnotemark  & ~~~~23 & natural (each device is a vehicle) & linear SVM  & binary classification  \\
			FEMNIST~\cite{cohen2017emnist} & ~~~~205 & natural (each device is a writer) & CNN & 62-class classification \\
			CelebA~\cite{liu2015faceattributes} & ~~~~515 & natural (each device is a celebrity) & CNN & binary classification \\
			Fashion MNIST~\cite{xiao2017fashion} & ~~~~500  & synthetic (assign 5 classes to each device) &  CNN & 10-class classification  \\
			StackOverflow~\cite{TFF}\footnotemark &  ~~~~400& natural (each device is a user) & logistic regression & 500-class tag prediction \\
			FEMNIST (skewed)~\cite{cohen2017emnist} & ~~~~100  & synthetic (assign 5 classes to each device)~ & CNN & 62-class classification \\
			\bottomrule
		\end{tabular}}
	\end{center}
\end{table}

FEMNIST is Federated EMNIST, which is EMNIST~\cite{cohen2017emnist} partitioned by the writers of digits/characters created by a previous federated learning benchmark~\cite{caldas2018leaf}. We have two versions of FEMNIST in this work under different partitions with different levels of statistical heterogeneity. The manually-partitioned version is more heterogeneous than the naturally-partitioned one, as we assign 5 classes to each device. We show that the benefits of \ditto can be more significant on the skewed FEMNIST data (Table~\ref{table:femnist_full_skewed}). All results shown in the main text are based on the natural partition. We downsample the number of data points on each device (following the power law) for Vehicle. For FEMNIST, CelebA, and StackOverflow, we randomly sample devices (users) from the entire dataset. We use the full version of Fashion MNIST (which has been used in previous FL works~\cite{bhagoji2019analyzing}), and assign 5 classes to each device.

\footnotetext[2]{\url{http://www.ecs.umass.edu/~mduarte/Software.html}}
\footnotetext[3]{\url{https://www.tensorflow.org/federated/api\_docs/python/tff/simulation/datasets/stackoverflow/load\_data.}}

\subsection{Personalization Baselines} \label{app:exp:baseline}
We elaborate on the personalization baselines used in our experiments  (Table~\ref{table: compare_other_mtl}) which allow for partial device participation and local updating. We consider:
\begin{itemize}[leftmargin=*]
    \item \textbf{MOCHA}~\cite{smith2017federated}, a primal-dual framework for multi-task learning. It jointly learns the model parameters and a device relation matrix, and applicable to convex problems.
    \item \textbf{APFL}~\cite{deng2020adaptive}, which proposes to interpolate between local and global models for personalization. While it can reduce to solving local problems (without constraints on the solution space) as pointed out in~\cite{deng2020adaptive}, we find that in neural network applications, it has some personalization benefits, possibly due to the joint optimization solver.
    \item \textbf{Elastic Weight Consolidation (EWC)}, which takes into account the Fisher information when finetuning from the optimal global model~\cite{kirkpatrick2017overcoming, yu2020salvaging}. The local objective is $\min_{w} F_k(w) + \frac{\lambda}{2} \sum_i \mathbf{F}_{ii} \cdot (w[i]-w^*[i])^2$ where $[i]$ denotes the index of parameters and $\mathbf{F}_{ii}$ denotes the $i$-th diagonal of the empirical Fisher matrix $\mathbf{F}$ estimated using a data batch.
    \item \textbf{L2SGD}, which regularizes personalized models towards their mean~\cite{hanzely2020federated}. The proposed method requires full device participation once in a while. However, to remain consistent with the other solvers, we use their objective but adopt a different solver with partial device participation---each selected local device solving $\min_w F_k(w) + \frac{\lambda}{2} \|w - \bar{w}\|^2$ where $\bar{w}$ is the current mean of all personalized models $\bar{w}=\frac{1}{N} \sum_{k=1}^N w_k$. 
    
    \item \textbf{Mapper}, which is one of the three personalization methods proposed in~\citet{mansour2020three} that needs the minimal amount of meta-information. Similar to APFL, it is also motivated by model interpolation.
    
     \item  \textbf{Per-FedAvg} (HF)~\cite{fallah2020personalized} which applies MAML~\cite{finn2017model} to personalize federated models with an Hessian-product approximation to approximate the second-order gradients. 
 
    \item \textbf{Symmetrized KL} 
    constrains the symmetrized KL divergence between the prediction of finetuned models and that of the initialization. Specifically, in our setting, the local objective is 
    $\min_w F_k(w) + \frac{\lambda}{2} \left(D_{\text{KL}}(f(w) || f(w^*)) + D_{\text{KL}}(f(w^*) || f(w))\right)$ where $D_{\text{KL}}(P || Q)$ is the KL-divergence between $P$ and $Q$, and $f(\cdot)$ denotes the softmax probability for classification.

\end{itemize}

\section{Additional and Complete Experiment Results} \label{app:exp:full}

\subsection{Comparing with Finetuning} \label{app:exp:full:two_solver}
As discussed in Section~\ref{sec:solver}, finetuning on $h_k$ for each device $k$ is a possible solver for \ditto. In non-convex cases, however, starting from a corrupted $w^*$ may result in inferior performance compared with Algorithm~\ref{alg:1}. We provide a simple example to illustrate this point. To perform finetuning, we run different numbers of epochs of mini-batch SGD on the \ditto objective for each device in the network, and pick the best one. 
As shown in Figure~\ref{fig:finetuning_full} below, finetuning at round 5,000 will not result in a good final accuracy.  We observe that one could also stop at early iterations and then finetune. However, it is difficult to do so in practice based on the training or validation data alone, as shown in Figure~\ref{fig:loss_curve}.

\begin{figure}[h]
\centering
\begin{minipage}{.44\textwidth}
  \centering
  \includegraphics[width=0.75\linewidth]{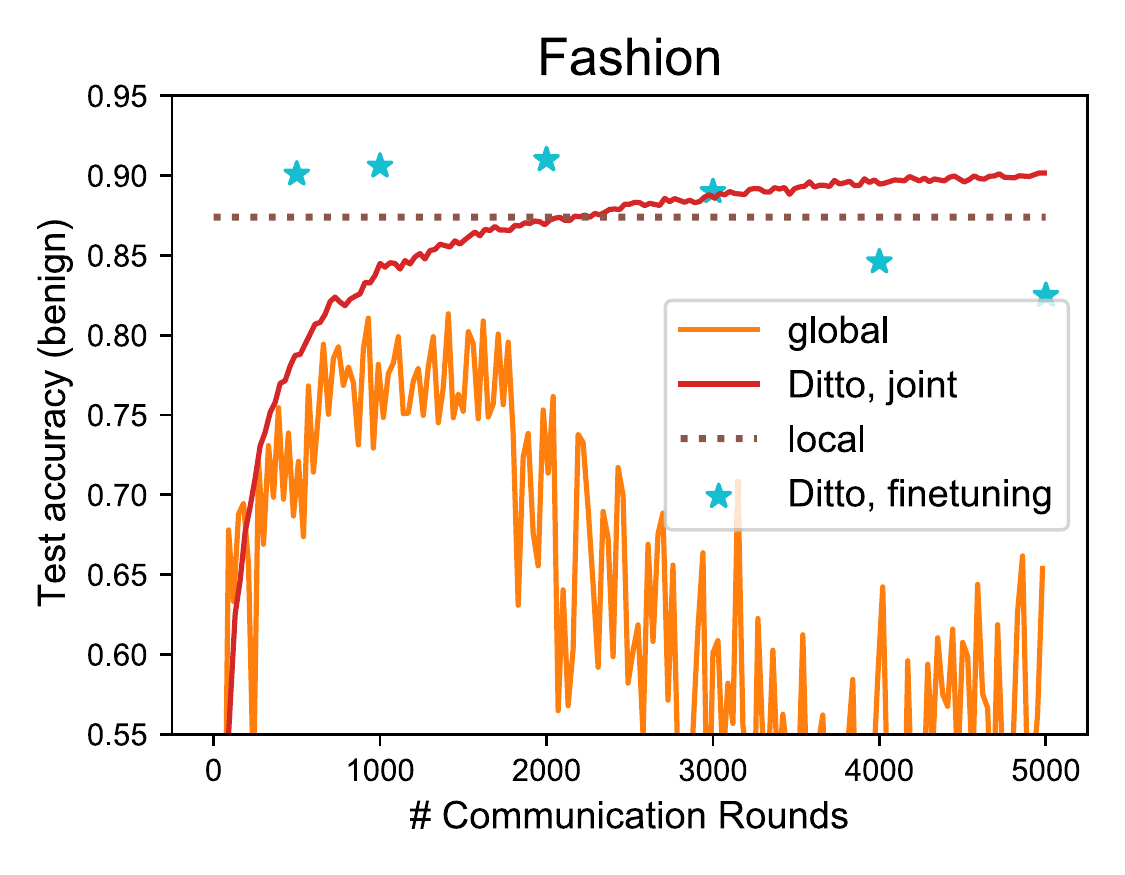}
  \vspace{-0.15in}
  \captionof{figure}{`\ditto, joint' achieves high test accuracy on benign devices. The performance can also be good if we first early stop at some specific points and then finetune.}
  \label{fig:finetuning_full}
\end{minipage}
\hfill
\begin{minipage}{.53\textwidth}
        \centering
        \begin{subfigure}{0.48\textwidth}
        \includegraphics[width=0.99\textwidth]{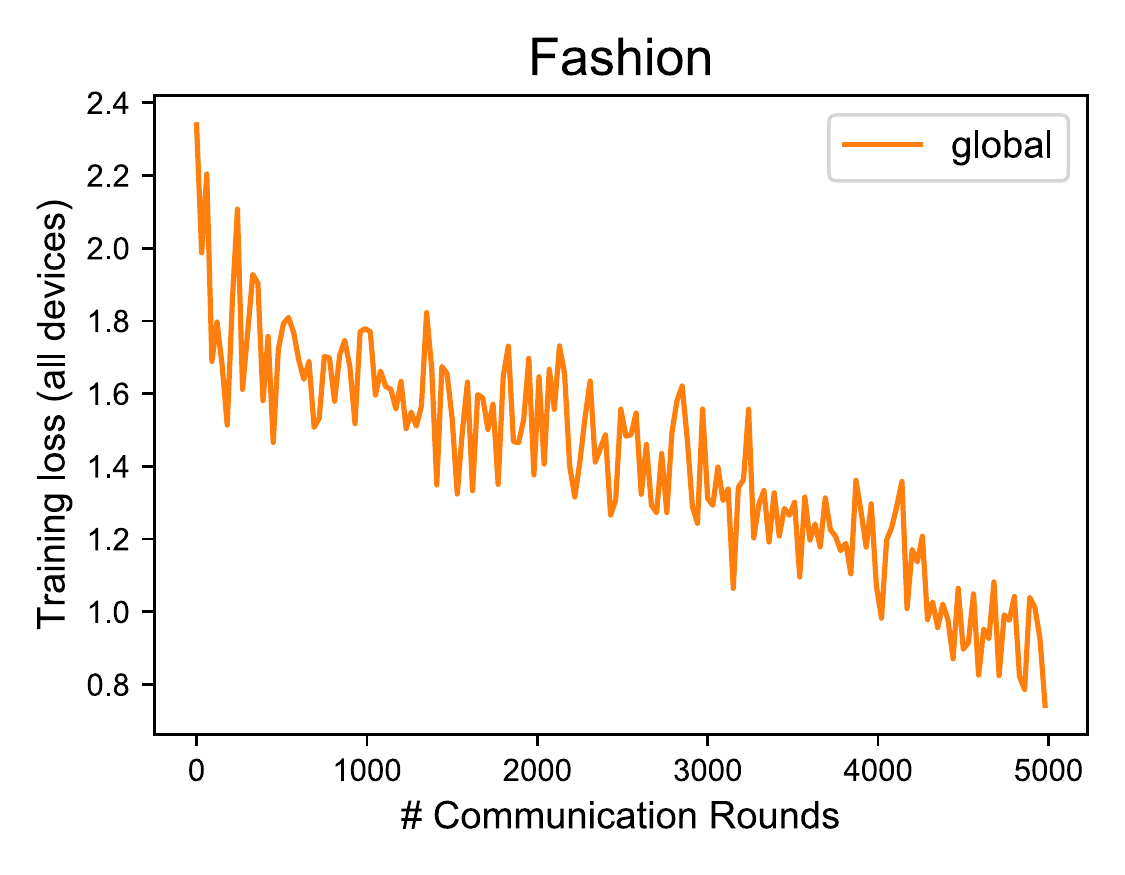}
        \end{subfigure}
        \hfill
        \begin{subfigure}{0.48\textwidth}
        \includegraphics[width=0.99\textwidth]{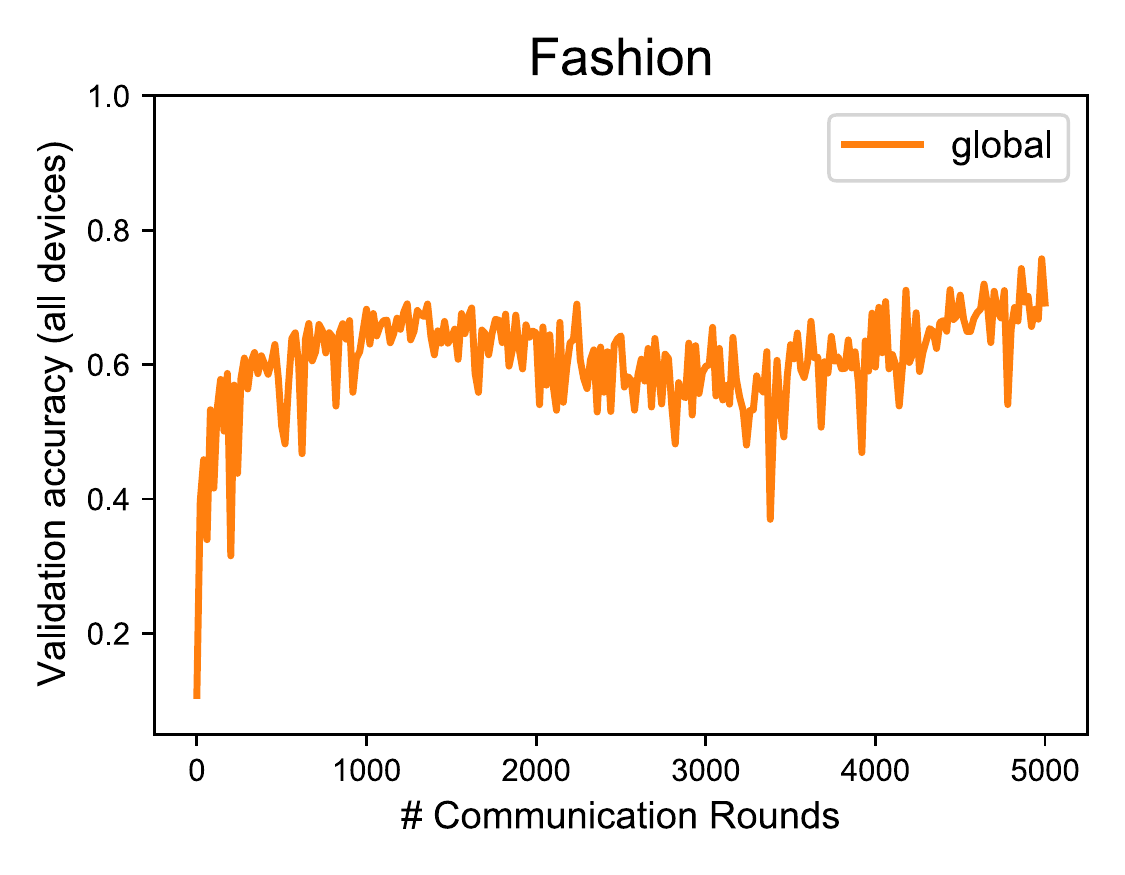}
        \end{subfigure}
        \captionof{figure}{Finetuning is not very practical as it is difficult to determine when to stop training the global model by looking at the training loss (left) or validation accuracy (right) on all devices (without knowing which are benign).}  
        \label{fig:loss_curve}
\end{minipage}
\end{figure}

\subsection{Tuning $\lambda$} \label{app:exp:full:lambda}

We assume that the server \textit{does not} have knowledge of which devices are benign vs. malicious, and we have each device \textit{locally} select and apply a best $\lambda$ from a candidate set of three values based on their validation data. For benign devices, this means they will pick a $\lambda$ based on their clean validation signal. For malicious devices, how they perform personalization (i.e., selecting $\lambda$) does not affect the corrupted global model updates they send, which are independent of $\lambda$. 
We further assume the devices have some knowledge of how `strong' the attack is.
We define strong attacks as (i) all of model replacement attacks (A3) where the magnitude of the model updates from malicious devices can scale by $>10\times$, and (ii) other attacks where more than half of the devices are corrupted.
In particular, for devices with very few validation samples (less than 4), we use a fixed small $\lambda$ ($\lambda$=0.1) for strong attacks, and use a fixed relatively large $\lambda$ ($\lambda$=1) for all other attacks. For devices with more than 5 validation data points, we let each select $\lambda$ from $\{0.05, 0.1, 0.2\}$ for strong attacks, and select $\lambda$ from $\{0.1, 1, 2\}$ for all other attacks. For the StackOverflow dataset, we tune $\lambda$ from $\{0.01,0.05,0.1\}$ for strong attacks, and $\{0.05,0.1,0.3\}$ for all other attacks. We directly evaluate our hyperparameter tuning strategy in Table~\ref{table:tune_lambda} below---showing that this dynamic tuning heuristic works well relative to an ideal, but more unrealistic strategy that picks the best $\lambda$ based on knowledge of which devices are benign vs. malicious (i.e., by only using the validation data of the benign devices).

\setlength{\tabcolsep}{2pt}
\begin{table}[h]
	\caption{Results (test accuracy and standard deviation) of using dynamic $\lambda$'s. `Best $\lambda$' refers to the results of selecting the best (fixed) $\lambda$ based on average validation performance on benign devices (assuming the server knows which devices are malicious).}
	\vspace{1em}
	\centering
	\label{table:tune_lambda}
	\scalebox{0.87}{
	\begin{tabular}{l c|ccc|ccc|ccc} 
	   \toprule[\heavyrulewidth]
        \textbf{FEMNIST} & & \multicolumn{3}{c|}{{\bf A1} (ratio of adversaries)}  &  \multicolumn{3}{c|}{{\bf A2} (ratio of adversaries)} & \multicolumn{3}{c}{{\bf A3} (ratio of adversaries)}\\
        \cmidrule(r){3-11}
         Methods &  clean & 20\% & 50\% & 80\% & 20\% & 50\% & 80\% & 10\% & 15\% & 20\%  \\
        \midrule
        best $\lambda$	    &  0.836 (.10) & 0.803 (.10) & 0.767 (.10) & 0.672 (.14) & 0.792 (.11) &  0.743 (.14) & 0.674 (.14) & 0.691 (.15) & 0.664 (.14) & 0.650 (.14) \\
        dynamic $\lambda$'s & 0.834 (.09) & 0.802 (.10) & 0.762 (.11) & 0.672 (.13) & 0.801 (.09) & 0.700 (.15) & 0.675 (.14) & 0.685 (.15) & 0.650 (.14) & 0.613 (.13)\\
        \hline
        \hline
        \textbf{Fashion} & & \multicolumn{3}{c|}{{\bf A1} (ratio of adversaries)}  &  \multicolumn{3}{c|}{{\bf A2} (ratio of adversaries)} & \multicolumn{3}{c}{{\bf A3} (ratio of adversaries)}\\
        \cmidrule(r){3-11}
         Methods &  clean & 20\% & 50\% & 80\% & 20\% & 50\% & 80\% & 10\% & 20\% & 50\%  \\
        \midrule
        best $\lambda$  & {0.946 (.06)}	& {0.944 (.08)}  & {0.935 (.07)} &	{0.925 (.07)}	& {0.943 (.08)}	& {0.930 (.07)} & {0.912 (.08)} & {0.914 (.09)} & {0.903 (.09)} & {0.873 (.09)} \\
        dynamic $\lambda$'s & 0.943 (.06) & 0.944 (.07) & 0.937 (.07) & 0.907 (.10) & 0.938 (.07) & 0.930 (.08) & 0.913 (.09) & 0.921 (.09) & 0.902 (.09) & 0.872 (.11) \\
        \hline
        \hline
        \textbf{CelebA} & & \multicolumn{3}{c|}{{\bf A1} (ratio of adversaries)}  &  \multicolumn{3}{c|}{{\bf A2} (ratio of adversaries)} & \multicolumn{3}{c}{{\bf A3} (ratio of adversaries)}\\
        \cmidrule(r){3-11}
         Methods &  clean & 20\% & 50\% & 80\% & 20\% & 50\% & 80\% & 10\% & 15\% & 20\%  \\
        \midrule
        best $\lambda$	&  0.914 (.18) & 0.828 (.22) & 0.721 (.27) & 0.724 (.28) & 0.872 (.22) &  0.826 (.26) & 0 708 (.29) & 0.699 (.28) & 0.694 (.27) & 0.689 (.28) \\
        dynamic $\lambda$'s & 0.911 (.16) & 0.820 (.26) & 0.714 (.28) & 0.724 (.28) & 0.872 (.22) & 0.826 (.26) & 0.706 (.28) & 0.699 (.28) & 0.694 (.27) & 0.689 (.28) \\
        \hline
        \hline
        \textbf{Vehicle} & & \multicolumn{3}{c|}{{\bf A1} (ratio of adversaries)}  &  \multicolumn{3}{c|}{{\bf A2} (ratio of adversaries)} & \multicolumn{3}{c}{{\bf A3} (ratio of adversaries)}\\
        \cmidrule(r){3-11}
         Methods &  clean & 20\% & 50\% & 80\% & 20\% & 50\% & 80\% & 10\% & 20\% & 50\%  \\
        \midrule
        best $\lambda$	& 0.882 (.05) & 0.862 (.05) & 0.841 (.09) & 0.851 (.06) & 0.884 (.05) & 0.872 (.06) & 0.879 (.04) & 0.872 (.06) & 0.829 (.08) & 0.827 (.08) \\
        dynamic $\lambda$'s & 0.872 (.05) & 0.857 (.06) & 0.827 (.08) & 0.834 (.05) & 0.872 (.06) & 0.867 (.07) & 0.848 (.04)  & 0.839 (.08) & 0.824 (.08) & 0.822 (.09) \\
        \hline
        \hline
        \textbf{StackOverflow} & & \multicolumn{3}{c|}{{\bf A1} (ratio of adversaries)}  &  \multicolumn{3}{c|}{{\bf A2} (ratio of adversaries)} & \multicolumn{3}{c}{{\bf A3} (ratio of adversaries)}\\
        \cmidrule(r){3-11}
         Methods &  clean & 20\% & 50\% & 80\% & 20\% & 50\% & 80\% & 10\% & 20\% & 50\%  \\
        \midrule
        best $\lambda$	& 0.315 (.16)& 0.325 (.16)&  0.315 (.17)&  0.313 (.15)&  0.314 (.16)&  0.350 (.16)&  0.312 (.14)&  0.316 (.17)&  0.321 (.17)& 0.327 (.17) \\
        dynamic $\lambda$'s & 0.317 (.17) & 0.323 (.18) & 0.314 (.16) &0.359 (.16) & 0.326 (.17) & 0.317 (.17) & 0.301 (.17)  & 0.318 (.17) & 0.319 (.17) & 0.311 (.17) \\
    \bottomrule[\heavyrulewidth]
	\end{tabular}}
\end{table}

\newpage
\subsection{\ditto Augmented with Robust Baselines} \label{app:exp:full:robustify}
In Section~\ref{sec:exp:other_properties}, we demonstrate that the performance of \ditto can be further improved when it is combined with robust baselines (e.g., learning a robust $w^*$ via robust aggregation). Here, we report full results validating this claim in Table~\ref{table:ditto+robust_baseline_full} below.

\setlength{\tabcolsep}{2pt}
\begin{table}[h!]
	\caption{\ditto augmented with robust baselines (full results).}
	\centering
	\vspace{1em}
	\label{table:ditto+robust_baseline_full}
	\scalebox{0.9}{
	\begin{tabular}{l ccc|ccc|ccc} 
	   \toprule[\heavyrulewidth]
        \textbf{FEMNIST} & \multicolumn{3}{c|}{{\bf A1} (ratio of adversaries)}  & \multicolumn{3}{c|}{{\bf A2} (ratio of adversaries)} & \multicolumn{3}{c}{{\bf A3} (ratio of adversaries)} \\
        \cmidrule(r){2-10}
         Methods &  20\%  & 50\% & 80\%  & 20\%  & 50\% & 80\%  &  10\%  & 15\% & 20\%   \\
        \midrule
        global & 0.773 (.11) & 0.727 (.12) & 0.574 (.15) & 0.774 (.11) & 0.703 (.14) & 0.636 (.15) & 0.517 (.14)	  & 0.487 (.14)  & 0.364 (.13)  \\
        clipping &  0.791 (.11)	& 0.736 (.11) & 0.408 (.14)	& 0.791 (.11)	& 0.736 (.13) & 0.656 (.13)	  & 0.795 (.11)	& 0.060 (.05)  & 0.061 (.05) \\
        \ditto	    & 0.803 (.10)	& \textbf{0.767 (.10)} & \textbf{0.672} (.14) & 0.792 (.11)  & 0.743 (.14) & 0.674 (.14)	    & 0.691 (.15)	&	0.664 (.14) &  0.650 (.14) \\
        \ditto + clipping & \textbf{0.810 (.11)} & 0.762 (.11) & 0.645 (.13) & \textbf{0.808 (.11)}	& \textbf{0.757 (.11)} & \textbf{0.684 (.13)}	& \textbf{0.813 (.13)} & \textbf{0.707 (.15)}	&  \textbf{0.672 (.14)} \\
        \hline
        \hline
        \textbf{CelebA} & \multicolumn{3}{c|}{{\bf A1} (ratio of adversaries)}  & \multicolumn{3}{c|}{{\bf A2} (ratio of adversaries)} & \multicolumn{3}{c}{{\bf A3} (ratio of adversaries)} \\
        \cmidrule(r){2-10}
         Methods &  20\% & 50\% & 80\%  & 20\% & 50\% & 80\%  &  10\%  & 15\% & 20\%  \\
        \midrule
        global & 0.810 (.22)	& 0.535 (.26) & 0.228 (.21)	& 0.869 (.22)	& 0.823 (.23) & 0.656 (.26) & 0.451 (.27) & 0.460 (.29) & 0.515 (.31) \\
        multi-Krum	        & \textbf{0.882 (.22)}	& 0.564 (.26) & 0.107 (.19)	& 0.887 (.21) & 0.891 (.20)	& 0.617 (.30) & 0.512 (.27) & 0.529 (.27) & 0.430 (.26) \\ 
        \ditto    & 0.828 (.22)	& 0.721 (.27) & 0.724 (.28) & 0.872 (.22)  & 0.826 (.26)  &  0.708 (.29) & 0.699 (.28)	& 0.694 (.27) & 0.689 (.28)   \\
        \ditto + multi-Krum & 0.875 (.20) & \textbf{0.722 (.26)} & \textbf{0.733 (.27)} & \textbf{0.903 (.20)} & \textbf{0.902 (.21)} & \textbf{0.885 (.23)} & \textbf{0.713 (.28)} & \textbf{0.709 (.28)} &  \textbf{0.713 (.28)} \\
    \bottomrule[\heavyrulewidth]
	\end{tabular}}
\end{table}

\subsection{\ditto Complete Results} \label{app:exp:full:big_table}

In Section~\ref{sec:exp:robust}, we present partial results on three strong attacks on two datasets. Here, we provide full results showing the robustness and fairness of \ditto on all attacks and all datasets compared with all defense baselines. We randomly split local data on each device into 72\% train, 8\% validation, and 20\% test sets, and report all results on test data. {We use a learning rate of 0.01 for StackOverflow, 0.05 for Fashion MNIST and 0.1 for all other datasets; and batch size 16 for CelebA and Fashion MNIST, 32 for FEMNIST and Vehicle, and 100 for StackOverflow.} 
For every dataset, we first run FedAvg on clean data to determine the number of communication rounds. Then we run the same number of rounds for all attacks on that dataset.

For our robust baselines, `median' means coordinate-wise median. For Krum, multi-Krum, $k$-norm, and $k$-loss, we assume the server knows the expected number of malicious devices when aggregation. In other words, for $k$-norm, we filter out the updates with the $k$ largest norms where $k$ is set to the expected number of malicious devices. Similarly, for $k$-loss, we only use the model update with the $k$+$1$-th largest training loss. For gradient clipping, we set the threshold to be the median of the gradient norms coming from all selected devices at each round. FedMGDA+ has an additional $\varepsilon$ hyperparameter which we select from $\{0, 0.1, 0.5, 1\}$ based on the validation performance on benign devices.  For the finetuning (only on neural network models) baseline, we run 50 epochs of mini-batch SGD on each device on the local objective $F_k$ starting from $w^*$. We see that \ditto can achieve better fairness and robustness in most cases. In particular, on average of all datasets and all attack scenarios, \ditto (with dynamic $\lambda$'s) achieves ~6\% absolute accuracy improvement compared with the strongest robust baseline. In terms of fairness, \ditto is able to reduce the variance of test accuracy by $10\%$ while improving the average accuracy by 5\% relative to state-of-the-art methods for fair FL (without attacks).

\setlength{\tabcolsep}{2pt}
\begin{table}[h]
	\caption{Full results (average and standard deviation of test accuracy across all devices) on the Vehicle dataset with linear SVM. On this convex problem, we additionally compare with another primal-dual MTL method MOCHA~\citep{smith2017federated}, which suggests the fairness/robustness benefits of other MTL approaches.}
	\vspace{1em}
	\centering
	\label{table:vehicle_full}
	\scalebox{0.87}{
	\begin{tabular}{l c|ccc|ccc|ccc} 
	   \toprule[\heavyrulewidth]
        \textbf{Vehicle} & & \multicolumn{3}{c|}{{\bf A1} (ratio of adversaries)}  &  \multicolumn{3}{c|}{{\bf A2} (ratio of adversaries)} & \multicolumn{3}{c}{{\bf A3} (ratio of adversaries)}\\
        \cmidrule(r){3-11}
         Methods &  clean & 20\% & 50\% & 80\%  & 20\%  & 50\% & 80\% & 10\% & 20\%  & 50\%  \\
        \midrule
        global	   & 0.866 (.16) &0.847 (.08) & 0.643 (.10) & 0.260 (.27)  & 0.866 (.18) & 0.840 (.21) & 0.762 (.27) & 0.854 (.17) &	0.606 (.08)  & 0.350 (.19) \\
        local 	   & 0.836 (.07) &0.835 (.08) &0.840 (.09)&\textbf{0.857 (.09)}  & 0.835 (.08)	& 0.840 (.09) & 0.857 (.09) & 0.840 (.07) &	0.835 (.08)	 & \textbf{0.840 (.09)}   \\
        fair & 0.870 (.08) & 0.721 (.06) & 0.572 (.08) & 0.404 (.13) & 0.746 (.12) & 0.704 (.15) & 0.706 (.20) & 0.775 (.13) & 0.628 (.25) & 0.448 (.11) \\
        median     & 0.863 (.16) &0.861 (.18) &0.676 (.11)&0.229 (.31)  & 0.864 (.18) & 0.838 (.21)& 0.774 (.28) & 0.867 (.17) & 0.797 (.07)	 & 0.319 (.17)   \\
        Krum       & 0.852 (.17) &0.853 (.19) &0.830 (.22) &0.221 (.32)  & 0.851 (.19) & 0.828 (.22)  & 0.780 (.31) & 0.867 (.17) & \textbf{0.866 (.18)}	 & 0.588 (.14) \\
        multi-Krum & 0.866 (.16) &0.867 (.18) &0.839 (.20) & 0.220 (.32)  & 0.867 (.18)	&0.839 (.22)& 0.770 (.31) &0.868 (.17)	&0.836 (.08)	 & 0.406 (.15) \\
        clipping   & 0.864 (.16) &0.865 (.17) &0.678 (.34)&0.234 (.30)  & 0.865 (.18) &0.839 (.22)	& 0.764 (.27) & 0.868 (.17) &	0.789 (.07)	 & 0.315 (.17) \\
        k-norm	   & 0.866 (.16) & \textbf{0.867 (.17)} &0.838 (.21)&0.222 (.32)  & 0.867 (.18) & 0.839 (.22)	& 0.778 (.31) & 0.867 (.17) &	0.844 (.09)	 & 0.458 (.16) \\
        k-loss     & 0.850 (.05) &0.755 (.03) &0.732 (.09)&0.217 (.31)  & 0.852 (.06)&0.840 (.07)	& 0.825 (.09) &	0.866 (.17) &0.692 (.08)  & 0.328 (.16) \\
        FedMGDA+ & 0.860 (.16) & 0.835 (.09) & 0.674 (.14) & 0.270 (.26) & 0.860 (.18) & 0.843 (.22) & 0.794 (.26) & 0.836 (.17) & 0.757 (.07) & 0.676 (.17) \\ 
        MOCHA & 0.880 (.04) & 0.848 (.07) & 0.832 (.08) & 0.829 (.10) & 0.846 (.06) & 0.843 (.07) & 0.833 (.10) & 0.862 (.06) & 0.844 (.07) & 0.834 (.07) \\
        \midrule
        \ditto, $\lambda$=0.1	    & 0.845 (.07)	& 0.841 (.08)& \textbf{0.841 (.09)}	& 0.851 (.06)	& 0.844 (.07) &0.848 (.08)	& 0.866 (.05)	& 0.838 (.07) & 0.829 (.08) & 0.827 (.08) \\
        \ditto, $\lambda$=1	    & 0.875 (.05)	& 0.859 (.06)&0.821 (.07)	& 0.776 (.08)	& 0.875 (.06)	&0.870 (.07)& \textbf{0.879 (.04)}	&0.860 (.07) & 0.813 (.07) & 0.757 (.08) \\
        \ditto, $\lambda$=2	    & \textbf{0.882 (.05)}	& {0.862 (.05)} & 0.800 (.07) & 0.709 (.12)	& \textbf{0.884 (.05)}	& \textbf{0.872 (.06)} & 0.869 (.04)	& \textbf{0.872 (.06)} & 0.791 (.06) & 0.690 (.09)\\
    \bottomrule[\heavyrulewidth]
	\end{tabular}}
\end{table}

\setlength{\tabcolsep}{2pt}
\begin{table}[h]
	\caption{Full results  (average and standard deviation of test accuracy across all devices) on FEMNIST.}
	\vspace{1em}
	\centering
	\label{table:femnist_full}
	\scalebox{0.87}{
	\begin{tabular}{l c|ccc|ccc|ccc} 
	   \toprule[\heavyrulewidth]
        \textbf{FEMNIST} & & \multicolumn{3}{c|}{{\bf A1} (ratio of adversaries)}  &  \multicolumn{3}{c|}{{\bf A2} (ratio of adversaries)} & \multicolumn{3}{c}{{\bf A3} (ratio of adversaries)}\\
        \cmidrule(r){3-11}
         Methods &  clean & 20\% & 50\% & 80\% & 20\% & 50\% & 80\% & 10\% & 15\% & 20\%  \\
        \midrule
        global	    & 0.804 (.11) &	0.773 (.11)	& 0.727 (.12) & 0.574 (.15)	& 0.774 (.11) & 0.703 (.14) & 0.636 (.15)	  & 0.517 (.14)	  & 0.487 (.14)  & 0.364 (.13) \\
        local       & 0.628 (.15) &	0.620 (.14)	& 0.627 (.14) & 0.607 (.13)	& 0.620 (.14) & 0.627 (.14) & 0.607 (.13)	  & 0.622 (.14)	  & 0.621 (.14)  & 0.620 (.14) \\
        fair        & 0.809 (.11) &	0.636 (.15)	& 0.562 (.13) & 0.478 (.12)	& 0.440 (.15) & 0.336 (.12) & 0.363 (.12)	  & 0.353 (.12)	  & 0.316 (.12)   & 0.299 (.11) \\
        median      & 0.733 (.14) &	0.627 (.15)	& 0.576 (.15) & 0.060 (.04)	& 0.673 (.14) & 0.645 (.14) & 0.564 (.15)	  & 0.628 (.14)	  & 0.573 (.15)  & 0.577 (.16) \\
        Krum        & 0.717 (.16) &	0.059 (.05)	& 0.096 (.07) & 0.091 (.07)	& 0.604 (.14) & 0.062 (.25) & 0.024 (.02)     & 0.699 (.15)	  & 0.719 (.13)  & 0.648 (.14) \\
        multi-Krum  & 0.804 (.11) &	0.790 (.11)	& 0.759 (.11) & 0.115 (.07)	& 0.789 (.11) & 0.762 (.11) & 0.014 (.02)     & 0.529 (.14)	  &  0.664 (.15) & 0.561 (.14) \\
        clipping    & 0.805 (.11) &	0.791 (.11)	& 0.736 (.11) & 0.408 (.14)	& 0.791 (.11)	& 0.736 (.13) & 0.656 (.13)	  & 0.795 (.11)	& 0.060 (.05)  & 0.061 (.05) \\
        k-norm      & 0.806 (.11) &	0.785 (.11)	& 0.760 (.12) & 0.060 (.05)	& 0.788 (.10) & \textbf{0.765 (.11)} & 0.011 (.02)   & 0.060 (.04) & 0.647 (.15)  & 0.562 (.15) \\
        k-loss      & 0.762 (.11) &	0.606 (.13)	& 0.599 (.13) & 0.596 (.13)	& 0.432 (.12) & 0.508 (.13) & 0.572 (.14)	& 0.060 (.04) &  0.009 (.02) & 0.006 (.01) \\
        FedMGDA+    & 0.803 (.12) & 0.794 (.12) & 0.730 (.12) & 0.057 (.04) & \textbf{0.793 (.12)} & 0.753 (.12) & 0.671 (.14)   & \textbf{0.798 (.11)}&  \textbf{0.794 (.12)} & \textbf{0.791 (.11)} \\
        finetuning & 0.815 (.09) & 0.778 (.11) & 0.734 (.12) & \textbf{0.671 (.13)} & 0.764 (.11) & 0.695 (.18) & 0.646 (.14) & 0.688 (.13) & 0.671 (.14) & 0.655 (.13) \\
        \midrule
        \ditto, $\lambda$=0.01 & 0.800 (.15) & 0.709 (.15) & 0.683 (.17) & 0.642 (.13)	& 0.701 (.14) & 0.684 (.14) & 0.645 (.14) & 0.650 (.14) & 0.628 (.14) & 0.650 (.14) \\
        \ditto, $\lambda$=0.1  & 0.827 (.10) & 0.794 (.11) & 0.755 (.13) & 0.666 (.14)	& 0.786 (.13) & 0.743 (.14) & \textbf{0.674 (.14)} & 0.691 (.15) & 0.664 (.14) & 0.640 (.14) \\
        \ditto, $\lambda$=1    & \textbf{0.836 (.10)} & \textbf{0.803 (.10)} & \textbf{0.767 (.10)} & \textbf{0.672 (.14)}	& \textbf{0.792 (.11)} & 0.691 (.17) & 0.575 (.17)	& 0.642 (.12) & 0.595 (.14) & 0.554 (.15) \\
    \bottomrule[\heavyrulewidth]
	\end{tabular}}
\end{table}

\setlength{\tabcolsep}{2pt}
\begin{table}[h]
	\caption{Full results (average and standard deviation of test accuracy across all devices) on Fashion MNIST.}
	\vspace{1em}
	\centering
	\label{table:fashion_full}
	\scalebox{0.87}{
	\begin{tabular}{l c|ccc|ccc|ccc} 
	   \toprule[\heavyrulewidth]
        \multicolumn{2}{l|}{\textbf{Fashion MNIST}} & \multicolumn{3}{c|}{{\bf A1} (ratio of adversaries)}  &  \multicolumn{3}{c|}{{\bf A2} (ratio of adversaries)} & \multicolumn{3}{c}{{\bf A3} (ratio of adversaries)}\\
        \cmidrule(r){3-11}
         Methods &  clean & 20\% & 50\% & 80\%  & 20\%  & 50\% & 80\%  & 10\%  & 20\% & 50\%  \\
        \midrule
        global     & 0.911 (.08)	& 0.897 (.08)	& 0.855 (.10) & 0.753 (.13)	& 0.900 (.08)	& 0.882 (.09) & 0.857 (.10)	& 0.753 (.10)	& 0.551 (.13) & 0.275 (.12) \\ 
        local      & 0.876 (.10)	& 0.874 (.10)	& 0.876 (.11) & 0.879 (.10)	& 0.874 (.10)	& 0.876 (.11) & 0.879 (.10)	& 0.877 (.10)	& 0.874 (.10) & \textbf{0.876 (.11)} \\
        fair       & 0.909 (.07)	& 0.751 (.12)	& 0.637 (.13) & 0.547 (.11)	& 0.731 (.13)	& 0.637 (.14) & 0.635 (.14)	& 0.653 (.13)	& 0.601 (.12) & 0.131 (.16) \\
        median     & 0.884 (.09)	& 0.853 (.10)	& 0.818 (.12) & 0.606 (.17)	& 0.885 (.09)	& 0.883 (.09) & 0.864 (.10)	& 0.856 (.09)	& 0.829 (.11) & 0.725 (.15) \\
        Krum       & 0.838 (.13)	& 0.864 (.11)	& 0.818 (.13) & 0.768 (.15)	& 0.847 (.12)	& 0.870 (.11) & 0.805 (.13)	& 0.868 (.11)	& 0.866 (.11) & 0.640 (.18) \\
        multi-Krum & 0.911 (.08)	& 0.907 (.08)	& 0.889 (.10) & 0.793 (.12)	& 0.849 (.10)	& 0.827 (.12) & 0.095 (.12)	& 0.804 (.11)	& 0.860 (.09) & 0.823 (.13) \\
        clipping   & 0.913 (.07)	& 0.905 (.08)	& 0.875 (.10) & 0.753 (.12)	& 0.904 (.08)	& 0.886 (.09) & 0.856 (.11)	& 0.901 (.08)	& 0.844 (.11) & 0.477 (.13) \\
        k-norm     & 0.911 (.08)	& 0.908 (.08)	& 0.888 (.10) & 0.118 (.08)	& 0.906 (.08)	& 0.893 (.09) & 0.096 (.07)	& 0.765 (.14)	& 0.854 (.10) & 0.828 (.12) \\
        k-loss     & 0.898 (.08)	& 0.856 (.09)	& 0.861 (.10) & 0.851 (.31)	& 0.876 (.09)	& 0.866 (.11) & 0.870 (.10)	& 0.538 (.14)	& 0.257 (.13) & 0.092 (.13) \\
        FedMGDA+   & 0.915 (.08)    & 0.907 (.08)   & 0.874 (.10) & 0.753 (.13) & 0.911 (.08)   & 0.900 (.09) & 0.873 (.10) & \textbf{0.914 (.08)} & \textbf{0.904 (.08)} & 0.869 (.10) \\
        finetuning  & 0.945 (.06) & \textbf{0.946 (.07)} & \textbf{0.935 (.07)} & 0.922 (.08) & \textbf{0.945 (.07)} & \textbf{0.930 (.08)} & \textbf{0.923 (.08)} & \textbf{0.915 (.08)} &0.871 (.11)  & 0.764 (.15) \\
        \midrule
        \ditto, $\lambda$=0.1  & 0.929 (.09)	 & 0.920 (.09)	& 0.909 (.10) & 0.897 (.10)	& 0.921 (.09)	& 0.914 (.09) & 0.905 (.08)	& \textbf{0.914 (.09)} & \textbf{0.903 (.09)} & \textbf{0.873 (.09)}  \\
        \ditto, $\lambda$=1    & \textbf{0.946 (.06)}	& \textbf{0.944 (.08)}  & \textbf{0.935 (.07)} &	\textbf{0.925 (.07)}	& \textbf{0.943 (.08)}	& \textbf{0.930 (.07)} & {0.912 (.08)}	& 0.887 (.09) & 0.831 (.10) & 0.740 (.12) \\
        \ditto, $\lambda$=2    & 0.945 (.06)	& 0.942 (.06)	& \textbf{0.935 (.07)} & 0.917 (.07)	& 0.936 (.07)	& 0.923 (.08) & 0.906 (.08)	& 0.871 (.09) & 0.785 (.11) & 0.606 (.14) \\
    \bottomrule[\heavyrulewidth]
	\end{tabular}}
\end{table}

\setlength{\tabcolsep}{2pt}
\begin{table}[h]
	\caption{Full results (average and standard deviation of test accuracy across all devices) on FEMNIST (skewed).}
	\vspace{1em}
	\centering
	\label{table:femnist_full_skewed}
	\scalebox{0.87}{
	\begin{tabular}{l c|ccc|ccc|ccc} 
	   \toprule[\heavyrulewidth]
        \textbf{FEMNIST (skewed)} & & \multicolumn{3}{c|}{{\bf A1} (ratio of adversaries)}  &  \multicolumn{3}{c|}{{\bf A2} (ratio of adversaries)} & \multicolumn{3}{c}{{\bf A3} (ratio of adversaries)}\\
        \cmidrule(r){3-11}
         Methods &  clean & 20\%  & 50\% & 80\%  & 20\%  & 50\% & 80\%  & 10\% & 15\% & 20\%  \\
        \midrule
        global	    & 0.720 (.24)	& 0.657 (.28)	& 0.585 (.30) & 0.435 (.23)	& 0.688 (.26) & 0.631 (.24) & 0.589 (.26) & 0.023 (.11)	&  0.038 (.18) & 0.039 (.18) \\
        local 	    & 0.915 (.18)	& 0.903 (.21)	& 0.937 (.18) & 0.902 (.19)	& 0.903 (.21) & 0.937 (.18) &  0.902 (.19)	& 0.881 (.21)	&  \textbf{0.912 (.18)} & \textbf{0.903 (.21)} \\
        fair	    & 0.716 (.22)	& 0.644 (.29)	& 0.545 (.29) & 0.421 (.22)	& 0.348 (.22)	& 0.321 (.16) &  0.242 (.15)	& 0.010 (.11)	& 0.042 (.10)  & 0.037 (.17) \\
        median      & 0.079 (.12)	& 0.086 (.12)	& 0.031 (.06) & 0.044 (.08)	& 0.075 (.12)	& 0.109 (.13) &  0.323 (.25)	& 0.060 (.10)	& 0.020 (.09)  & 0.033 (.07)  \\
        Krum        & 0.457 (.37)	& 0.360 (.35)	& 0.061 (.22)  & 0.127 (.27) & 0.424 (.38)	& 0.051 (.08) &  0.147 (.22)	& 0.434 (.36)	&   0.472 (.36) & 0.484 (.35) \\
        multi-Krum	& 0.725 (.25)	& 0.699 (.29)	& 0.061 (.22) & 0.271 (.21)	& 0.712 (.29)	& 0.705 (.30) &  0.584 (.28)	& 0.633 (.30)	&  0.556 (.30) & 0.526 (.28) \\
        clipping	& 0.727 (.28)	& 0.678 (.28)	& 0.604 (.34) & 0.401 (.26)	& 0.726 (.26)	&  0.711 (.26) &  0.645 (.24)	& 0.699 (.29)	&  0.674 (.28) & 0.640 (.28) \\
        k-norm	    & 0.716 (.28)	& 0.691 (.30)	& 0.396 (.36) & 0.005 (.08)	& 0.724 (.26)	& 0.721 (.29) &  0.692 (.35)   & 0.612 (.29)	&  0.599 (.30) & 0.565 (.28) \\
        k-loss      & 0.587 (.21)	& 0.526 (.29)	& 0.419 (.36) & 0.127 (.27)	& 0.555 (.23)	& 0.550 (.26) &  0.093 (.16)   & 0.003 (.08)   &  0.009 (.07) & 0.006 (.05) \\
        finetuning  & \textbf{0.948 (.11)} & 0.942 (.13) & \textbf{0.959 (.10)} & \textbf{0.946 (.10)} & \textbf{0.949 (.16)} & 0.918 (.21) & 0.621 (.11) & 0.788 (.25) & 0.740 (.27) & 0.751 (.26)\\
        \midrule
        \ditto, $\lambda$=0.01 & 0.947 (.15) & \textbf{0.945} (.18)	& 0.955 (.20) & \textbf{0.946 (.13)}	& 0.942 (.18)	& 0.949 (.15) & \textbf{0.944 (.14)}	& {0.902} (.20)	& 0.895 (.23) & 0.888 (.20) \\
        \ditto, $\lambda$=0.1 & \textbf{0.948 (.10)} & \textbf{0.945 (.14)}	& \textbf{0.959 (.12)} & {0.936 (.09)}	& \textbf{0.945 (.13)}	& \textbf{0.948 (.10)} & 0.888 (.18)	& \textbf{0.936 (.16)}	& 0.827 (.23) & 0.812 (.24) \\
        \ditto, $\lambda$=1 & 0.902 (.15)	& 0.899 (.15) & 0.907 (.15)	& 0.861 (.14)	& 0.899 (.18) &	0.818 (.22) & 0.423 (.41)  & 0.880 (.15) &	0.730 (.28) & 0.736 (.28) \\
    \bottomrule[\heavyrulewidth]
	\end{tabular}}
\end{table}

\setlength{\tabcolsep}{2pt}
\begin{table}[h]
	\caption{Full results (average and standard deviation of test accuracy across all devices) on CelebA.
	}
	\vspace{1em}
	\centering
	\label{table:celeba_full}
	\scalebox{0.87}{
	\begin{tabular}{l c|ccc|ccc|ccc} 
	   \toprule[\heavyrulewidth]
        \textbf{CelebA} & & \multicolumn{3}{c|}{{\bf A1} (ratio of adversaries)}  &  \multicolumn{3}{c|}{{\bf A2} (ratio of adversaries)} & \multicolumn{3}{c}{{\bf A3} (ratio of adversaries)}\\
        \cmidrule(r){3-11}
        Methods &  clean & 20\%  & 50\% & 80\%   & 20\%   & 50\% & 80\%   & 10\%  & 15\% & 20\%  \\
        \midrule
        global  & 0.911 (.19)	& 0.810 (.22)	& 0.535 (.26) & 0.228 (.21)	& 0.869 (.22)	& 0.823 (.23) & 0.656 (.26) & 0.451 (.27) & 0.460 (.29) & 0.515 (.31) \\
        local   & 0.692 (.27)	& 0.690 (.27)	& 0.682 (.27) & 0.681 (.26)	& 0.690 (.27)	& 0.682 (.27)  & 0.681 (.26) & 0.692 (.27) & 0.693 (.27) & 0.690 (.27) \\
        fair    & 0.905 (.17)	& 0.724 (.27)	& 0.509 (.27) & 0.195 (.21)	& 0.790 (.26)	& 0.646 (.27) & 0.646 (.27) & 0.442 (.27) & 0.426 (.28) & 0.453 (.28) \\
        median  & 0.910 (.18)	& 0.872 (.22)	&0.494 (.28) & 0.126 (.18)	& {0.901 (.20)}	& 0.864 (.20) & 0.617 (.30) & 0.885 (.20) & \textbf{0.891 (.19)} & \textbf{0.870 (.21)} \\
        Krum    & 0.775 (.25)	& 0.810 (.25)	& 0.641 (.25) & 0.377 (.10)	& 0.790 (.25)	& 0.699 (.25) & 0.584 (.27) & 0.780 (.25) & 0.728 (.25) & 0.685 (.30) \\
    multi-Krum  & 0.911 (.19)	& \textbf{0.882 (.22)}	& 0.564 (.26) & 0.107 (.19)	& 0.887 (.21) & 0.891 (.20)	& 0.617 (.30) & 0.512 (.27) & 0.529 (.27) & 0.430 (.26) \\
       clipping & 0.909 (.18)	& 0.866 (.19)	& 0.485 (.29) & 0.126 (.20)	& 0.897 (.20)	& 0.842 (.21) & 0.665 (.26) & \textbf{0.901 (.20)} & 0.883 (.21) & 0.853 (.23) \\
        k-norm  & 0.908 (.18)	& 0.870 (.22)	& 0.537 (.28) & 0.105 (.17)	& 0.874 (.23)	& \textbf{0.909 (.18)} & 0.664 (.25) & 0.506 (.28) & 0.577 (.27) & 0.449 (.28) \\
        k-loss  & 0.873 (.19)	& 0.584 (.28)	& 0.550 (.31) & 0.169 (.21)	& 0.595 (.28)	& 0.654 (.28) & 0.683 (.26) & 0.543 (.33) & 0.458 (.33) & 0.455 (.34) \\
        FedMGDA+ & 0.909 (.19) & 0.853 (.21) & 0.508 (.28) & 0.473 (.34) & \textbf{0.907 (.19)} & 0.889 (.21) & \textbf{0.782 (.26)} & 0.865 (.23) & 0.805 (.26) & 0.847 (.21) \\
        finetuning  & 0.912 (.18) & 0.814 (.24) & 0.721 (.28) & 0.691 (.29) & 0.850 (.24) & 0.800 (.25) & 0.747 (.24) & 0.665 (.28) & 0.668 (.27) & 0.673 (.28) \\
        \midrule
        \ditto, $\lambda$=0.1 & 0.884 (.24) & 0.716 (.27) & \textbf{0.721 (.27)} & \textbf{0.724 (.28)} & 0.727 (.26) & 0.708 (.28) & 0.706 (.28) & 0.699 (.28) & 0.694 (.27) & 0.689 (.28) \\
        \ditto, $\lambda$=1   & 0.911 (.16) & 0.820 (.26) & 0.714 (.28) & 0.675 (.29) & 0.872 (.22) & 0.826 (.26) & {0.708 (.29)} & 0.629 (.29) & 0.667 (.28) & 0.685 (.28) \\
        \ditto, $\lambda$=2 & \textbf{0.914 (.18)} & 0.828 (.22) & {0.698 (.27)} & 0.654 (.28) & 0.862 (.21)   & 0.791 (.26) & 0.623 (.31)   & 0.585 (.29)   & 0.647 (.27) & 0.655 (.29) \\
    \bottomrule[\heavyrulewidth]
	\end{tabular}}
\end{table}

\setlength{\tabcolsep}{2pt}
\begin{table}[h]
	\caption{Full results (average and standard deviation of test accuracy across all devices) on StackOverflow.
	}
	\vspace{1em}
	\centering
	\label{table:so_full}
	\scalebox{0.87}{
	\begin{tabular}{l c|ccc|ccc|ccc} 
	   \toprule[\heavyrulewidth]
        \textbf{StackOverflow} & & \multicolumn{3}{c|}{{\bf A1} (ratio of adversaries)}  &  \multicolumn{3}{c|}{{\bf A2} (ratio of adversaries)} & \multicolumn{3}{c}{{\bf A3} (ratio of adversaries)}\\
        \cmidrule(r){3-11}
        Methods &  clean & 20\%  & 50\% & 80\%   & 20\%   & 50\% & 80\%   & 10\%  & 15\% & 20\%  \\
        \midrule
        global  & 0.155 (.13)	&	0.153 (.13)& 0.156 (.16) & 0.169 (.18)   & 	0.147 (.12) &  0.009 (.03)&  0.013 (.01) & 0.000 (.00) & 0.000 (.00) & 0.000 (.00) \\
        local   & 0.311 (.15)	& 0.311 (.15)	& 0.313 (.15) & \textbf{0.319 (.15})	& 	0.311 (.15)	& 0.313 (.15) & \textbf{0.319 (.15)}  & 0.311 (.15)	& 0.313 (.15) & 0.319 (.15) \\
        fair    & 0.154 (.13) &  0.155 (.14)&  0.153 (.13)&  0.141 (.10) & 0.000 (.00) & 0.000 (.00) & 0.000 (.00)  & 0.148 (.12) &  0.152 (.13)& 0.167 (.11)\\
        median  & 0.002 (.00)	& 0.001 (.00)	&0.000 (.00) & 0.000 (.00)	& 0.000 (.00)	& 0.001 (.00) &  0.000 (.00)& 0.000 (.00) & 0.000 (.00) & 0.000 (.00) \\
        Krum    & 0.154 (.13)	& 0.150 (.13)	& 0.041 (.04) & 0.002 (.00)	& 0.158 (.13)&  0.151 (.13)&  0.167 (.12)& 0.153 (.13) & 0.154 (.14) & 0.138 (.15) \\
       clipping & 0.154 (.13)	& 0.157 (.13)	& 0.149 (.13) & 0.163 (.17)	&	0.152 (.13)&  0.001 (.01)& 0.001 (.01)& 0.155 (.12) & 0.161 (.14) & 0.120 (.16) \\
        k-norm  & 0.154 (.13) &  0.156 (.12)&  0.100 (.08)&  0.002 (.00)&  0.086 (.11)& 0.042 (.03) &  0.001 (.00)&  0.149 (.15)&  0.144 (.15)& 0.155 (.13)\\
        k-loss  &  0.155 (.13)&  0.160 (.12)&  0.164 (.13)&  0.129 (.14)&  0.136 (.11)&  0.145 (.11)&  0.156 (.14)&  0.148 (.14)&  0.159 (.13)& 0.156 (.13)\\
        FedMGDA+ &0.155 (.12) & 0.154 (.13)& 0.152 (.13)& 0.165 (.13)& 0.147 (.13)& 0.160 (.14)& 0.101 (.09)& 0.155 (.13)& 0.158 (.12)&0.154 (.13)  \\
        \midrule
        \ditto, $\lambda$=0.05 &  \textbf{0.315 (.16)}&  \textbf{0.325 (.16)}&  \textbf{0.315 (.17)}&  0.313 (.15)&  \textbf{0.314 (.16)}&  \textbf{0.350 (.16)}&  0.312 (.14)&  0.316 (.17)&  \textbf{0.321 (.17)}& \textbf{0.327 (.17)}\\
        \ditto, $\lambda$=0.1 &  0.309 (.17)&  0.318 (.17)&  \textbf{0.315 (.17)}&  0.293 (.13)&  0.309 (.17)&  0.316 (.16)&  0.307 (.14)&  \textbf{0.319 (.17)}&  0.302 (.17)& 0.305 (.17)\\
        \ditto, $\lambda$=0.3 & 0.255 (.18) &  0.298 (.18)&  0.288 (.17)&  0.304 (.16)&  0.283 (.17)&  0.233 (.18)&  0.321 (.20) & 0.252 (.17)  &  0.261 (.19) & 0.269 (.17) \\
    \bottomrule[\heavyrulewidth]
	\end{tabular}}
\end{table}

\end{document}